\newcounter{global}
\theoremstyle{definition}
\newtheorem{definition}[global]{Definition}
\theoremstyle{plain}
\newtheorem{theorem}[global]{Theorem}
\newtheorem{lemma}[global]{Lemma}
\newtheorem{corollary}[global]{Corollary}
\newtheoremstyle{note}{}{}{}{}{\itshape}{.}{.5em}{}
\theoremstyle{note}
\newtheorem{remark}{Remark}%
\newtheorem{example}{Example}%
\renewcommand\section{%
  \@startsection {section}{1}{\z@}%
  {-1.5ex \@plus -1ex \@minus -.2ex}%
  {0.3ex \@plus.2ex}%
  {\normalfont\large\bfseries}}
\renewcommand\subsection{%
  \@startsection{subsection}{2}{\z@}%
  {-1.5ex\@plus -1ex \@minus -.2ex}%
  {0.3ex \@plus .2ex}%
  {\normalfont\large\bfseries}}
\def\monoton{isoton} 
\begin{document}

\title{\vspace*{-2.5em}\Large Parameterizing the semantics of fuzzy attribute implications by systems of \monoton e Galois connections}

\date{\normalsize%
  Dept. Computer Science, Palacky University, Olomouc}

\author{\large Vilem Vychodil\footnote{%
e-mail: \texttt{vychodil@binghamton.edu},
phone: +420 585 634 705,
fax: +420 585 411 643}}

\maketitle

\begin{abstract}
  \noindent%
  We study the semantics of fuzzy if-then rules called fuzzy attribute
  implications parameterized by systems of \monoton e Galois connections.
  The rules express dependencies between fuzzy attributes in 
  object-attribute incidence data. The proposed parameterizations
  are general and include as special cases the parameterizations by
  linguistic hedges used in earlier approaches. We formalize the
  general parameterizations, propose bivalent and graded notions of semantic
  entailment of fuzzy attribute implications, show their characterization
  in terms of least models and complete axiomatization, and provide
  characterization of bases of fuzzy attribute implications derived from
  data.

  \medskip\noindent%
  \textbf{Keywords:}
  attribute implication,
  complete axiomatization,
  data dependency,
  formal concept analysis,
  fuzzy logic,
  if-then rule,
  non-redundant base,
  residuated lattice
\end{abstract}

\def\@leftpar{(}
\def\bigmul#1{\ensuremath{\mathop{\mul\bigl(#1\bigr)}}}
\def\bigshf#1{\ensuremath{\mathop{\shf\bigl(#1\bigr)}}}
\def\Mul{\ensuremath{\mathcal{F}}}
\def\Shf{\ensuremath{\mathcal{G}}}
\def\Sys{\ensuremath{\mathcal{O}}}
\def\I{\ensuremath{\Rightarrow}}
\def\nmodels{\ensuremath{\not\models}}
\def\proves{\ensuremath{\vdash}}
\def\nproves{\ensuremath{\nvdash}}
\def\SD{\ensuremath{\mathrm{S}}}
\def\itm#1{{\rm(\textit{\romannumeral#1})}}
\def\up{\ensuremath{{\shortuparrow_I}}}
\def\dn{\ensuremath{{\shortdownarrow_I}}}
\def\rotat{\ensuremath{\circlearrowleft}}

\makeatletter

\def\@leftpar{(}
\def\@rightpar{)}
\def\bigarg{\def\@leftpar{\bigl(}\def\@rightpar{\bigr)}}
\def\mul{\@ifnextchar[{\@withmul}{\@withoutmul}}
\def\@withmul[#1]{\ensuremath{\boldsymbol{f}_{\!#1}\@ifnextchar\bgroup{\@witharg}{\relax}}}
\def\@withoutmul{\ensuremath{\boldsymbol{f}\@ifnextchar\bgroup{\@witharg}{\relax}}}
\def\shf{\@ifnextchar[{\@withshf}{\@withoutshf}}
\def\@withshf[#1]{\ensuremath{\boldsymbol{g}_{#1}\@ifnextchar\bgroup{\@witharg}{\relax}}}
\def\@withoutshf{\ensuremath{\boldsymbol{g}\@ifnextchar\bgroup{\@witharg}{\relax}}}
\def\one{\@ifnextchar[{\@withone}{\@withoutone}}
\def\@withone[#1]{\ensuremath{\boldsymbol{1}_{#1}\@ifnextchar\bgroup{\@witharg}{\relax}}}
\def\@withoutone{\ensuremath{\boldsymbol{1}\@ifnextchar\bgroup{\@witharg}{\relax}}}
\def\C{\@ifnextchar[{\@withC}{\@withoutC}}
\def\@withC[#1]{\ensuremath{\boldsymbol{c}_{#1}\@ifnextchar\bgroup{\@witharg}{\relax}}}
\def\@withoutC{\ensuremath{\boldsymbol{c}\@ifnextchar\bgroup{\@witharg}{\relax}}}
\def\@witharg#1{\@leftpar #1\@rightpar}

\renewenvironment{thebibliography}[1]
{\section*{\refname}%
  \@mkboth{\MakeUppercase\refname}{\MakeUppercase\refname}%
  \list{\@biblabel{\@arabic\c@enumiv}}%
  {\settowidth\labelwidth{\@biblabel{#1}}%
    \parsep=-2pt
    \leftmargin\labelwidth
    \advance\leftmargin\labelsep
    \@openbib@code
    \usecounter{enumiv}%
    \let\p@enumiv\@empty
    \renewcommand\theenumiv{\@arabic\c@enumiv}}%
  \sloppy
  \clubpenalty4000
  \@clubpenalty \clubpenalty
  \widowpenalty4000%
  \sfcode`\.\@m}
{\def\@noitemerr
  {\@latex@warning{Empty `thebibliography' environment}}%
  \endlist}

\abovedisplayshortskip=5pt
\abovedisplayskip=5pt
\belowdisplayshortskip=5pt
\belowdisplayskip=5pt

\makeatother

\section{Introduction}\label{sec:intro}
Fuzzy if-then rules play a central role in many diverse applications of fuzzy
logic ranging from fuzzy controllers~\cite{Haj:MFL,KlYu} to
data analysis~\cite{BeVy:ADfDwG,DuHuPR:Asaafar} and their applications.
In fuzzy controllers, graded if-then rules which involve linguistic variables
constitute the core of the underlying approximate inference systems.
It is often the case that knowledge bases consisting of if-then rules used
for the approximate inference are prescribed by experts based on their knowledge
of a particular problem domain where the need for automated control arises.
The success of fuzzy controllers is often attributed to the easiness
for non-technically oriented experts to formulate control rules as simple
fuzzy if-then rules~\cite{Za:Itnfl}. From the viewpoint of data analysis,
various types of if-then rules are used to describe dependencies between attribute
values in data collections and, in contrast to their role in controllers, the rules
are often inferred from object-attribute incidence data by specialized algorithms.
We are interested primarily in the data-analytical role of the rules.

In our paper, we consider rules which are syntactically similar to rules which
have been studied earlier in formal concept analysis~\cite{GaWi:FCA}
of data with graded attributes~\cite{BeVy:ICFCA}
as \emph{fuzzy attribute implications}
and similarity-based relational database systems~\cite{BeVy:Codd} as
\emph{similarity-based functional dependencies.}
The principal difference compared to the earlier
approaches is how we \emph{interpret} the rules.
Namely, the present paper shows a general
approach to define semantics of such rules which encompasses the earlier
approaches and, in addition,
allows to consider new types of semantics which have not been
captured by any previous approaches. Even if our approach attains a high level
of generality, we show that most relevant laws regarding the if-then rules
are preserved in the general setting. The rules we consider can be described
as formalizations of data dependencies saying
\begin{align}
  \text{``if } A \text{ (is contained in $M$)},
  \text{ then } B \text{ (is contained in $M$),''}
  \label{eqn:rule}
\end{align}
where $A,B$ and $M$ are fuzzy sets in a universe $Y$. The elements of $Y$ are
called \emph{attributes} and we consider them as symbolic names. The fuzzy sets
$A$ and $B$ in~\eqref{eqn:rule} are called the \emph{antecedent} and
\emph{consequent} of the rule, respectively.
For given $A$ and $B$, the rule is abbreviated by $A \I B$ and can be
seen as a formula in the narrow sense. Strictly speaking, the fuzzy set $M$
in~\eqref{eqn:rule} is not a part of the formula---it represents a semantic
component using which we evaluate the formula $A \I B$.

Obviously, by stating that the formulas under consideration (and their informal
interpretation) can be understood as expressions~\eqref{eqn:rule} we do not define
their formal semantics. As a matter of fact, formal semantics of \eqref{eqn:rule}
may be introduced in many ways and depend on factors like the choice of
the interpretation of the graded material implication (the if-then connective)
and the notion of containment. Also, we may introduce
a \emph{bivalent notion of satisfaction} of the rule (given $M$), i.e.,
the rule either is satisfied or not satisfied (given $M$),
or a \emph{graded notion of satisfaction} expressed by
degrees to which the rule is satisfied (given $M$). Moreover,
the satisfaction of~\eqref{eqn:rule} may be defined in terms of
a \emph{partiality of truth} or a \emph{partiality of confidence}---these
two entirely different concepts should not be mixed or confused
(cf. ``the frequentist's temptation'' in \cite{Haj:MFL,HaPa:Adfl}).
All these factors, and possibly more, may be viewed as
\emph{parameterizations of semantics} of rules like~\eqref{eqn:rule}.

In our paper, we deal with if-then rules like~\eqref{eqn:rule} with general
semantics admitting a graded notion of satisfaction and partiality of truth
(i.e., we consider purely truth-functional approach, cf. also the related
notion of a veristic constraint in~\cite{Za:Ttfig}).
We assume that the degrees of truth come from complete residuated
lattices~\cite{GaJiKoOn:RL,OrRa:Drlta,DiWa} which include widely-used structures,
including linear residuated lattices defined on the real unit-interval by
left-continuous triangular norms or their finite
substructures~\cite{DeMe:TNPL,KMP:TN}. From the perspective of general 
parameterizations, the choice of a complete residuated lattice represents
a choice of one particular parameter of the semantics of~\eqref{eqn:rule}.
Namely, a chosen complete residuated lattice determines the set of degrees
used to express partial truth and truth functions of logical connectives,
most notably the truth function of ``fuzzy implication'' which serves as the
interpretation of the ``if\,\ldots\ then\,\ldots'' connective in~\eqref{eqn:rule}
and, together with general infima, defines the notion of
a \emph{graded containment}~\cite{Gog:Lic}. The first systematic study of the role
of fuzzy if-then rules in the analysis of fuzzy object-attribute data with this
particular type of parameterization goes back to~\cite{Po:FB}.
Later, the approach was extended and
substantially developed in~\cite{BeVy:ICFCA,BeVy:ADfDwG} by
considering \emph{linguistic hedges} \cite{Za:Afstilh} as
additional parameters of the semantics.

The introduction of linguistic hedges as additional parameters brought
several benefits. The hedges allow to put additional emphasis on the
antecedent of~\eqref{eqn:rule}. In practice, this means that by a choice of
a hedge, we can consider rules with different types of containment, e.g.,
we may require that ``$A$ is \emph{almost fully} contained in $M$''.
The approach via hedges allows us to handle the cases of graded
and crisp containment by a single theory. This
aspect is important since some desirable properties of the rules
(like the uniqueness of bases given by pseudo-intents~\cite{BeVy:ADfDwG})
hold only if one considers a particular hedge. As a result, the approach via
hedges simplifies the analysis of properties of the rules and brings
a broader perspective. Let us note that parameterizations by hedges are not
limited only to if-then rules. In \cite{BeVy:Rsclh} which was later
extended in~\cite{BeVy:Fcalh}, we have shown that various approaches
to constrained fuzzy concept lattices \cite{BeSkZa:Cgfc,Krs:Cbefc,YaJa:Dkfcl}
can be seen as approaches to reducing the size of concept lattices by hedges.

In this paper, we focus on parameterizations which are considerably more
general than the linguistic hedges used so far. We show that reasonable
parameterizations may be introduced by considering systems of \monoton e
Galois connections on fuzzy sets. We prove that most properties which are
known for the if-then rules parameterized by hedges~\cite{BeVy:ADfDwG}
are preserved in the general setting. In addition, we show several
non-trivial parameterizations which can be described by systems of \monoton e
Galois connections and cannot be expressed by hedges. The generality of
our approach brings more versatility into the applications of the if-then
rules in data analysis. Indeed, data analysis is inherently an
\emph{interactive process} where experts tune parameters of algorithms in order
to infer information from data in a desirable form. More often than not,
a reasonable output is derived only after several iterations during which
different parameters are used. The present paper offers a formalism which
allows experts to specify parameterizations of if-then rules from a rich family
of parameterizations which supports this interactive process.
For instance, in case of the inference of if-then rules from data,
by a choice of different parameterizations, one influences the number and
meaning of the rules which are extracted from data.

The following main results are shown in our paper: We formalize the
parameterizations by systems of \monoton e Galois connections and show
that if-then rules with this type of parameterization have two notions
of entailment: a \emph{semantic entailment} based on validation in models
and a \emph{syntactic entailment} based on provability using a system of
inference rules. We prove a completeness theorem showing that both types
of entailment coincide. Moreover, we introduce the entailments as crisp
notions as well as graded notions and prove that degrees of entailment
are expressible by the crisp entailment. In addition, we characterize
the degrees of entailment as degrees of containment in least models.
By all these observations we demonstrate that there is a reasonable
logic behind the if-then rules with the considered general parameterizations.
Further results are directly connected to issues of describing dependencies
in given data. We introduce \emph{bases} of if-then rules which represent
non-redundant sets of if-then rules which convey the information about
all if-then rules valid in given data and provide a characterization
of bases using operators on fuzzy sets induced by data. Let us note
that the results we obtain are interesting not only for the graded rules
but also for the classic if-then rules which can be seen as a particular
case of the graded ones when the structure of degrees is the two-element
Boolean algebra. In this setting, the entailment of the graded rules is
equivalent to the entailment of attribute implications~\cite{GaWi:FCA}
and functional dependencies~\cite{Mai:TRD}. Even in this borderline case,
the parameterizations by systems of \monoton e Galois connections brings new
types of semantics of the (classic) if-then rules. This is in contrast with
the earlier approaches by hedges which yield no non-trivial
parameterization in the crisp setting.

Our paper is organized as follows. In Section~\ref{sec:prel}, we recall
preliminary notions from structures of degrees and outline the existing
approaches to parameterizations of if-then rules by hedges.
In Section~\ref{sec:param}, we introduce the general parameterizations
and provide characterization of semantic entailment of the rules in
terms of least models. In Section~\ref{sec:ddd}, we present a description
of non-redundant sets of if-then rules inferred from data. In addition,
in Section~\ref{sec:compl}, we discuss the axiomatization of the semantic
entailment and the relationship of graded vs. crisp notions of entailment.
Finally, Section~\ref{sec:ilex} shows illustrative examples of
parameterizations and their influence on the rules inferred from data.

\section{Preliminaries}\label{sec:prel}
This section contains preliminaries from general structures of truth degrees
we use in our approach, fuzzy attribute implications parameterized by
linguistic hedges which are the starting point of our generalized view
of semantic parameterizations, and closure structures which play a key role
in the semantic parameterizations.

\subsection{Structures of Truth Degrees}
We use general structures of truth degrees which include the most widely-used
structures of degrees in fuzzy logics based on left-continuous triangular
norms~\cite{KMP:TN}. Instead of focusing solely on structures defined
on the real unit interval, we consider general complete lattices, optionally
equipped with additional operations, as the basic structures.
Recall that an ordered set $\mathbf{L} = \langle L,\leq\rangle$
is called a \emph{complete lattice} whenever $\leq$ is a partial order such
that any $K \subseteq L$ has its greatest lower bound
(an \emph{infimum}) in $\mathbf{L}$
denoted $\bigwedge K$ and its least upper bound (a \emph{supremum})
in $\mathbf{L}$
denoted $\bigvee K$. The \emph{least} and the \emph{greatest} elements
in $\mathbf{L}$, which always exist in a complete lattice,
are denoted by $0$ and $1$,
respectively. Alternatively, complete lattices are considered as algebras
$\mathbf{L} = \langle L,\wedge,\vee,0,1\rangle$ where $a \leq b$ if{}f
$a \wedge b = a$ (or, equivalently, $a \vee b = b$), see~\cite{Bir:LT}.

In the paper, we consider additional operations on complete
lattices---multiplications which generalize left-continuous triangular norms
and serve as (truth functions of) fuzzy conjunctions and their residua
which serve as (truth functions of) fuzzy implications.
Let $\mathbf{L} = \langle L,\wedge,\vee,0,1\rangle$ be a complete lattice.
A binary operation $\otimes$ in $\mathbf{L}$ is called
a \emph{multiplication} if it is commutative, associative,
$1$ (the greatest element in $\mathbf{L}$) is its neutral
element (i.e., $a \otimes 1 = a$), and it is distributive
over general suprema~\cite{Gog:LFS}, i.e.,
\begin{align}
  \textstyle\bigvee_{\!i \in I}(a \otimes b_i) &=
  a \otimes \textstyle\bigvee_{\!i \in I}b_i
  \label{eqn:obigvee}
\end{align}
holds for all $a \in L$ and $b_i \in L$ ($i \in I$). For such $\otimes$,
we can consider a binary operation $\rightarrow$
(a \emph{residuum}~\cite{Gog:Lic,KMP:TN,DiWa})
which is given by
\begin{align}
  a \rightarrow b &= \textstyle\bigvee\{c \in L;\, a \otimes c \leq b\}.
\end{align}
It is well known that $\otimes$ and $\rightarrow$ satisfy the following
condition called the \emph{adjointness property}:
\begin{align}
  a \otimes b \leq c \quad \text{if{}f} \quad a \leq b \rightarrow c
  \label{eqn:adj}
\end{align}
for all $a,b,c \in L$. In fact, it can be shown that
under the assumption of $\otimes$ being commutative, associative, and
neutral with respect to $1$, postulating~\eqref{eqn:obigvee} is
equivalent to requiring the existence of $\rightarrow$ satisfying
\eqref{eqn:adj}, cf.~\cite{Bel:FRS,GaJiKoOn:RL}. Altogether,
$\mathbf{L} = \langle L,\wedge,\vee,\otimes,\rightarrow,0,1\rangle$
is called a \emph{complete residuated lattice.} Let us note that
residuated lattices were proposed in~\cite{Di38,DiWa} and their
importance to fuzzy logic and the theory of fuzzy sets was
discovered by J.\,A.~Goguen~\cite{Gog:LFS,Gog:Lic}. Subclasses
of residuated lattices are widely used in applications and constitute
a basis for investigation of mathematical fuzzy
logics~\cite{EsGo:MTL,Ger:FL,Got:Mfl,Haj:MFL}, see also
monographs~\cite{CiHaNo1,CiHaNo2} devoted to recent results.

The class of complete residuated lattices is rich and it includes infinite
as well as finite structures. Frequently used infinite structures include
linearly ordered complete residuated lattices defined on the real unit
interval with $\wedge$ and $\vee$ being minimum and maximum,
$\otimes$ being a left-continuous (or a continuous) triangular norm
with the corresponding~$\rightarrow$, see~\cite{KMP:TN}. Three most
important continuous triangular norms and their residua are
the \L ukasiewicz, G\"odel (or minimum), and Goguen (or product)
adjoint operations:
\begin{align}
  a \otimes b &= \max(a + b - 1, 0), 
  &
  a \otimes b &= \min(a,b),
  &
  a \otimes b &= a \cdot b,
  \label{eqn:LukGodGog_mul}
  \\
  a \rightarrow b &= \min(1 - a + b, 1),
  &
  a \rightarrow b &=
  \left\{
    \begin{array}{@{\,}l@{\quad}l@{}}
      1, & \text{if } a \leq b, \\
      b, & \text{otherwise,}
    \end{array}
  \right.
  &
  a \rightarrow b &=
  \left\{
    \begin{array}{@{\,}l@{\quad}l@{}}
      1, & \text{if } a \leq b, \\
      \frac{b}{a}, & \text{otherwise.}
    \end{array}
  \right.
  \label{eqn:LukGodGog_res}
\end{align}

In the paper we utilize addtional operations on $L$ called idempotent
truth-stressing linguistic hedges~\cite{Za:Afstilh,Za:lv1,Za:lv2,Za:lv3}
which have been used to parameterize the semantics of fuzzy attribute
implications in earlier approaches~\cite{BeVy:Fcalh,BeVy:ADfDwG}.
An idempotent truth-stressing linguistic hedge (shortly, a hedge)
on $\mathbf{L}$ is a map ${}^*\!: L \to L$ such that 
\begin{align}
  1^{\ast} &= 1,
  \label{ts:1}
  \\
  a^{\ast} &\leq a,
  \label{ts:sub}
  \\
  (a \rightarrow b)^{\ast} &\leq a^{\ast} \rightarrow b^{\ast},
  \label{ts:mon}
  \\
  a^{\ast\ast} &= a^{\ast}
  \label{ts:idm}
\end{align}
for all $a,b \in L$. Operations on $L$ satisfying~\eqref{ts:1}--\eqref{ts:idm}
may be seen as truth functions of logical connectives ``very true''.
Technically, the hedges we consider are generalizations of Baaz's $\Delta$
operation~\cite{Baaz,Haj:MFL} and they have been studied in fuzzy logics
in the narrow sense~\cite{Got:Mfl} by H\'ajek in~\cite{Haj:Ovt},
see also~\cite{EsGoNo:Hedges} for a recent general approach. Every $\mathbf{L}$
admits two borderline hedges: (i) the identity (i.e., $a^* = a$ for
any $a \in L$), and (ii) the so-called \emph{globalization}~\cite{TaTi:Gist}:
\begin{align}
  a^{\ast} = \left\{
    \begin{array}{@{\,}l@{\quad}l@{}}
      1, & \text{if } a = 1, \\
      0, & \text{otherwise.}
    \end{array}
  \right.
  \label{eqn:glob}
\end{align}

Using complete residuated lattices as the structures of truth degrees,
we consider the usual notions of $\mathbf{L}$-sets (fuzzy sets) and
$\mathbf{L}$-relations (fuzzy relations), see~\cite{Bel:FRS,Gog:LFS,KlYu}.
That is, for a non-empty set $Y$ (call it a universe), we may consider
a map $A\!: Y \to L$, assigning to each $y \in Y$ a degree $A(y) \in L$.
Such a map is called an \emph{$\mathbf{L}$-set} in $Y$. The system of
all $\mathbf{L}$-sets in $Y$ is denoted by~$L^Y$.
For $Y$ and $c \in L$, we consider a \emph{constant $\mathbf{L}$-set}
$c_Y \in L^Y$ defined by
\begin{align}
  c_Y(y) = c
  \label{eqn:cY}
\end{align}
for all $y \in Y$. In particular, $0_Y$ satisfies $0_Y(y) = 0$ ($y \in Y$)
and is called the \emph{empty $\mathbf{L}$-set} in $Y$.

If $Y$ is a Cartesian product of sets (say $Y_1,\dots,Y_n$),
we may call an $\mathbf{L}$-set $A$ in $Y$
an $\mathbf{L}$-relation (between $Y_1,\dots,Y_n$).
In particular, a binary $\mathbf{L}$-relation $R$
between $X$ and $Y$ is a map $R\!: X \times Y \to L$ assigning to
each $x \in X$ and each $y \in Y$ the degree $R(x,y) \in L$ to which
$x$ and $y$ are related by $R$.
If $Y = \{y_1,\dots,y_n\}$, we use the usual convention for writing
$\mathbf{L}$-sets $A \in L^Y$ as
$\{{}^{a_1\!}/y_1,\dots,{}^{a_n\!}/y_n\}$ meaning
that $A(y_i) = a_i$ for all $i=1,\dots,n$. In addition,
we omit ${}^{a_i\!}/y_i$ if $a_i = 0$ and write $y_i$ if $a_i = 1$.

For $\mathbf{L}$-sets (and $\mathbf{L}$-relations), we may consider two
basic subsethood relations. First, for $A,B \in L^Y$, we write
\begin{align}
  A \subseteq B
  \text{ whenever }
  A(y) \leq B(y) \text{ holds for each } y \in Y
  \label{eqn:crisp_sub}
\end{align}
and say that $A$ is (\emph{fully}) \emph{contained} in $B$.
Second, for $A,B \in Y$, we put
\begin{align}
  \SD(A,B) &= \textstyle\bigwedge_{y \in Y}\bigl(A(y) \rightarrow B(y)\bigr)
  \label{eqn:SD}
\end{align}
and call $\SD(A,B)$ the \emph{subsethood degree} of $A$ in $B$, i.e.,
$\SD(A,B) \in L$ is a degree to which $A$ is included in
$B$~\cite{Gog:Lic}.
We have $\SD(A,B) = 1$ if{}f $A \subseteq B$, see~\cite[Theorem 3.12]{Bel:FRS}.
We further utilize operations with $\mathbf{L}$-sets
(and $\mathbf{L}$-relations) defined componentwise using operations
in $\mathbf{L}$. That is, for $A_i \in L^Y$ ($i \in I$), we put
\begin{align}
  \textstyle\bigl(\bigcap_{i \in I}A_i\bigr)(y) &=
  \textstyle \bigwedge_{i \in I}A_i(y),
  \label{eqn:cap}
  \\
  \textstyle\bigl(\bigcup_{i \in I}A_i\bigr)(y) &=
  \textstyle \bigvee_{\!i \in I}A_i(y),
  \label{eqn:cup}
\end{align}
for all $y \in Y$ and call $\bigcap_{i \in I}A_i$ and $\bigcap_{i \in I}A_i$
the (idempotent) \emph{intersection} and \emph{union} of $A_i$'s,
respectively. For $A,B \in L^Y$, we use the usual infix notation 
$A \cap B$ and $A \cup B$. Analogously, we may define operations componentwise
based on $\otimes$ and $\rightarrow$ in $\mathbf{L}$ as follows:
\begin{align}
  (A \otimes B)(y) &= A(y) \otimes B(y),
  \label{eqn:otimes}
  \\
  (A \rightarrow B)(y) &= A(y) \rightarrow B(y).
  \label{eqn:rightarrow}
\end{align}
Note that $\otimes$ and $\rightarrow$ on the left-hand sides
of~\eqref{eqn:otimes} and \eqref{eqn:rightarrow} denote operations with
$\mathbf{L}$-sets while the symbols of the right-hand sides of the equalities
denote the operations in $\mathbf{L}$. As a particular
cases of~\eqref{eqn:otimes} and~\eqref{eqn:rightarrow} which utilize constant
$\mathbf{L}$-sets, for every $c \in L$ and $A \in L^Y$, we introduce
\begin{align}
  c \otimes A &= c_Y \otimes A,
  \label{eqn:a-mult}
  \\
  c \rightarrow A &= c_Y \rightarrow A,
  \label{eqn:a-shift}
\end{align}
and call $c \otimes A$ and $c \rightarrow A$ the $c$-multiple and the $c$-shift of $A$, respectively. Recall that $c_Y$ which is used here is
defined by~\eqref{eqn:cY}.

\begin{remark}
  Our system does not have a negation as a fundamental operation.
  This is in contrast with approaches which use $1-a$ as the (truth function of)
  negation of $a \in [0,1]$, see~\cite{Za:FS}.
  In our case, $1-a$ does not make sense because
  we work with general structures of truth degrees which may not be defined
  on the real unit interval. More importantly, the negation is not essential
  for the presented results. Nevertheless, reasonable truth functions of
  negations may be defined using $\rightarrow$ in $\mathbf{L}$ as
  $a \rightarrow 0$ for all $a \in L$,
  see~\cite{EsGo:MTL,EsGoHaNa:Rflin,Haj:MFL,Ho:ML} for details.
\end{remark}

In the rest of the paper, $\mathbf{L}$ always denotes a complete (residuated)
lattice. Moreover, we use the fact that for a universe set $Y$, the collection
$L^Y$ of all $\mathbf{L}$-sets in $Y$ together with the full containment
relation $\subseteq$ defined by~\eqref{eqn:crisp_sub} is also a complete
lattice and we denote it by $\langle L,\subseteq\rangle$. In addition,
$\langle L,\subseteq\rangle$ equipped with $\otimes$ and $\rightarrow$
defined componentwise using the operations in $\mathbf{L}$ as
in~\eqref{eqn:otimes} and~\eqref{eqn:rightarrow}
is a complete residuated lattice, cf.~\cite[Theorem 3.6]{Bel:FRS}.

\subsection{Fuzzy Attribute Implications}\label{sec:pfai}
Let $\mathbf{L}$ be a complete residuated lattice and $Y$ be a non-empty
set of symbols called \emph{attributes.} A \emph{fuzzy attribute implication}
(or a \emph{graded} attribute implication, shortly a FAI) in $Y$ is an
expression $A \I B$, where $A,B \in L^Y$. The intended meaning of $A \I B$
is to express data dependency saying that if each attribute $y \in Y$
is present at least to degree $A(y)$, then each $y \in Y$ is present at least
to degree $B(y)$.
Considering $M \in L^Y$ as an $\mathbf{L}$-set representing the presence of
attributes (i.e., $M(y)$ is a degree to which $y \in Y$ is present),
we define the degree $||A \I B||^*_M$ to which $A \I B$ is true
in $M \in L^Y$ by
\begin{align}
  ||A \I B||^*_M &= \SD(A,M)^* \rightarrow \SD(B,M),
  \label{eqn:hedge_fai}
\end{align}
where $\SD$ is the graded subsethood~\eqref{eqn:SD}, $\rightarrow$ is the
residuum (a fuzzy implication) in $\mathbf{L}$, and ${}^*$ is
a truth-stressing hedge which serves as an additional parameter of
the interpretation of $A \I B$.

\begin{remark}\label{rem:semexpl}
  The first approach to fuzzy attribute implications and investigation of
  their role in formal concept analysis of graded incidence data
  goes back to Pollandt~\cite{Po:FB}. The parameterization of FAIs by
  hedges~\eqref{eqn:hedge_fai} was proposed later, see~\cite{BeVy:ADfDwG}
  for a survey of results. Using hedges, we encapsulate different possible
  interpretations of FAIs and can approach them by a single theory.
  The following cases which result by two borderline choices of hedges
  are especially important:
  \begin{enumerate}\parskip=0pt%
  \item
    When ${}^*$ is globalization~\eqref{eqn:glob},
    then $||A \I B||^*_M = 1$ means that
    $A \subseteq M$ implies $B \subseteq M$, where $\subseteq$ is the full
    containment of $\mathbf{L}$-sets defined by \eqref{eqn:crisp_sub}.
    Therefore, $A,B \in L^Y$ can be seen as prescribing threshold for each
    attribute $y \in Y$. If $A \nsubseteq M$, i.e.,
    the attributes (in $M$) are not
    present to the prescribed threshold degrees, we get $\SD(A,M) < 1$ and so
    $\SD(A,M)^* = 0$, meaning that $||A \I B||^*_M = 1$.
    If $A \subseteq M$, then $||A \I B||^*_M = \SD(B,M)$,
    i.e., $A \I B$ is true to the degree to which $B$ is contained in $M$.
  \item
    When ${}^*$ is identity, then $||A \I B||^*_M = 1$ means that
    $\SD(A,M) \rightarrow \SD(B,M) = 1$ which is true if{}f
    $\SD(A,M) \leq \SD(B,M)$. Put in words, $||A \I B||^*_M = 1$ if{}f
    the degree to which the attributes in $B$ are present (in $M$) is
    at least as high as the degree to which the attributes in $A$
    are present (in $M$).
  \end{enumerate}
  Therefore, setting ${}^*$ to globalization and identity represents two
  possible (and both reasonable) interpretations of FAIs. The cases of other
  hedges can be seen as transitions between these two borderline semantics.
  More detailed explanation can be found in~\cite[Remark 3.3]{BeVy:ADfDwG}.
\end{remark}

Let $\Sigma$ be a set of FAIs in $Y$. An $\mathbf{L}$-set $M \in L^Y$ is
called a \emph{model} of $\Sigma$ if $||A \I B||^*_M = 1$ for all
$A \I B \in \Sigma$. Let $\mathrm{Mod}^*(\Sigma)$ denote the set of
all models of $\Sigma$. That is,
\begin{align}
  \mathrm{Mod}^*(\Sigma) &=
  \bigl\{M \in L^Y\!;\,
  ||A \I B||^*_M = 1 \text{ for all } A \I B \in \Sigma
  \bigr\}.
  \label{eqn:Mod*}
\end{align}
Let $A \I B$ be a FAI in $Y$. The
\emph{degree $||A \I B||^*_\Sigma$ to which $A \I B$
  is semantically entailed by $\Sigma$} is defined by
\begin{align}
  ||A \I B||^*_\Sigma &=
  \textstyle\bigwedge_{M \in \mathrm{Mod}^*(\Sigma)}||A \I B||^*_M.
  \label{eqn:sement*}
\end{align}
Put in words, $||A \I B||^*_\Sigma$ is a degree to which $A \I B$
is true in all models of~$\Sigma$, i.e., it is the greatest lower
bound of truth degrees to which $A \I B$ is true in all models.

The semantic entailment has a complete axiomatization~\cite{BeVy:ADfDwG}
which is based on a system of axioms and two inference rules and
resembles the classic Armstrong axiomatic system~\cite{Arm:Dsdbr}
for functional dependencies. Namely, each FAI of the form $A {\cup} B \I A$
($A,B \in L^Y$) is an \emph{axiom}. In addition,
we introduce an inference rule
\begin{align}
  \dfrac{A \I B,\,B \cup C \I D}{A \cup C \I D}
  \quad \text{for all } A,B,C,D \in L^Y,
  \label{r:Cut}
\end{align}
which is called a \emph{cut} (or \emph{pseudo-transitivity},
see~\cite{Hol,Mai:TRD}) and an inference rule
\begin{align}
  \dfrac{A \I B}{c^*{\otimes}A \I c^*{\otimes}B}
  \quad \text{for all } A,B \in L^Y \text{ and } c \in L,
  \label{r:Mul}
\end{align}
which is called a \emph{$c^*$-multiplication.} As it is usual,
the inference rules should be read 
``from $A \I B$ and $B{\cup}C \I D$ infer $A{\cup}C \I D$''
in case of~\eqref{r:Cut} and
``from $A \I B$ infer $c^*{\otimes}A \I c^*{\otimes}B$''
in case of~\eqref{r:Mul}. Note that in \eqref{r:Mul},
$c^*{\otimes}A$ and $c^*{\otimes}B$ are defined by~\eqref{eqn:a-mult}.
A notion of provability by $\Sigma$ is defined
the usual way based on the existence of a finite sequence of formulas
which are either assumptions from $\Sigma$, axioms, or are inferred
from some preceding formulas in the sequence by~\eqref{r:Cut}
or~\eqref{r:Mul}.
Let us stress at this point that analogously as in the case of semantic
entailment, the hedge ${}^*$ serves as a parameter of the provability
because it appears explicitly in~\eqref{r:Mul}. As a consequence,
considering different hedges changes the inference system.

\subsection{Galois Connections and Closure Structures}
Let $\mathbf{L}$ be a complete (residuated) lattice and let
$\langle L^Y, \subseteq\rangle$ be the complete lattice of $\mathbf{L}$-sets
in $Y$. A~pair $\langle \mul,\shf\rangle$ of operators $\mul\!: L^Y \to L^Y$
and $\shf\!: L^Y \to L^Y$ is called
an \emph{\monoton e Galois connection}~\cite{DaPr}
in $\langle L^Y,\subseteq\rangle$ whenever
\begin{align}
  \mul{A} \subseteq B \text{ if{}f } A \subseteq \shf{B}
  \label{eqn:gal}
\end{align}
for all $A,B \in L^Y$; $\mul$ is called the \emph{lower adjoint} of~$\shf$ and,
dually, $\shf$ is called the \emph{upper adjoint} of~$\mul$.
In an \monoton e Galois connection $\langle \mul,\shf\rangle$,
$\mul$ uniquely determines $\shf$ and \emph{vice versa}. In particular,
\begin{align}
  \mul{A} &= \textstyle\bigcap\{B \in L^Y\!;\, A \subseteq \shf{B}\},
  \\
  \shf{B} &= \textstyle\bigcup\{A \in L^Y\!;\, \mul{A} \subseteq B\}.
\end{align}
In our paper, we utilize the following properties which follow
by~\eqref{eqn:gal}. For any $A,B \in L^Y$, $A_i \in L^Y$ ($i \in I$),
and $B_i \in L^Y$ ($i \in I$), the following properties hold:
\begin{align}
  &A \subseteq \shf{\mul{A}},
  \label{eqn:shfmul} \\
  &\mul{\shf{B}} \subseteq B
  \label{eqn:mulshf} \\
  &A \subseteq B \text{ implies } \mul{A} \subseteq \mul{B},
  \label{eqn:mon_mul} \\
  &A \subseteq B \text{ implies } \shf{A} \subseteq \shf{B},
  \label{eqn:mon_shf} \\
  &\bigarg \mul{\textstyle\bigcup_{i \in I}A_i} =
  \textstyle\bigcup_{i \in I}\mul{A_i},
  \label{eqn:mul_distr} \\
  &\bigarg \shf{\textstyle\bigcap_{i \in I}B_i} =
  \textstyle\bigcap_{i \in I}\shf{B_i}.
\end{align}
We also utilize \emph{composition} of \monoton e Galois connections.
That is, for \monoton e Galois connections 
$\langle\mul[1],\shf[1]\rangle$ and $\langle\mul[2],\shf[2]\rangle$
in $\langle L^Y,\subseteq\rangle$, we put
\begin{align}
  \langle \mul[1],\shf[1]\rangle \circ \langle\mul[2],\shf[2]\rangle
  = \langle \mul[1]\mul[2],\shf[2]\shf[1]\rangle,
  \label{eqn:compose}
\end{align}
where $\mul[1]\mul[2]$ is a composed operator such that 
$\mul[1]\mul[2](A) = \mul[1](\mul[2](A))$ for all $A \in L^Y$
and analogously for $\shf[2]\shf[1]$. It is easy to see that the
composition is again an \monoton e Galois connection in
$\langle L^Y,\subseteq\rangle$. Furthermore, we denote by $\one$
the operator in $L^Y$ such that $\one{A} = A$ for any $A \in L^Y$.
Then, $\langle \one,\one\rangle$ is trivially an \monoton e Galois
connection in $\langle L^Y,\subseteq\rangle$. All \monoton e Galois
connections in $\langle L^Y,\subseteq\rangle$ together with $\circ$
defined by~\eqref{eqn:compose} and $\langle\one,\one\rangle$ form
a monoid (i.e., an algebra with associative binary operation $\circ$
with a neutral element $\langle\one,\one\rangle$).

An operator $\C\!: L^Y \to L^Y$ is called an
\emph{$\mathbf{L}^*$-closure operator}~\cite{BeFuVy:Fcots}
in $\langle L^Y, \subseteq\rangle$ whenever
\begin{align}
  A &\subseteq \C{A},
  \label{eqn:*ext} \\
  \SD(A,B)^* &\leq \SD(\C{A},\C{B}),
  \label{eqn:*mon} \\
  \C{\C{A}} &\subseteq \C{A},
  \label{eqn:*idm}
\end{align}
for all $A,B \in L^Y$ (recall that ${}^*$ is a hedge on $\mathbf{L}$).
A system $\mathcal{S} \subseteq L^Y$ is called
an \emph{$\mathbf{L}^*$-closure system} in $\langle L^Y, \subseteq\rangle$
whenever it is closed under arbitrary intersections and
\begin{align}
  M \in \mathcal{S}
  \text{ implies } a^* \rightarrow M \in \mathcal{S},
\end{align}
for each $M \in \mathcal{S}$ and $a \in L$. There is a one-to-one
correspondence between $\mathbf{L}^*$-closure systems and
operators~\cite{BeFuVy:Fcots}. The structures play an important role
in analysis on object-attribute data with fuzzy attributes,
see~\cite{BeVy:Fcalh}.
In our case, it is important to note that $\mathrm{Mod}^*(\Sigma)$
(for any $\Sigma$) is an $\mathbf{L}^*$-closure system and the corresponding
$\mathbf{L}^*$-closure operator maps any $M \in L^Y$ to the least model
of $\Sigma$ containing $M$. In fact, in terms of their expressive power,
the systems of models of FAIs parameterized by hedges are
exactly the $\mathbf{L}^*$-closure systems, cf.~\cite{Vy:Faiep}.

\section{Parameterizations of FAIs}\label{sec:param}
In this section, we first formalize the notion of a parameterization and then
we use it to define the notions of truth, models, and semantic entailment of FAIs.
Note that in the case of the parameterizations by hedges outlined in
Section~\ref{sec:pfai}, the parameterization is given by the chosen hedge.
As we shall see in this section, the essence of the parameterization by hedges
is captured just by considering the fixed points of hedges (here, by
a fixed point of ${}^*$ we mean any $a \in L$ such that $a^{**} = a^*$)
which induce particular \monoton e Galois connection. This motivates us to
consider general parameterizations as systems of \monoton e Galois connections.
The following definition summarizes our requirements on such systems.

\begin{definition}\label{def:Param}
  Let $S$ be a set of \monoton e Galois connections in
  $\langle L^Y,\subseteq\rangle$ which contains $\langle \one,\one\rangle$ and
  is closed under composition. The algebra
  $\mathbf{S} = \langle S,\circ,\langle \one,\one\rangle\rangle$
  is called a \emph{parameterization} of FAIs.
\end{definition}

According to Definition~\ref{def:Param}, the parameterizations are exactly
the submonoids of the monoid of all \monoton e Galois connections in
$\langle L^Y,\subseteq\rangle$. In our paper, we assume this to be the
weakest reasonable condition which is used to define interpretation
of FAIs with reasonably strong properties as we shall see later.
Given $\mathbf{S}$, we introduce notions related to the semantic
entailment as follows.

\begin{definition}\label{def:truth}
  Let $A,B,M \in L^Y$ and let $\mathbf{S}$ be a parameterization of FAIs.
  We say that \emph{$A \I B$ is true in $M$ under $\mathbf{S}$},
  written $M \models^\mathbf{S} A \I B$, whenever
  \begin{align}
    \mul{A} \nsubseteq M \text{ or } \mul{B} \subseteq M
    \label{eqn:Strue}
  \end{align}
  holds for all $\langle \mul,\shf\rangle \in S$.
  Otherwise, we write $M \nmodels^\mathbf{S} A \I B$.
  Let $\Sigma$ be a set of FAIs in $Y$. We say that $M \in L^Y$ is
  an \emph{$\mathbf{S}$-model of $\Sigma$} whenever
  $M \models^\mathbf{S} C \I D$ for all $C \I D \in \Sigma$.
  The set of all $\mathbf{S}$-models of $\Sigma$ is denoted by
  $\mathrm{Mod}^\mathbf{S}(\Sigma)$. Furthermore,
  \emph{$A \I B$ is semantically entailed by $\Sigma$ under~$\mathbf{S}$},
  written $\Sigma \models^\mathbf{S} A \I B$,
  whenever $\mathrm{Mod}^\mathbf{S}(\Sigma) \subseteq
  \mathrm{Mod}^\mathbf{S}(\{A \I B\})$.
\end{definition}

\begin{remark}
  (a)
  Let us note that using the standard relationship between the material
  implication, negation, and disjunction,
  we can restate condition~\eqref{eqn:Strue} as
  \begin{align}
    \text{if } \mul{A} \subseteq M \text{, then } \mul{B} \subseteq M.
    \label{eqn:true_f}
  \end{align}
  Furthermore, taking into account the fact that
  $\langle\mul,\shf\rangle \in S$ is an \monoton e Galois connection,
  using~\eqref{eqn:gal}, the previous condition is equivalent to
  \begin{align}
    \text{if } A \subseteq \shf{M} \text{, then } B \subseteq \shf{M}.
    \label{eqn:true_g}
  \end{align}
  Thus, the notion of $A \I B$ being true in $M$ under $\mathbf{S}$ may
  be equivalently defined using the upper adjoints in $\mathbf{S}$
  instead of the lower adjoints in $\mathbf{S}$.

  (b)
  The notions of an $\mathbf{S}$-model and semantic entailment under
  $\mathbf{S}$ are defined using the notion of truth in a standard way
  but they both depend on chosen $\mathbf{S}$. Note that
  Definition~\ref{def:truth} defines the truth of FAIs and the semantic
  entailment as bivalent notions.
  In Section~\ref{sec:compl}, we show that reasonable graded notions can
  also be introduced and, in addition, they are expressible by the bivalent
  ones. Considering this important fact, we investigate properties of the
  bivalent notions and later show how they can be used to reason about
  their graded counterparts.
\end{remark}

It is important to show that parameterizations by hedges may be viewed as
parameterizations by some systems of \monoton e Galois connections and thus
the proposed notions constitute a proper generalization of the parametrization
by hedges. This is shown in the following example. It also shows examples of
other non-trivial parameterizations which can be handled in our approach.

\begin{example}\label{ex:S}
  (a)
  Let us first consider a parameterization $\mathbf{S}$ such that
  $S = \{\langle\one,\one\rangle\}$. Trivially, $S$ is closed under
  composition and contains $\langle\one,\one\rangle$ so it is indeed
  a parameterization according to Definition~\ref{def:Param}.
  Inspecting~\eqref{eqn:Strue}, $M \models^\mathbf{S} A \I B$ means
  that $A = \one{A} \subseteq M$ implies $B = \one{B} \subseteq M$ which
  holds if{}f $||A \I B||^*_M = 1$ where ${}^*$ is~\eqref{eqn:glob}.
  Therefore, the case of $S = \{\langle\one,\one\rangle\}$ covers the
  semantics of FAIs parameterized by globalization,
  cf. Remark~\ref{rem:semexpl}.

  (b)
  For each $c \in L$ and $A,B \in L^Y$,
  we may consider the following operators:
  \begin{align}
    \mul[c \otimes]{A} &= c \otimes A,
    \label{eqn:MulC}
    \\
    \shf[c \rightarrow]{B} &= c \rightarrow B.
    \label{eqn:ShfC}
  \end{align}
  In fact, $\mul[c \otimes]$ and $\shf[c \rightarrow]$ represent $c$-multiples
  and $c$-shifts of $\mathbf{L}$-sets, see~\eqref{eqn:a-mult}
  and~\eqref{eqn:a-shift}. Clearly, \eqref{eqn:adj} yields that
  $\langle\mul[c \otimes],\shf[c \rightarrow]\rangle$ is an \monoton e Galois
  connection. Now, let ${}^*$ be a general idempotent truth-stressing hedge
  on $\mathbf{L}$ and put
  \begin{align}
    S^* &= \{\langle\mul[c^* \otimes],\shf[c^* \rightarrow]\rangle;\, c \in L\}.
    \label{eqn:S*}
  \end{align}
  For $c = 1$, we get $\langle\one,\one\rangle \in S^*$. In addition, $S^*$
  is closed under compositions because for any $a,b \in L$ and
  $c = a^* \otimes b^*$, we have
  $\mul[a^* \otimes]\mul[b^* \otimes] = \mul[c^* \otimes ]$
  and 
  $\shf[b^* \rightarrow]\shf[a^* \rightarrow] = \shf[c^* \rightarrow ]$.
  This is a direct consequence of the fact that
  $a^* \otimes b^* = (a^* \otimes b^*)^*$ for all $a,b \in L$,
  see~\cite[Lemma 2]{BeVy:MRAP}. Therefore, $\mathbf{S}^*$ may be used
  as a parameterization of the semantics of FAIs.
  Then, $M \models^\mathbf{S} A \I B$
  means that, for any $c \in L$, we have that $c^* {\otimes} A \subseteq M$
  implies $c^* {\otimes} B \subseteq M$.
  Using~\cite[Lemma 3.13]{BeVy:ADfDwG}, the last condition is equivalent
  to stating that $||A \I B||^*_M = 1$. Therefore, parameterizations by hedges
  are indeed a special case of our general approach.

  (c)
  As a particular case of (b), we may consider 
  $S = \{\langle\mul[c \otimes],\shf[c \rightarrow]\rangle;\, c \in L\}$
  which coincides with the parameterization by ${}^*$ considered as the
  identity on $L$ which corresponds to the approach by Pollandt~\cite{Po:FB}.
  Therefore, $M \models^\mathbf{S} A \I B$ if{}f
  $\SD(A,M) \leq \SD(B,M)$, cf. Remark~\ref{rem:semexpl}.

  (d)
  A~more general approach than using a single hedge is to introduce
  a~hedge for any attribute. Concept lattices constrained by hedges in
  this sense are studied in~\cite{BeVy:Fcalh}. In our setting, we may
  consider FAIs with an analogous type of a parameterization.
  In general, we may start by considering
  an $I$-indexed system of $\mathbf{L}$-sets
  $C_i \in L^Y$ ($i \in I$) and put
  \begin{align}
    S' = \{\langle\mul[C_i \otimes],\shf[C_i \rightarrow]\rangle;\, i \in I\},
    \label{eqn:Sprime}
  \end{align}
  where $\mul[C_i \otimes]$ is defined by
  $\mul[C_i \otimes]{A} = C_i \otimes A$ for all $A \in L^Y$ as
  in~\eqref{eqn:otimes}
  and analogously for $\shf[C_i \rightarrow]$ as
  in~\eqref{eqn:rightarrow}. In general, $S'$ is not closed under $\circ$
  nor $\langle\one,\one\rangle$ belongs to $S'$ but we can consider $S$ which
  is generated by $S'$, see~\cite[page 11]{Wec:UAfCS}.
  That is $\mathbf{S} = \langle S,\circ,\langle\one,\one\rangle\rangle$
  is the least monoid which contains $S'$.
  We then have $M \models^\mathbf{S} A \I B$ if{}f for every sequence
  $i_1,\ldots,i_n$ (including $n=0$) such that
  $\mul[C_{i_1} \otimes \cdots \otimes C_{i_n} \otimes]{A} \subseteq M$
  we have 
  $\mul[C_{i_1} \otimes \cdots \otimes C_{i_n} \otimes]{B} \subseteq M$
  considering $\mul[\otimes] = \shf[\rightarrow] = \one$.
  Now, if each attribute has its hedge ${}^{*_y}$ as in~\cite{BeVy:Fcalh}
  and if $F_y \subseteq L$ is the set of all its fixed points,
  then for $I = \prod_{y \in Y} F_y$, we may put $C_i(y) = i(y)$
  for all $i \in I$ and $y \in Y$. In this particular case,
  \eqref{eqn:Sprime} is already closed under $\circ$ and contains
  $\langle\one,\one\rangle$, i.e.,  
  $\mathbf{S}' = \langle S',\circ,\langle\one,\one\rangle\rangle$
  follows Definition~\ref{def:Param}.

  (e)
  Further parameterizations may be obtained analogously as in case of (c)
  using $\oplus$ (a generalization of a triangular co-norm~\cite{KMP:TN})
  which is adjoint to $\ominus$.
  Namely, we may consider binary operations $\oplus$
  (called an \emph{addition}) and $\ominus$ (called a \emph{difference})
  in $\mathbf{L}$ such that $\oplus$ is commutative, associative, and has $0$
  as its neutral element, and
  \begin{align}
    a \ominus b \leq c \quad \text{if{}f} \quad a \leq b \oplus c
    \label{eqn:dualadj}
  \end{align}
  for all $a,b,c \in L$, see~\cite{GaJiKoOn:RL,OrRa:Drlta}. For any $C \in L^Y$,
  we put
  \begin{align}
    (\mul[\ominus C]{A})(y) &= A(y) \ominus C(y),
    \label{eqn:mul_ominus}
    \\
    (\shf[C \oplus]{B})(y) &= C(y) \oplus B(y),
    \label{eqn:shf_oplus}
  \end{align}
  for all $A,B \in L^Y$ and $y \in Y$. Clearly, \eqref{eqn:dualadj} yields
  that $\langle\mul[\ominus C],\shf[C \oplus]\rangle$ is an \monoton e Galois
  connection and we may consider parameterizations generated by collections
  of such connections as in (d). Note that for $\mathbf{L}$ defined on the
  real unit interval by a left-continuous t-norm $\otimes$,
  we can consider $\oplus$ defined by 
  $a \oplus b = 1 - ((1 - a) \otimes (1 - b))$ for all $a,b \in [0,1]$ and
  the adjoint operation $\ominus$ is then
  $a \ominus b = 1 - ((1 - b) \rightarrow (1 - a))$ for all $a,b \in [0,1]$.
  For illustration, Figure~\ref{fig:mulshf} depicts $\mul[c\otimes]$,
  $\shf[c\rightarrow]$, $\mul[\ominus c]$, and $\shf[c \oplus]$ in case of
  $c = \frac{1}{3}$ for the three most important pairs
  $\langle \otimes,\rightarrow\rangle$ of adjoint operations and the
  corresponding duals $\langle \oplus,\ominus\rangle$.
  The bold contour shows the result
  of these operators applied to the bell-shaped $\mathbf{L}$-set drawn by
  the dotted line. The dashed horizontal line marks the degree
  $c = \frac{1}{3}$.

  \begin{figure}[t]
    \centering%
    \def\move#1{\lower-2.5em\hbox{#1}}%
    \begin{tabular}{@{}c@{~}c@{\quad}c@{\quad}c@{}}
      \move{$\mul[c\otimes]$} &
      \includegraphics{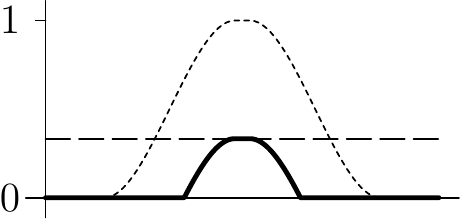} & 
      \includegraphics{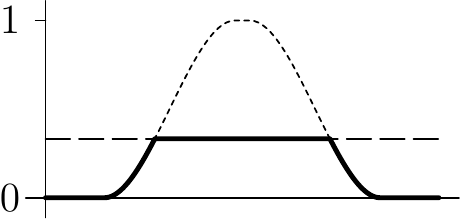} &
      \includegraphics{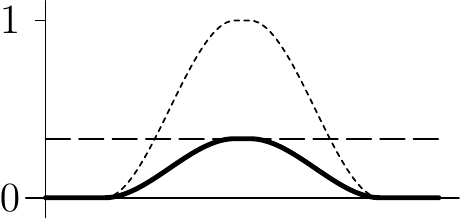}
      \\[1ex]
      \move{$\shf[c\rightarrow]$} &
      \includegraphics{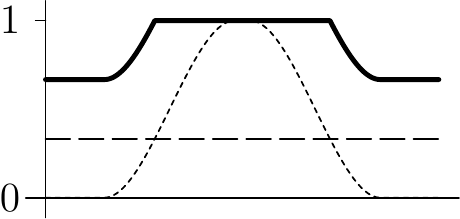} & 
      \includegraphics{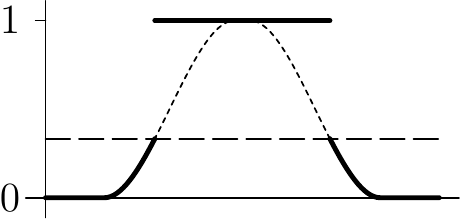} &
      \includegraphics{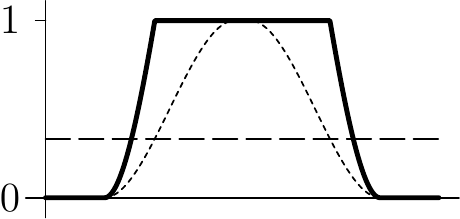}
      \\[1ex]
      \move{$\mul[\ominus c]$} &
      \includegraphics{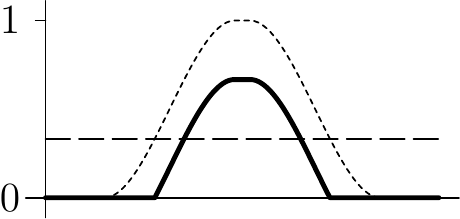} & 
      \includegraphics{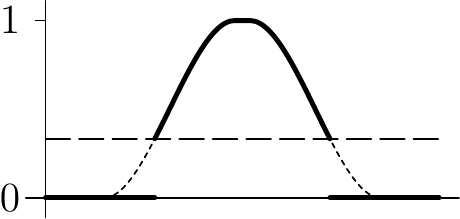} &
      \includegraphics{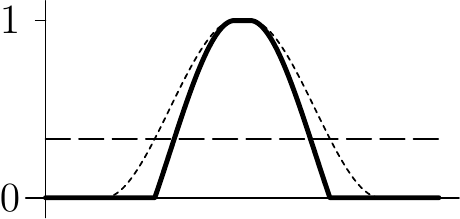}
      \\[1ex]
      \move{$\shf[c \oplus]$} &
      \includegraphics{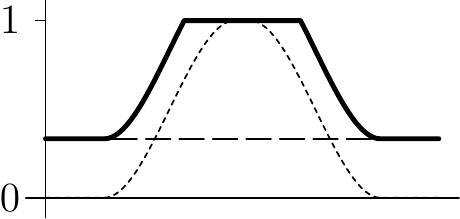} & 
      \includegraphics{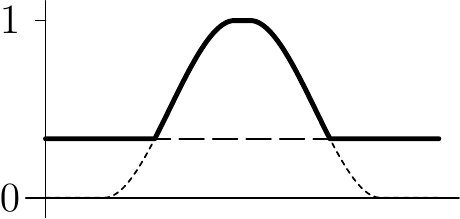} &
      \includegraphics{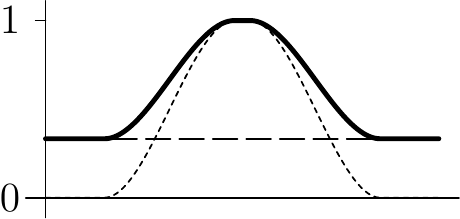} \\[-1ex]
      & \L ukasiewicz & G\"odel & Goguen
    \end{tabular}

    \caption{Examples of operators from Example~\ref{ex:S}
      for \L ukasiewicz, G\"odel, and Goguen operations.}
    \label{fig:mulshf}
  \end{figure}

  (f)
  So far, the parameterizations have been generated by \monoton e Galois
  connections arising by adjoint operations in $\mathbf{L}$ but our approach
  is far more general than that. For instance, let $Y = \{0,1,\ldots,n-1\}$
  and consider $S = \{\langle\mul[+i],\shf[-i]\rangle;\, i = 1,\ldots,n\}$,
  where 
  \begin{align}
    (\mul[+i]{A})(y) = A((y + i) \bmod n),
    \label{eqn:mul+}
    \\
    (\shf[-i]{B})(y) = B((y - i) \bmod n)
    \label{eqn:shf-}
  \end{align}
  for all $i=1,\ldots,n$, $A,B \in L^Y$, and $y \in Y$ with $x \bmod n$
  denoting the result of the usual modulo operation. Clearly,
  $\langle\mul[+n],\shf[-n]\rangle = \langle\one,\one\rangle \in S$ and
  $S$ is closed under $\circ$. Hence, $\mathbf{S}$ may be viewed as
  a parameterization which formalizes requirement on ``rotation of attributes''.
  Indeed, $M \models^\mathbf{S} A \I B$ if{}f, for every $i$:
  If $A$ rotated by $i$ (modulo $n$) is a subset of $M$, then 
  $B$ rotated by $i$ (modulo $n$) is a subset of $M$. This particular
  parameterization represents a non-trivial modification of the semantics of
  if-then rules even in the crisp case, i.e., when $\mathbf{L}$ is
  a two-element Boolean algebra.

  (g)
  All the methods (a)--(f) can be combined. In general, one can take
  existing parameterizations $\mathbf{S}_1,\mathbf{S}_2,\ldots$ and
  consider a parameterization which is generated by the union of sets
  of \monoton e Galois connections in all $\mathbf{S}_1,\mathbf{S}_2,\ldots$
  Further possibilities of obtaining parameterizations is to generate them
  from existing object-attribute data with graded attributes~\cite{GePo:Ndfc}.
\end{example}

In the rest of this section, we study the structure of models of FAIs with
general parameterizations and properties of the semantic entailment
under $\mathbf{S}$. We first show that systems of models of FAIs parameterized
by $\mathbf{S}$ are exactly closure systems which are in addition closed under
applications of all upper adjoints in $\mathbf{S}$.

\begin{definition}
  An operator $\C\!: L^Y \to L^Y$ is called an
  \emph{$\mathbf{S}$-closure operator in $\langle L^Y, \subseteq\rangle$}
  whenever
  \begin{align}
    A &\subseteq \C{A},
    \label{eqn:cl_ext} \\
    A \subseteq B &\text{ implies } \C{A} \subseteq \C{B},
    \label{eqn:cl_mon} \\
    \C{\shf{\C{A}}} &\subseteq \shf{\C{A}},
    \label{eqn:cl_shf}
  \end{align}
  are satisfied for all $A,B \in L^Y$ and
  all $\langle \mul,\shf\rangle \in S$.
  A system $\mathcal{S} \subseteq L^Y$ is called an
  \emph{$\mathbf{S}$-closure system in $\langle L^Y, \subseteq\rangle$}
  whenever it is closed under arbitrary intersections and
  \begin{align}
    M \in \mathcal{S}
    \text{ implies } \shf{M} \in \mathcal{S}
  \end{align}
  for each $M \in \mathcal{S}$ and all $\langle \mul,\shf\rangle \in S$.
\end{definition}

Note that $\mathbf{S}$-closure operators are indeed closure operators
because the idempotency condition $\C{\C{A}} \subseteq \C{A}$ is a special
case of \eqref{eqn:cl_shf} for $\shf = \one$.

\begin{theorem}\label{th:clos_sysop}
  Let $\C$ and $\mathcal{S}$ be an $\mathbf{S}$-closure operator and
  an $\mathbf{S}$-closure system in $\langle L^Y, \subseteq\rangle$,
  respectively. Then, $\mathcal{S}_{\C} = \{A \in L^Y;\, A = \C{A}\}$
  and $\C[\mathcal{S}]$ where
  $\C[\mathcal{S}]{A} = \bigcap\{B \in \mathcal{S};\, A \subseteq B\}$
  for all $A \in L^Y$ are an $\mathbf{S}$-closure system and
  an $\mathbf{S}$-closure operator in $\langle L^Y, \subseteq\rangle$,
  respectively. In addition to that,
  $\C = \C[\mathcal{S}_{\C}]$ and 
  $\mathcal{S} = \mathcal{S}_{\C[\mathcal{S}]}$.
\end{theorem}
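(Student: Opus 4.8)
The plan is to prove the two correspondences $\C \mapsto \mathcal{S}_\C$ and $\mathcal{S} \mapsto \C[\mathcal{S}]$ are well-defined into the stated classes and are mutually inverse. I would organize the argument into four blocks, following the standard pattern for Galois-style correspondences between closure operators and closure systems, but carefully verifying that the extra condition involving the upper adjoints $\shf$ is preserved in each direction.

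First I would show $\mathcal{S}_\C$ is an $\mathbf{S}$-closure system. Closure under arbitrary intersections is routine: if $A_i = \C{A_i}$ for $i \in I$, then $\bigcap_{i} A_i \subseteq A_j = \C{A_j}$ for each $j$, and by monotonicity~\eqref{eqn:cl_mon} we get $\C{\bigcap_i A_i} \subseteq \C{A_j} = A_j$ for all $j$, hence $\C{\bigcap_i A_i} \subseteq \bigcap_i A_i$; the reverse inclusion is~\eqref{eqn:cl_ext}. For the closure under upper adjoints, suppose $M = \C{M}$ and $\langle\mul,\shf\rangle \in S$. Then $\shf{M} = \shf{\C{M}}$, and~\eqref{eqn:cl_shf} gives $\C{\shf{\C{M}}} \subseteq \shf{\C{M}}$, i.e. $\C{\shf{M}} \subseteq \shf{M}$, while~\eqref{eqn:cl_ext} gives the other inclusion; hence $\shf{M} = \C{\shf{M}} \in \mathcal{S}_\C$.

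Second I would show $\C[\mathcal{S}]$ is an $\mathbf{S}$-closure operator. Extensivity~\eqref{eqn:cl_ext} and monotonicity~\eqref{eqn:cl_mon} are immediate from the definition as an intersection of all members of $\mathcal{S}$ above the argument (note $\C[\mathcal{S}]{A}$ lies in $\mathcal{S}$ since $\mathcal{S}$ is intersection-closed). The key point is~\eqref{eqn:cl_shf}: fix $A \in L^Y$ and $\langle\mul,\shf\rangle \in S$. Since $\C[\mathcal{S}]{A} \in \mathcal{S}$ and $\mathcal{S}$ is closed under $\shf$, we have $\shf{\C[\mathcal{S}]{A}} \in \mathcal{S}$; but $\C[\mathcal{S}]{B}$ is the least member of $\mathcal{S}$ containing $B$, so $\C[\mathcal{S}]{\shf{\C[\mathcal{S}]{A}}} \subseteq \shf{\C[\mathcal{S}]{A}}$, which is exactly~\eqref{eqn:cl_shf}. (In particular, taking $\shf = \one$ recovers idempotency, as the remark preceding the theorem notes.)

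Third, for $\C = \C[\mathcal{S}_\C]$: by definition $\C[\mathcal{S}_\C]{A} = \bigcap\{B;\, B = \C{B},\ A \subseteq B\}$. Since $\C{A}$ itself is a fixed point (idempotency, from~\eqref{eqn:cl_shf} with $\shf = \one$) containing $A$, we get $\C[\mathcal{S}_\C]{A} \subseteq \C{A}$; conversely, for any fixed point $B \supseteq A$, monotonicity gives $\C{A} \subseteq \C{B} = B$, so $\C{A} \subseteq \C[\mathcal{S}_\C]{A}$. Finally, for $\mathcal{S} = \mathcal{S}_{\C[\mathcal{S}]}$: if $M \in \mathcal{S}$ then $\C[\mathcal{S}]{M} = \bigcap\{B \in \mathcal{S};\, M \subseteq B\} = M$ since $M$ is itself the least such $B$, so $M \in \mathcal{S}_{\C[\mathcal{S}]}$; conversely if $M = \C[\mathcal{S}]{M}$ then $M \in \mathcal{S}$ because $\C[\mathcal{S}]{M}$ is always a member of $\mathcal{S}$. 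I expect the only genuinely non-routine step to be verifying~\eqref{eqn:cl_shf} in the second block — everything else is the classical closure-operator/closure-system duality, and the subtlety here is purely in tracking how the upper-adjoint closure condition on $\mathcal{S}$ converts into condition~\eqref{eqn:cl_shf} on the induced operator and back.
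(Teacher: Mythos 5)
Your proposal is correct and follows essentially the same route as the paper: the two non-routine steps (closure of $\mathcal{S}_{\C}$ under each $\shf$ via \eqref{eqn:cl_shf} together with \eqref{eqn:cl_ext}, and verification of \eqref{eqn:cl_shf} for $\C[\mathcal{S}]$ from the fact that $\shf{\C[\mathcal{S}]{A}} \in \mathcal{S}$ and $\C[\mathcal{S}]$ yields the least member of $\mathcal{S}$ above its argument) are exactly the paper's arguments. The remaining blocks you spell out are precisely what the paper dismisses as ``the rest is clear,'' so your write-up is just a more detailed version of the same proof.
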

\begin{proof}
  Take $A \in \mathcal{S}_{\C}$. Using~\eqref{eqn:cl_shf},
  $\C{\shf{A}} = \C{\shf{\C{A}}} \subseteq \shf{\C{A}} = \shf{A}$.
  The converse inclusion follows by~\eqref{eqn:cl_ext}, i.e.,
  $\C{\shf{A}} = \shf{A}$, showing $\shf{A} \in \mathcal{S}_{\C}$.
  Also, $\C[\mathcal{S}]{A} \in \mathcal{S}$ and thus
  $\shf{\C[\mathcal{S}]{A}} \in \mathcal{S}$
  which gives $\shf{\C[\mathcal{S}]{A}} =
  \C[\mathcal{S}]{\shf{\C[\mathcal{S}]{A}}}$, proving~\eqref{eqn:cl_shf}.
  The rest is clear.
\end{proof}

\begin{theorem}\label{th:Mod_clos}
  Let $\Sigma$ be a set of FAIs. Then $\mathrm{Mod}^\mathbf{S}(\Sigma)$ is
  an $\mathbf{S}$-closure system.
\end{theorem}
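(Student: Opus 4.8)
The plan is to verify directly, from Definition~\ref{def:truth}, the two conditions defining an $\mathbf{S}$-closure system: that $\mathrm{Mod}^\mathbf{S}(\Sigma)$ is closed under arbitrary intersections, and that it is closed under every upper adjoint $\shf$ with $\langle\mul,\shf\rangle\in S$. The only features of $\mathbf{S}$ I expect to need are that $S$ consists of \monoton e Galois connections (so~\eqref{eqn:gal} applies), contains $\langle\one,\one\rangle$, and is closed under composition~\eqref{eqn:compose}.

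For the intersection part, let $M_i\in\mathrm{Mod}^\mathbf{S}(\Sigma)$ for $i\in I$ and put $M=\bigcap_{i\in I}M_i$ (the empty intersection being the greatest element of $\langle L^Y,\subseteq\rangle$, which is trivially an $\mathbf{S}$-model). Fix $C\I D\in\Sigma$ and $\langle\mul,\shf\rangle\in S$ and suppose $\mul{C}\subseteq M$. Since intersections in $L^Y$ are computed componentwise, $\mul{C}\subseteq M_i$ for every $i$; each $M_i$ being an $\mathbf{S}$-model then gives $\mul{D}\subseteq M_i$ for every $i$, whence $\mul{D}\subseteq M$. As $C\I D$ and $\langle\mul,\shf\rangle$ were arbitrary, $M\in\mathrm{Mod}^\mathbf{S}(\Sigma)$. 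This step is routine and uses nothing about the Galois structure.

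The substantive step is closure under upper adjoints. Let $M\in\mathrm{Mod}^\mathbf{S}(\Sigma)$ and fix $\langle\mul[0],\shf[0]\rangle\in S$; the goal is $\shf[0]{M}\in\mathrm{Mod}^\mathbf{S}(\Sigma)$. Fix an arbitrary $C\I D\in\Sigma$ and an arbitrary $\langle\mul,\shf\rangle\in S$, and assume $\mul{C}\subseteq\shf[0]{M}$. Applying the Galois property~\eqref{eqn:gal} of $\langle\mul[0],\shf[0]\rangle$ rewrites this as $\mul[0]\mul{C}\subseteq M$, where $\mul[0]\mul$ is the composite operator. The key point is that $\langle\mul[0]\mul,\shf\shf[0]\rangle=\langle\mul[0],\shf[0]\rangle\circ\langle\mul,\shf\rangle$ by~\eqref{eqn:compose}, and this composite lies in $S$ since $S$ is closed under composition. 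Because $M$ is an $\mathbf{S}$-model and the composite belongs to $S$, from $\mul[0]\mul{C}\subseteq M$ we get $\mul[0]\mul{D}\subseteq M$, and a second application of~\eqref{eqn:gal} yields $\mul{D}\subseteq\shf[0]{M}$. Hence $\shf[0]{M}$ satisfies the truth condition for $C\I D$ against every connection in $S$, so $\shf[0]{M}\models^\mathbf{S}C\I D$ for all $C\I D\in\Sigma$, i.e.\ $\shf[0]{M}\in\mathrm{Mod}^\mathbf{S}(\Sigma)$.

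The one place calling for care is the bookkeeping around composition: membership in $\mathrm{Mod}^\mathbf{S}(\Sigma)$ quantifies over \emph{all} connections in $S$, and the connection driving the argument for $\shf[0]{M}$ is not the given $\langle\mul,\shf\rangle$ but its composite with $\langle\mul[0],\shf[0]\rangle$, composed in the order prescribed by~\eqref{eqn:compose} so that the lower adjoint of the composite is exactly $\mul[0]\mul$. Equivalently, the argument can be phrased via the upper-adjoint reformulation~\eqref{eqn:true_g} of truth, using $\shf(\shf[0]{M})=\shf\shf[0]{M}$ and the fact that $\shf\shf[0]$ is the upper adjoint of the composite $\langle\mul[0]\mul,\shf\shf[0]\rangle\in S$; this second phrasing makes the role of closure under composition most transparent. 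I anticipate no obstacle beyond this.
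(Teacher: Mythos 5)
Your proof is correct and follows essentially the same route as the paper: closure under intersections by the standard componentwise argument, and closure under each upper adjoint $\shf[0]$ by passing through the adjunction~\eqref{eqn:gal} to the lower adjoint $\mul[0]\mul$ of the composite connection, invoking closure of $S$ under~\eqref{eqn:compose}, and applying~\eqref{eqn:gal} once more. The only difference is cosmetic: you spell out the intersection step (including the empty intersection) which the paper dismisses as standard.
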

\begin{proof}
  The fact that $\mathrm{Mod}^\mathbf{S}(\Sigma)$ is closed under arbitrary
  intersections follows by standard arguments.
  We show that $\mathrm{Mod}^\mathbf{S}(\Sigma)$ is
  closed under all $\shf$'s. That is, for any
  $M \in \mathrm{Mod}^\mathbf{S}(\Sigma)$ and $A \I B \in \Sigma$,
  we prove that $\shf{M} \models A \I B$ for all $\shf$ in $\mathbf{S}$.
  We utilize the fact that $S$ is closed under composition.
  Let $\langle \shf[1],\mul[1]\rangle \in S$ and 
  $\langle \shf[2],\mul[2]\rangle \in S$.
  If $\mul[2](A) \subseteq \shf[1]{M}$, then using~\eqref{eqn:gal},
  we get $\mul[1]\mul[2](A) = \mul[1]{\mul[2](A)} \subseteq M$
  and so $\mul[1]\mul[2](B) = \mul[1]{\mul[2](B)} \subseteq M$
  because $\mul[1]\mul[2]$ is a composed operator in $\mathbf{S}$ and
  $M \models A \I B$.
  Therefore, \eqref{eqn:gal} used once again yields 
  $\mul[2](B) \subseteq \shf[1]{M}$, meaning that $\shf[1]{M} \models A \I B$
  and so $\shf[1]{M} \in \mathrm{Mod}^\mathbf{S}(\Sigma)$.
\end{proof}

\begin{theorem}\label{th:clos_Mod}
  Let $\mathcal{S}$ be an $\mathbf{S}$-closure system
  in $\langle L^Y, \subseteq\rangle$. Then, for
  \begin{align}
    \Sigma_\mathcal{S} = \{A \I \C[\mathcal{S}]{A};\, A \in L^Y\},
  \end{align}
  we have $\mathcal{S} = \mathrm{Mod}^\mathbf{S}(\Sigma_\mathcal{S})$.
\end{theorem}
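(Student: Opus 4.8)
The plan is to prove the two inclusions $\mathcal{S} \subseteq \mathrm{Mod}^\mathbf{S}(\Sigma_\mathcal{S})$ and $\mathrm{Mod}^\mathbf{S}(\Sigma_\mathcal{S}) \subseteq \mathcal{S}$ separately, using the correspondence between $\mathbf{S}$-closure systems and $\mathbf{S}$-closure operators established in Theorem~\ref{th:clos_sysop} (so that $\C[\mathcal{S}]$ is an $\mathbf{S}$-closure operator and $\mathcal{S} = \mathcal{S}_{\C[\mathcal{S}]}$, i.e. the members of $\mathcal{S}$ are exactly the fixed points of $\C[\mathcal{S}]$).

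For the inclusion $\mathcal{S} \subseteq \mathrm{Mod}^\mathbf{S}(\Sigma_\mathcal{S})$, I would take $M \in \mathcal{S}$ and a generating FAI $A \I \C[\mathcal{S}]{A} \in \Sigma_\mathcal{S}$, and verify $M \models^\mathbf{S} A \I \C[\mathcal{S}]{A}$ using the upper-adjoint form~\eqref{eqn:true_g}: for each $\langle\mul,\shf\rangle \in S$, assuming $A \subseteq \shf{M}$, I must show $\C[\mathcal{S}]{A} \subseteq \shf{M}$. Since $M \in \mathcal{S}$ and $\mathcal{S}$ is an $\mathbf{S}$-closure system, $\shf{M} \in \mathcal{S}$; hence $\shf{M}$ is a fixed point of $\C[\mathcal{S}]$, and from $A \subseteq \shf{M}$ together with monotonicity~\eqref{eqn:cl_mon} we get $\C[\mathcal{S}]{A} \subseteq \C[\mathcal{S}]{\shf{M}} = \shf{M}$, as desired.

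For the converse $\mathrm{Mod}^\mathbf{S}(\Sigma_\mathcal{S}) \subseteq \mathcal{S}$, I would take $M \in \mathrm{Mod}^\mathbf{S}(\Sigma_\mathcal{S})$ and show $M = \C[\mathcal{S}]{M}$, so that $M \in \mathcal{S}_{\C[\mathcal{S}]} = \mathcal{S}$. One inclusion is free from extensivity~\eqref{eqn:cl_ext}. For the other, apply the fact that $M$ is an $\mathbf{S}$-model of the particular FAI $M \I \C[\mathcal{S}]{M} \in \Sigma_\mathcal{S}$; instantiating~\eqref{eqn:Strue} (or~\eqref{eqn:true_f}) at the identity connection $\langle\one,\one\rangle \in S$ gives: if $\one{M} = M \subseteq M$ then $\one{\C[\mathcal{S}]{M}} = \C[\mathcal{S}]{M} \subseteq M$. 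The antecedent holds trivially, so $\C[\mathcal{S}]{M} \subseteq M$, completing the equality.

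I do not expect a serious obstacle here; the argument is essentially a bookkeeping exercise once Theorems~\ref{th:clos_sysop} and~\ref{th:Mod_clos} are in hand. The one point that requires a little care is making sure the two directions use the right form of the truth condition — the upper-adjoint form~\eqref{eqn:true_g} is the natural one for the first inclusion (because closure-system membership is phrased in terms of the $\shf$'s), while the lower-adjoint form specialized to $\langle\one,\one\rangle$ is what makes the second inclusion immediate. It is also worth noting explicitly that $\Sigma_\mathcal{S}$ contains $M \I \C[\mathcal{S}]{M}$ for \emph{every} $M \in L^Y$, which is exactly what lets us test an arbitrary candidate model against a tailor-made FAI.
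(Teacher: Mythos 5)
Your proposal is correct and follows essentially the same route as the paper: both directions are proved exactly as in the paper's proof, with the second inclusion obtained by instantiating the model condition for $M \I \C[\mathcal{S}]{M}$ at $\langle\one,\one\rangle$. The only cosmetic difference is in the first inclusion, where you invoke the system-level closure under the $\shf$'s (so that $\shf{M} \in \mathcal{S}$ is a fixed point of $\C[\mathcal{S}]$) and the upper-adjoint form~\eqref{eqn:true_g}, while the paper argues with the lower-adjoint form and the operator condition~\eqref{eqn:cl_shf}; by Theorem~\ref{th:clos_sysop} these are two equivalent phrasings of the same step.
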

\begin{proof}
  Recall that $\C[\mathcal{S}]$ is the $\mathbf{S}$-closure
  operator induced by $\mathcal{S}$, see Theorem~\ref{th:clos_sysop}.
  We prove the assertion by showing that
  both inclusions of
  $\mathcal{S} = \mathrm{Mod}^\mathbf{S}(\Sigma_\mathcal{S})$ hold.

  Let $M \in \mathcal{S}$ and $A \in L^Y$. That is,
  $M = \C[\mathcal{S}]{M}$ and $A \I \C[\mathcal{S}]{A} \in \Sigma_\mathcal{S}$.
  Furthermore, consider any $\langle\mul,\shf\rangle \in S$.
  If $\mul{A} \subseteq M$,
  then $\mul{A} \subseteq \C[\mathcal{S}]{M}$ and so
  $A \subseteq \shf{\C[\mathcal{S}]{M}}$.
  Using the \monoton y of $\C[\mathcal{S}]$, we further get
  $\C[\mathcal{S}]{A} \subseteq \C[\mathcal{S}]{\shf{\C[\mathcal{S}]{M}}}$
  and so~\eqref{eqn:cl_shf} yields
  $\C[\mathcal{S}]{A} \subseteq \shf{\C[\mathcal{S}]{M}}$, i.e.,
  $\mul{\C[\mathcal{S}]{A}} \subseteq \C[\mathcal{S}]{M} = M$.
  Therefore, for an operator $\mul$, we have shown that,
  for any $A \in L^Y$,
  $\mul{A} \subseteq M$ implies
  $\mul{\C[\mathcal{S}]{A}} \subseteq M$, i.e.,
  $M \in \mathrm{Mod}^\mathbf{S}(\Sigma_\mathcal{S})$.

  Conversely, assume that $M \in \mathrm{Mod}^\mathbf{S}(\Sigma_\mathcal{S})$.
  It suffices to show that $M$ is a fixed point of $\C[\mathcal{S}]$. This
  is easy to see since from $M \I \C[\mathcal{S}]{M} \in \Sigma_\mathcal{S}$
  and considering $\mul = \shf = \one$,
  it follows that $\one{M} \subseteq M$ and so
  $\C[\mathcal{S}]{M} = \one{\C[\mathcal{S}]{M}} \subseteq M$,
  proving that $M = \C[\mathcal{S}]{M}$.
\end{proof}

The following assertion characterizes the semantic entailment under $\mathbf{S}$
in terms of least models. Since $\mathrm{Mod}^\mathbf{S}(\Sigma)$ is an
$\mathbf{S}$-closure system, Theorem~\ref{th:clos_sysop}
allows us to consider the corresponding
$\mathbf{S}$-closure operator $\C[\mathrm{Mod}^\mathbf{S}(\Sigma)]$. For
brevity, we denote the operator $\C[\mathrm{Mod}^\mathbf{S}(\Sigma)]$
simply by $[{\cdots}]^\mathbf{S}_\Sigma$, i.e., $[A]^\mathbf{S}_\Sigma$ is
the \emph{least $\mathbf{S}$-model} of $\Sigma$ \emph{containing $A$.}

\begin{theorem}\label{th:sement}
  For each set $\Sigma$ of FAIs and each $A \I B$, the following
  conditions are equivalent:
  \begin{enumerate}\parskip=-2pt
  \item[\itm{1}]
    $\Sigma \models^\mathbf{S} A \I B$,
  \item[\itm{2}]
    $[M]^\mathbf{S}_\Sigma \models^\mathbf{S} A \I B$ for all $M \in L^Y$,
  \item[\itm{3}]
    $[A]^\mathbf{S}_\Sigma \models^\mathbf{S} A \I B$,
  \item[\itm{4}]
    $B \subseteq [A]^\mathbf{S}_\Sigma$,
  \end{enumerate}
\end{theorem}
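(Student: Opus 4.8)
The plan is to prove the chain of implications $\itm{1} \Rightarrow \itm{2} \Rightarrow \itm{3} \Rightarrow \itm{4} \Rightarrow \itm{1}$, which is the standard way to establish a four-fold equivalence while minimizing work.

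For $\itm{1} \Rightarrow \itm{2}$: each $[M]^\mathbf{S}_\Sigma$ is by definition a member of $\mathrm{Mod}^\mathbf{S}(\Sigma)$, so if $\Sigma \models^\mathbf{S} A \I B$ then $[M]^\mathbf{S}_\Sigma \in \mathrm{Mod}^\mathbf{S}(\{A \I B\})$, which is exactly $[M]^\mathbf{S}_\Sigma \models^\mathbf{S} A \I B$. The step $\itm{2} \Rightarrow \itm{3}$ is trivial — just instantiate $M := A$. For $\itm{3} \Rightarrow \itm{4}$: taking $\langle\mul,\shf\rangle = \langle\one,\one\rangle \in S$ in the truth condition~\eqref{eqn:Strue}, and using that $A \subseteq [A]^\mathbf{S}_\Sigma$ by~\eqref{eqn:cl_ext}, the hypothesis $[A]^\mathbf{S}_\Sigma \models^\mathbf{S} A \I B$ forces $\one{B} = B \subseteq [A]^\mathbf{S}_\Sigma$ (since the alternative $\one{A} = A \nsubseteq [A]^\mathbf{S}_\Sigma$ fails). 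This is the easy direction because only the identity connection is needed.

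The main work is $\itm{4} \Rightarrow \itm{1}$. Assume $B \subseteq [A]^\mathbf{S}_\Sigma$ and let $M \in \mathrm{Mod}^\mathbf{S}(\Sigma)$; I must show $M \models^\mathbf{S} A \I B$, i.e., for every $\langle\mul,\shf\rangle \in S$, $\mul{A} \subseteq M$ implies $\mul{B} \subseteq M$. Suppose $\mul{A} \subseteq M$. By the Galois adjointness~\eqref{eqn:gal}, this gives $A \subseteq \shf{M}$. Now $\shf{M} \in \mathrm{Mod}^\mathbf{S}(\Sigma)$ because $\mathrm{Mod}^\mathbf{S}(\Sigma)$ is an $\mathbf{S}$-closure system (Theorem~\ref{th:Mod_clos}), and since $A \subseteq \shf{M}$, the least $\mathbf{S}$-model of $\Sigma$ containing $A$ satisfies $[A]^\mathbf{S}_\Sigma \subseteq \shf{M}$. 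Combined with the hypothesis $B \subseteq [A]^\mathbf{S}_\Sigma$, we obtain $B \subseteq \shf{M}$, and applying~\eqref{eqn:gal} once more yields $\mul{B} \subseteq M$, as required. Thus $M \models^\mathbf{S} A \I B$, so $\mathrm{Mod}^\mathbf{S}(\Sigma) \subseteq \mathrm{Mod}^\mathbf{S}(\{A \I B\})$, which is $\Sigma \models^\mathbf{S} A \I B$.

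The crux — and the only place any real structure is used — is the argument that $\shf{M}$ is again a model, so that $[A]^\mathbf{S}_\Sigma$ (defined as an intersection over \emph{all} models containing $A$) can be bounded above by $\shf{M}$; this is precisely why the parameterization is required to be closed under composition (that closure is what makes Theorem~\ref{th:Mod_clos} go through). Everything else is bookkeeping with the adjointness~\eqref{eqn:gal} and the extensivity/monotony of the closure operator. I would present $\itm{4} \Rightarrow \itm{1}$ in full and dispatch the other three implications in one or two sentences each.
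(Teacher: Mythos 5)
Your proposal is correct and follows essentially the same route as the paper: the same cyclic decomposition, the same use of the identity connection for \itm{3}$\Rightarrow$\itm{4}, and the same key step in \itm{4}$\Rightarrow$\itm{1} that $\shf{M}$ is again an $\mathbf{S}$-model (Theorem~\ref{th:Mod_clos}) so that $[A]^\mathbf{S}_\Sigma \subseteq \shf{M}$ and adjointness~\eqref{eqn:gal} closes the argument. The only cosmetic difference is that the paper reaches $[A]^\mathbf{S}_\Sigma \subseteq \shf{M}$ via monotony of $[{\cdots}]^\mathbf{S}_\Sigma$ plus the fixed-point identity $[\shf{M}]^\mathbf{S}_\Sigma = \shf{M}$, while you invoke directly that $[A]^\mathbf{S}_\Sigma$ is the least model containing $A$.
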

\begin{proof}
  If \itm{1} holds, then \itm{2} is satisfied because
  $[M]^\mathbf{S}_\Sigma \in \mathrm{Mod}^\mathbf{S}(\Sigma)$
  for all $M \in L^Y$. In addition to that, \itm{2} implies \itm{3}
  trivially. Furthermore, for $\mul = \shf = \one$,
  we have $\mul{A} = A \subseteq [A]^\mathbf{S}_\Sigma$ and
  by \itm{3} it follows that $B = \mul{B} \subseteq [A]^\mathbf{S}_\Sigma$,
  showing \itm{4}. So, it suffices to check that \itm{4} implies \itm{1}.

  Take $M \in \mathrm{Mod}^\mathbf{S}(\Sigma)$ and let $\mul{A} \subseteq M$
  for $\langle\mul,\shf\rangle \in S$. As a consequence,
  $A \subseteq \shf{M}$ and thus
  $[A]^\mathbf{S}_\Sigma \subseteq [\shf{M}]^\mathbf{S}_\Sigma$ because of
  the \monoton y of $[{\cdots}]^\mathbf{S}_\Sigma$. Now, \itm{4} yields
  $B \subseteq [\shf{M}]^\mathbf{S}_\Sigma$. In addition to that,
  Theorem~\ref{th:Mod_clos} shows that
  $\shf{M} \in \mathrm{Mod}^\mathbf{S}(\Sigma)$
  and so $[\shf{M}]^\mathbf{S}_\Sigma = \shf{M}$,
  showing $B \subseteq \shf{M}$ and thus $\mul{B} \subseteq M$,
  proving $M \models^\mathbf{S} A \I B$ which establishes~\itm{1}.
\end{proof}

The semantic entailment of FAIs parameterized by hedges has the
following property: For any set $\Sigma$ of FAIs and any $A,B \in L^Y$,
$||A \I B||^*_\Sigma = 1$ if and only if 
$||0_Y \I B||^*_{\Sigma \cup \{0_Y \I A\}} = 1$.
This property can be seen as a semantic counterpart to the classic
deduction theorem of propositional logic. The following assertion shows
that the property holds for the general semantics if all $\mul$'s
are intensive, i.e., $\mul{M} \subseteq M$ for all $\mul$ and $M$.

\begin{theorem}\label{th:semded}
  Let $\mul{M} \subseteq M$ for all $\langle\mul,\shf\rangle \in S$
  and $M \in L^Y$. Then, for any $\Sigma$ and $A,B \in L^Y$,
  we have $\Sigma \models^\mathbf{S} A \I B$ if{}f\/
  $\Sigma \cup \{0_Y \I A\} \models^\mathbf{S} 0_Y \I B$.
\end{theorem}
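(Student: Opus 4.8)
The plan is to express both sides of the claimed equivalence in terms of least $\mathbf{S}$-models via Theorem~\ref{th:sement}, and then to show that the two relevant least models coincide. Concretely, condition~\itm{4} of Theorem~\ref{th:sement} gives $\Sigma \models^\mathbf{S} A \I B$ if and only if $B \subseteq [A]^\mathbf{S}_\Sigma$, and, applied to the set $\Sigma \cup \{0_Y \I A\}$ and the FAI $0_Y \I B$, it gives $\Sigma \cup \{0_Y \I A\} \models^\mathbf{S} 0_Y \I B$ if and only if $B \subseteq [0_Y]^\mathbf{S}_{\Sigma \cup \{0_Y \I A\}}$. Hence it suffices to prove the single identity $[A]^\mathbf{S}_\Sigma = [0_Y]^\mathbf{S}_{\Sigma \cup \{0_Y \I A\}}$.

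The key step is to show that, under the hypothesis that every $\mul$ is intensive, one has $M \models^\mathbf{S} 0_Y \I A$ if and only if $A \subseteq M$, for every $M \in L^Y$. For the ``only if'' direction, instantiate~\eqref{eqn:Strue} with $\langle\mul,\shf\rangle = \langle\one,\one\rangle \in S$: since $\one{0_Y} = 0_Y \subseteq M$, truth of $0_Y \I A$ in $M$ forces $\one{A} = A \subseteq M$. For the ``if'' direction, fix an arbitrary $\langle\mul,\shf\rangle \in S$; by~\eqref{eqn:mul_distr} applied to the empty union, $\mul{0_Y} = 0_Y \subseteq M$, so verifying~\eqref{eqn:Strue} for $0_Y \I A$ reduces to showing $\mul{A} \subseteq M$; but $A \subseteq M$ together with the monotonicity~\eqref{eqn:mon_mul} of $\mul$ gives $\mul{A} \subseteq \mul{M}$, and $\mul{M} \subseteq M$ by the intensivity hypothesis, hence $\mul{A} \subseteq M$. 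This ``if'' direction is the only place the intensivity assumption is used, and without it the implication $A \subseteq M \Rightarrow M \models^\mathbf{S} 0_Y \I A$ genuinely fails; this is the heart of the argument.

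It follows that $\mathrm{Mod}^\mathbf{S}(\Sigma \cup \{0_Y \I A\}) = \{M \in \mathrm{Mod}^\mathbf{S}(\Sigma);\, A \subseteq M\}$. By Theorem~\ref{th:Mod_clos}, both sides here are $\mathbf{S}$-closure systems, and since $0_Y \subseteq M$ holds for every $M \in L^Y$, the description of the induced $\mathbf{S}$-closure operator from Theorem~\ref{th:clos_sysop} yields $[0_Y]^\mathbf{S}_{\Sigma \cup \{0_Y \I A\}} = \bigcap \mathrm{Mod}^\mathbf{S}(\Sigma \cup \{0_Y \I A\}) = \bigcap\{M \in \mathrm{Mod}^\mathbf{S}(\Sigma);\, A \subseteq M\} = [A]^\mathbf{S}_\Sigma$. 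Combining this identity with the two reductions from the first paragraph gives the claimed equivalence, $\Sigma \models^\mathbf{S} A \I B$ if and only if $\Sigma \cup \{0_Y \I A\} \models^\mathbf{S} 0_Y \I B$. Apart from pinning down precisely where intensivity is needed in the ``if'' direction of the key equivalence, I do not expect any real obstacle.
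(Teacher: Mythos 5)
Your proof is correct, but it takes a different route than the paper. You reduce both sides to containments in least models via Theorem~\ref{th:sement}\,\itm{4}, with the key lemma that under the intensivity hypothesis $M \models^\mathbf{S} 0_Y \I A$ holds if{}f $A \subseteq M$; this gives $\mathrm{Mod}^\mathbf{S}(\Sigma \cup \{0_Y \I A\}) = \{M \in \mathrm{Mod}^\mathbf{S}(\Sigma);\, A \subseteq M\}$ and hence the identity $[0_Y]^\mathbf{S}_{\Sigma \cup \{0_Y \I A\}} = [A]^\mathbf{S}_\Sigma$, from which the equivalence drops out. The paper instead argues directly with models and never invokes Theorem~\ref{th:sement}: given $M \in \mathrm{Mod}^\mathbf{S}(\Sigma)$ with $\mul[1]{A} \subseteq M$, it passes to the auxiliary $\mathbf{L}$-set $\shf[1]{M}$, checks that it is a model of $\Sigma \cup \{0_Y \I A\}$ (closure of $\mathrm{Mod}^\mathbf{S}(\Sigma)$ under upper adjoints from Theorem~\ref{th:Mod_clos}, plus \monoton y and intensivity of the $\mul$'s for the extra formula), and then reads off $B \subseteq \shf[1]{M}$, i.e.\ $\mul[1]{B} \subseteq M$. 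Both arguments use intensivity at exactly the same point, namely $\mul{A} \subseteq \mul{N} \subseteq N$ for a suitable $N$; your lemma is essentially the paper's computation packaged as a statement about models of $0_Y \I A$. What your version buys is modularity and the explicit by-product $[0_Y]^\mathbf{S}_{\Sigma \cup \{0_Y \I A\}} = [A]^\mathbf{S}_\Sigma$, which is a clean semantic form of the deduction-theorem property; what the paper's version makes visible is that the only-if direction needs no intensivity at all (your argument routes it through the lemma whose ``if'' half uses intensivity, which is harmless here since intensivity is assumed, but slightly obscures that asymmetry).
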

\begin{proof}
  The only-if part follows by the monotony of $\models^\mathbf{S}$.
  In order to prove the if-part of the assertion,
  assume that $\Sigma \cup \{0_Y \I A\} \models^\mathbf{S} 0_Y \I B$
  and take any $M \in \mathrm{Mod}^\mathbf{S}(\Sigma)$. Furthermore,
  suppose that $\mul[1]{A} \subseteq M$ for
  $\langle\mul[1],\shf[1]\rangle \in S$.
  It follows that $A \subseteq \shf[1]{M}$. In addition, for any
  $\langle\mul[2],\shf[2]\rangle \in S$, $\mul[2]$ is \monoton e and so
  $\mul[2]{A} \subseteq \mul[2]{\shf[1]{M}}$. Using the assumption of
  intensivity of $\mul[2]$, the last inequality yields
  $\mul[2]{A} \subseteq \shf[1]{M}$. That is,
  $\shf[1]{M} \models^\mathbf{S} 0_Y \I A$ because $\mul[2]$
  has been taken arbitrarily. Moreover,
  Theorem~\ref{th:Mod_clos} shows that
  $\shf[1]{M} \in \mathrm{Mod}^\mathbf{S}(\Sigma)$ and so
  $\shf[1]{M} \in \mathrm{Mod}^\mathbf{S}(\Sigma \cup \{0_Y \I A\})$
  which further gives $\shf[1]{M} \models^\mathbf{S} 0_Y \I B$
  because $\Sigma \cup \{0_Y \I A\} \models^\mathbf{S} 0_Y \I B$.
  Hence, for $\mul = \one$, $\shf[1]{M} \models^\mathbf{S} 0_Y \I B$
  yields $B \subseteq \shf[1]{M}$. Therefore, we have shown that 
  $\mul[1]{A} \subseteq M$ implies $\mul[1]{B} \subseteq M$ for all
  $\langle\mul[1],\shf[1]\rangle \in S$ and
  $M \in \mathrm{Mod}^\mathbf{S}(\Sigma)$,
  proving $\Sigma \models^\mathbf{S} A \I B$.
\end{proof}

According to Theorem~\ref{th:sement},
in order to check $\Sigma \models^\mathbf{S} A \I B$,
it suffices to determine $[A]^\mathbf{S}_\Sigma$ and check whether the
inclusion $B \subseteq [A]^\mathbf{S}_\Sigma$ is satisfied. Constructive
methods to compute fixed points of $[{\cdots}]^\mathbf{S}_\Sigma$ can be
introduced based on computing fixed points of
immediate consequence operators~\cite{Ta:Altfta,KoEm:Splpl}.
For any $\Sigma$ and $\mathbf{S}$, we define an operator
$\boldsymbol{t}^\mathbf{S}_\Sigma\!: L^Y \to L^Y$ by
\begin{align}
  \boldsymbol{t}^\mathbf{S}_\Sigma(M) &= M \cup \textstyle\bigcup\{\mul{B};\,
  A \I B \in \Sigma, \langle \mul,\shf\rangle \in S
  \text{, and } \mul{A} \subseteq M\}.
  \label{eqn:t_op}
\end{align}
for all $M \in L^Y$. The operator is \monoton e and extensive and by standard
arguments it follows that fixed points of $[{\cdots}]^\mathbf{S}_\Sigma$ may
be obtained as fixed points of an iterated closure operator based
on~\eqref{eqn:t_op}. In particular, if both $Y$ and $L$ are finite, then
there is $N$ such that $N = \boldsymbol{t}^\mathbf{S}_\Sigma(
\boldsymbol{t}^\mathbf{S}_\Sigma(\cdots
(M)\cdots))$ with $\boldsymbol{t}^\mathbf{S}_\Sigma$ applied at most
$|L| \times |Y|$ times for which we have
$\boldsymbol{t}^\mathbf{S}_\Sigma(N) = N$ and thus
$[M]^\mathbf{S}_\Sigma = N$,
see also \cite{DaPe:MaRLP,Lloyd84}. This observation allows to use
a simple modification of the well-known algorithm
\textsc{Closure}~\cite[Algorithm 4.2]{Mai:TRD} to compute the
fixed points of $[{\cdots}]^\mathbf{S}_\Sigma$, cf. also~\cite{GaWi:FCA}.

\section{Description of Dependencies in Data}\label{sec:ddd}
In this section, we describe FAIs which are true in given object-attribute
data with fuzzy attributes and characterize non-redundant sets of FAIs which
describe all FAIs true in given data. The input data can be seen as
two-dimensional tables with rows corresponding to objects, columns corresponding
to attributes, and table entries being degrees in $L$, incidating degrees to
which objects have do/not have attributes, i.e., we work with the same type
of input data as in~\cite{BeVy:Fcalh} and related approaches.
The input data is formalized as follows.

For a non-empty set $X$ of \emph{objects} and set $Y$ of attributes (as before),
an \emph{$\mathbf{L}$-context} (a fuzzy context with degrees in $\mathbf{L}$,
see~\cite{Bel:FRS,BeVy:Fcalh,Po:FB}) is a triplet $\langle X, Y, I\rangle$ where
$I\!: X \times Y \to L$, i.e., $I$ is a binary $\mathbf{L}$-relation between
$X$ and $Y$; $I(x,y) \in L$ is interpreted as a degree to which the object
$x \in X$ has the attribute $y \in Y$. In order to simplify notation,
for any $x \in X$ we consider $I_x \in L^Y$ such that $I_x(y) = I(x,y)$
for all $y \in Y$. Under this notation, we define the notion of $A \I B$
being true in $\langle X,Y,I\rangle$ under $\mathbf{S}$ as follows.

\begin{definition}
  Let $\langle X,Y,I\rangle$ be an $\mathbf{L}$-context and let $A,B \in L^Y$.
  We say that
  \emph{$A \I B$ is true in $\langle X,Y,I\rangle$},
  written $I \models^\mathbf{S} A \I B$,
  whenever $I_x \models^\mathbf{S} A \I B$ for all $x \in X$.
\end{definition}

Our goal is to characterize, in a concise way,
all FAIs which are true in given $\langle X,Y,I\rangle$
considering $\mathbf{S}$.
The description we
offer here utilizes a couple of operators
${}^{\up}\!: 2^{X \times S_g} \to L^Y$ and
${}^{\dn}\!: L^Y \to 2^{X \times S_g}$ where 
$S_g = \{\shf;\, \langle\mul,\shf\rangle \in S\}$
such that
\begin{align}
  F^\up &= \textstyle\bigcap\{\shf{I_x};\, \langle x,\shf\rangle \in F\},
  \label{eqn:up}
  \\
  G^\dn &= \{\langle x,\shf\rangle;\, G \subseteq \shf{I_x}\},
  \label{eqn:dn}
\end{align}
for all $F \subseteq X \times S_g$ and $G \in L^Y$. It is easy to see
that $\langle {}^{\up},{}^{\dn}\rangle$ forms an
\emph{antitone Galois connection}~\cite{GaWi:FCA}, i.e.,
$F \subseteq G^{\dn}$ if{}f $G \subseteq F^{\up}$ for all
$F \subseteq X \times S_g$ and $G \in L^Y$. As a consequence,
the composed operator ${}^{\dn\up}\!: L^Y \to L^Y$, i.e.
\begin{align}
  G^{\dn\up} &= \textstyle\bigcap\{\shf{I_x};\, G \subseteq \shf{I_x}\},
  \label{eqn:dnup}
\end{align}
for all $G \in L^Y$, is a closure operator. The following assertion shows
that $G^{\dn\up}$ can be seen as an $\mathbf{L}$-set of attributes which are
implied by $G$ and can be used to characterize FAIs which are true
in $\langle X,Y,I\rangle$ under $\mathbf{S}$.

\begin{theorem}\label{th:Itrue}
  For each $I\!: X \times Y \to L$ and each $A \I B$,
  the following conditions are equivalent:
  \begin{enumerate}\parskip=-2pt
  \item[\itm{1}]
    $I \models^\mathbf{S} A \I B$,
  \item[\itm{2}]
    $M^{\up\dn} \models^\mathbf{S} A \I B$ for all $M \in L^Y$,
  \item[\itm{3}]
    $A^{\up\dn} \models^\mathbf{S} A \I B$,
  \item[\itm{4}]
    $B \subseteq A^{\up\dn}$,
  \item[\itm{5}]
    $A^{\dn} \subseteq B^{\dn}$.
  \end{enumerate}
\end{theorem}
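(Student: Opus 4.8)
The plan is to prove the cyclic chain of implications \itm{1} $\Rightarrow$ \itm{2} $\Rightarrow$ \itm{3} $\Rightarrow$ \itm{4} $\Rightarrow$ \itm{5} $\Rightarrow$ \itm{1}, mirroring the structure of the proof of Theorem~\ref{th:sement} but using the antitone Galois connection $\langle {}^{\up},{}^{\dn}\rangle$ instead of the least-model closure operator. The conceptual key is the observation, to be established first, that the closure operator ${}^{\dn\up}$ is itself an $\mathbf{S}$-closure operator in $\langle L^Y,\subseteq\rangle$: extensivity and monotony are immediate from the Galois connection, and the condition~\eqref{eqn:cl_shf}, namely $(\shf{(G^{\dn\up})})^{\dn\up} \subseteq \shf{(G^{\dn\up})}$, should follow because for each $x$ with $G \subseteq \shf[1]{I_x}$ one has $\shf{(G^{\dn\up})} \subseteq \shf{\shf[1]{I_x}} = (\shf[1]\shf)(I_x)$ and, since $S$ is closed under composition, $\shf[1]\shf$ is again an upper adjoint occurring in $\mathbf{S}$, so $\shf{(G^{\dn\up})}$ is an intersection of sets of the form $\shf'{I_x}$ and is therefore its own ${}^{\dn\up}$-closure (compare the argument in the proof of Theorem~\ref{th:clos_sysop}). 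Equivalently, one checks directly that $\{A^{\up\dn};\, A\in L^Y\}$ coincides with $\{F^\up;\, F\subseteq X\times S_g\}$ and that the latter is closed under intersections and under all $\shf$'s, i.e.\ is an $\mathbf{S}$-closure system.

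With that in hand, \itm{1} $\Rightarrow$ \itm{2}: if $I \models^\mathbf{S} A\I B$, then every $I_x$ is an $\mathbf{S}$-model of $\{A\I B\}$; since $M^{\up\dn}$ is an intersection of $\mathbf{L}$-sets of the form $\shf{I_x}$ and $\mathbf{S}$-models are closed under applying upper adjoints (Theorem~\ref{th:Mod_clos}) and under intersections, $M^{\up\dn}$ is again an $\mathbf{S}$-model of $\{A\I B\}$, i.e.\ $M^{\up\dn}\models^\mathbf{S} A\I B$. The step \itm{2} $\Rightarrow$ \itm{3} is a trivial instance. For \itm{3} $\Rightarrow$ \itm{4}, apply \itm{3} with the pair $\langle\one,\one\rangle\in S$: from $\one{A}=A\subseteq A^{\up\dn}$ (extensivity) we obtain $B=\one{B}\subseteq A^{\up\dn}$. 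For \itm{4} $\Rightarrow$ \itm{5}: $B\subseteq A^{\up\dn}$ gives, by antitony of ${}^{\dn}$, that $A^{\up\dn\dn}\subseteq B^{\dn}$, and then $A^{\dn}\subseteq A^{\dn\up\dn}=A^{\up\dn\dn}$ (here I use $A^{\dn}=A^{\dn\up\dn}$, a standard identity for Galois connections), so $A^{\dn}\subseteq B^{\dn}$.

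The one genuinely substantive implication is \itm{5} $\Rightarrow$ \itm{1}, and this is where I expect the main work. Assume $A^{\dn}\subseteq B^{\dn}$ and fix $x\in X$; I must show $I_x\models^\mathbf{S} A\I B$, i.e.\ for every $\langle\mul,\shf\rangle\in S$, $\mul{A}\subseteq I_x$ implies $\mul{B}\subseteq I_x$. Suppose $\mul{A}\subseteq I_x$; by the adjointness~\eqref{eqn:gal} this is equivalent to $A\subseteq\shf{I_x}$, which by the definition~\eqref{eqn:dn} of ${}^{\dn}$ says precisely $\langle x,\shf\rangle\in A^{\dn}$. The hypothesis then gives $\langle x,\shf\rangle\in B^{\dn}$, i.e.\ $B\subseteq\shf{I_x}$, which by~\eqref{eqn:gal} again means $\mul{B}\subseteq I_x$, as desired. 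Thus the whole implication reduces to unwinding the definitions of ${}^{\dn}$ and of $\mathbf{S}$-truth through the adjunction; the only subtlety to watch is that $S_g$ may contain the same operator $\shf$ arising from different lower adjoints $\mul$, but since the truth condition is phrased purely in terms of the pair and~\eqref{eqn:gal} is symmetric in the two equivalent formulations, this causes no difficulty. I would also remark after the proof that \itm{5} exhibits the classical reformulation "$A\I B$ holds in the data iff the ``extent'' of $B$ contains that of $A$", and that the equivalence \itm{4} $\Leftrightarrow$ \itm{5} together with the fact that $A^{\up\dn}$ is the least $\mathbf{S}$-model of the data-theory containing $A$ connects this theorem to Theorem~\ref{th:sement}.
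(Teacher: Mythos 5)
Your proposal is correct and follows essentially the same route as the paper: the same cyclic chain with the same trivial steps \itm{2}$\Rightarrow$\itm{3}$\Rightarrow$\itm{4}, the same use of the antitone Galois connection for \itm{4}$\Rightarrow$\itm{5}, and an identical unwinding of \eqref{eqn:dn} and \eqref{eqn:gal} for \itm{5}$\Rightarrow$\itm{1}. The only (cosmetic) difference is that for \itm{1}$\Rightarrow$\itm{2} you cite Theorem~\ref{th:Mod_clos} (closure of $\mathrm{Mod}^\mathbf{S}(\{A\I B\})$ under intersections and upper adjoints) instead of redoing the composition computation inline as the paper does, and your preliminary paragraph on ${}^{\dn\up}$ being an $\mathbf{S}$-closure operator is not actually needed for the argument; also note that under \eqref{eqn:compose} the composite you use should be written $\shf\shf[1]$ rather than $\shf[1]\shf$.
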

\begin{proof}
  First, observe that ``\itm{2}\,$\Rightarrow$\,\itm{3}'' is trivial,
  ``\itm{3}\,$\Rightarrow$\,\itm{4}'' follows immediately for $\mul = \one$,
  and ``\itm{4}\,$\Rightarrow$\,\itm{5}'' is a consequence of the fact that
  $\langle {}^\up,{}^\dn\rangle$ is an antitone Galois connection. Hence,
  it remains to prove that \itm{1} implies \itm{2} and that
  \itm{5} implies \itm{1}.

  Suppose that \itm{1} is satisfied.
  Take $\langle\mul[1],\shf[1]\rangle \in S$ and $M \in L^Y$ such that
  $\mul[1]{A} \subseteq M^{\dn\up}$. Using~\eqref{eqn:dnup}, the last
  inclusion means that $\mul[1]{A} \subseteq \shf[2]{I_x}$ for all
  $x \in X$ and $\langle\mul[2],\shf[2]\rangle \in S$ such that
  $M \subseteq \shf[2]{I_x}$. Since $I \models^\mathbf{S} A \I B$,
  it then follows that $\mul[1]{B} \subseteq \shf[2]{I_x}$ for all
  $x \in X$ and $\langle\mul[2],\shf[2]\rangle \in S$ such that
  $M \subseteq \shf[2]{I_x}$. Hence, \eqref{eqn:dnup} gives
  $\mul[1]{B} \subseteq M^{\dn\up}$, proving~\itm{2}.

  Finally, suppose that \itm{5} is satisfied.
  Using~\eqref{eqn:dn},
  $A^{\dn} \subseteq B^{\dn}$ yields that for all $x \in X$ and
  $\langle\mul,\shf\rangle \in S$:
  $A \subseteq \shf{I_x}$ implies $B \subseteq \shf{I_x}$,
  i.e., $\mul{A} \subseteq I_x$ implies $\mul{B} \subseteq I_x$,
  proving~\itm{1}.
\end{proof}

The rest of this section is devoted to determining bases of FAIs. That is,
given $\langle X,Y,I\rangle$, we wish to find non-redudnant sets of FAIs
which entail exactly all FAIs which are true in $\langle X,Y,I\rangle$
under $\mathbf{S}$. In a similar way as in the case of parameterizations
by hedges~\cite[Section 5]{BeVy:ADfDwG}, we show that all properties
necessary to determine bases hold for any parameterization $\mathbf{S}$.

\begin{definition}
  Let $\langle X,Y,I\rangle$ be an $\mathbf{L}$-context.
  A set $\Sigma$ of FAIs is called
  \emph{$\mathbf{S}$-complete in $\langle X,Y,I\rangle$}
  whenever, for all $A,B \in L^Y$, $\Sigma \models^\mathbf{S} A \I B$ if{}f
  $I \models^\mathbf{S} A \I B$. Furthermore, $\Sigma$ is called an
  \emph{$\mathbf{S}$-base of $\langle X,Y,I\rangle$} if it is
  $\mathbf{S}$-complete in $\langle X,Y,I\rangle$ and no
  $\Sigma' \subset \Sigma$ is $\mathbf{S}$-complete in $\langle X,Y,I\rangle$.
\end{definition}

\begin{theorem}\label{th:dcompl}
  Let $\langle X,Y,I\rangle$ be an $\mathbf{L}$-context and $\Sigma$ be
  a set of FAIs in $Y$. Then, the following conditions are equivalent:
  \begin{enumerate}\parskip=-2pt
  \item[\itm{1}]
    $\Sigma$ is $\mathbf{S}$-complete in $\langle X,Y,I\rangle$,
  \item[\itm{2}]
    $\mathrm{Mod}^\mathbf{S}(\Sigma) = \mathcal{S}_{\dn\up}$,
  \item[\itm{3}]
    $[M]^\mathbf{S}_\Sigma = M^{\dn\up}$ for all $M \in L^Y$.
  \end{enumerate}
\end{theorem}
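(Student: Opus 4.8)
The plan is to prove the cycle $\itm{1}\Rightarrow\itm{3}\Rightarrow\itm{2}\Rightarrow\itm{1}$, leaning on the characterizations already established in Theorems~\ref{th:sement} and~\ref{th:Itrue} together with the correspondence between $\mathbf{S}$-closure operators and $\mathbf{S}$-closure systems from Theorem~\ref{th:clos_sysop}. The conceptual point is that $\mathbf{S}$-completeness says exactly that $\Sigma$ and $I$ entail the same FAIs; by Theorem~\ref{th:sement} the entailment from $\Sigma$ is controlled by the closure operator $[{\cdots}]^\mathbf{S}_\Sigma$, and by Theorem~\ref{th:Itrue} the FAIs true in $I$ are controlled by the closure operator ${}^{\dn\up}$. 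So completeness ought to be equivalent to the two operators agreeing, which is $\itm{3}$, and since an $\mathbf{S}$-closure operator is determined by its system of fixed points (and vice versa), this should in turn be equivalent to $\mathrm{Mod}^\mathbf{S}(\Sigma) = \mathcal{S}_{\dn\up}$, which is $\itm{2}$. The one genuinely new thing to check along the way is that ${}^{\dn\up}$ is not merely a closure operator but an $\mathbf{S}$-closure operator, equivalently that $\mathcal{S}_{\dn\up} = \mathrm{Ran}({}^{\dn\up})$ is an $\mathbf{S}$-closure system; this is what makes Theorem~\ref{th:clos_sysop} applicable and is the crux of the argument.

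First I would establish that $\mathcal{S}_{\dn\up}$ is an $\mathbf{S}$-closure system. Closure under arbitrary intersections is immediate since it is the range of a closure operator. For closure under upper adjoints: if $M = M^{\dn\up}$, i.e.\ $M = \bigcap\{\shf[2]{I_x};\, M \subseteq \shf[2]{I_x}\}$, then for any fixed $\langle\mul[1],\shf[1]\rangle\in S$ one computes $\shf[1]{M}$ using $\shf[1]{\bigcap_i B_i} = \bigcap_i \shf[1]{B_i}$ (the dual of~\eqref{eqn:mul_distr}) and the fact that $S$ is closed under composition: each $\shf[1]\shf[2]{I_x}$ is again of the form $\shf{I_x}$ with $\langle\mul,\shf\rangle = \langle\mul[2]\mul[1],\shf[1]\shf[2]\rangle\in S$, and one checks that the indexing condition $M\subseteq\shf[2]{I_x}$ is equivalent, via the Galois property~\eqref{eqn:gal}, to $\shf[1]{M}\subseteq\shf[1]\shf[2]{I_x}$ — so $\shf[1]{M}$ is exactly an intersection of sets $\shf{I_x}$ containing it, hence lies in $\mathcal{S}_{\dn\up}$. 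This is the same maneuver used in the proof of Theorem~\ref{th:Mod_clos}, so it is routine but is the one place where the monoid structure of $\mathbf{S}$ is essential. Once this is in hand, Theorem~\ref{th:clos_sysop} tells us $\C[\mathcal{S}_{\dn\up}] = {}^{\dn\up}$ is an $\mathbf{S}$-closure operator and $\mathcal{S}_{\C[\mathcal{S}_{\dn\up}]} = \mathcal{S}_{\dn\up}$.

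Next, for $\itm{1}\Rightarrow\itm{3}$: assume $\Sigma$ is $\mathbf{S}$-complete. For an arbitrary $M\in L^Y$, Theorem~\ref{th:Itrue}\itm{4} gives $I\models^\mathbf{S} M \I M^{\dn\up}$, hence $\Sigma\models^\mathbf{S} M\I M^{\dn\up}$, hence by Theorem~\ref{th:sement}\itm{4} we get $M^{\dn\up}\subseteq[M]^\mathbf{S}_\Sigma$. Conversely, for each $x\in X$ and $\langle\mul,\shf\rangle\in S$ with $M\subseteq\shf{I_x}$, Theorem~\ref{th:Itrue} (applied to the FAI $M\I[M]^\mathbf{S}_\Sigma$, which $\Sigma$ — hence $I$ — entails because $M\subseteq[M]^\mathbf{S}_\Sigma$ is a fixed point-type containment, giving $\Sigma\models^\mathbf{S} M\I[M]^\mathbf{S}_\Sigma$ by Theorem~\ref{th:sement}\itm{4}) yields $[M]^\mathbf{S}_\Sigma\subseteq\shf{I_x}$; intersecting over all such $x$ and $\shf$ gives $[M]^\mathbf{S}_\Sigma\subseteq M^{\dn\up}$. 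So the two operators coincide. For $\itm{3}\Rightarrow\itm{2}$: if $[{\cdots}]^\mathbf{S}_\Sigma = {}^{\dn\up}$ as operators then their fixed-point systems coincide; but $\mathrm{Mod}^\mathbf{S}(\Sigma)$ is the fixed-point system of $[{\cdots}]^\mathbf{S}_\Sigma$ (this is how $[{\cdots}]^\mathbf{S}_\Sigma$ was defined, via Theorem~\ref{th:clos_sysop} applied to the $\mathbf{S}$-closure system $\mathrm{Mod}^\mathbf{S}(\Sigma)$ from Theorem~\ref{th:Mod_clos}) and $\mathcal{S}_{\dn\up}$ is the fixed-point system of ${}^{\dn\up}$, so $\mathrm{Mod}^\mathbf{S}(\Sigma) = \mathcal{S}_{\dn\up}$. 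Finally $\itm{2}\Rightarrow\itm{1}$: for any $A\I B$, using Theorem~\ref{th:sement}\itm{1}$\Leftrightarrow$\itm{2} and Theorem~\ref{th:Itrue}\itm{1}$\Leftrightarrow$\itm{2},
\[
  \Sigma\models^\mathbf{S} A\I B
  \iff \forall M\colon [M]^\mathbf{S}_\Sigma\models^\mathbf{S} A\I B
  \iff \forall N\in\mathrm{Mod}^\mathbf{S}(\Sigma)\colon N\models^\mathbf{S} A\I B,
\]
and similarly $I\models^\mathbf{S} A\I B \iff \forall N\in\mathcal{S}_{\dn\up}\colon N\models^\mathbf{S} A\I B$ since $\mathcal{S}_{\dn\up}$ is exactly the set of values $M^{\dn\up}$; the hypothesis $\mathrm{Mod}^\mathbf{S}(\Sigma) = \mathcal{S}_{\dn\up}$ then forces the two entailment relations to agree, which is $\mathbf{S}$-completeness. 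The main obstacle, as noted, is the lemma that $\mathcal{S}_{\dn\up}$ is an $\mathbf{S}$-closure system — once that is secured the rest is bookkeeping with the two prior theorems; one should also be slightly careful in $\itm{1}\Rightarrow\itm{3}$ that the FAIs being fed into Theorem~\ref{th:Itrue} are genuinely entailed by $I$, which is where $\mathbf{S}$-completeness (not just soundness) is used.
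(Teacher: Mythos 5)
Your proposal is correct and is essentially the paper's own argument: the link between \itm{1} and \itm{3} is obtained by shuttling the trivial containments $M^{\dn\up} \subseteq M^{\dn\up}$ and $[M]^\mathbf{S}_\Sigma \subseteq [M]^\mathbf{S}_\Sigma$ back and forth through Theorem~\ref{th:sement} and Theorem~\ref{th:Itrue} (using $\mathbf{S}$-completeness in both directions), and the link between \itm{2} and \itm{3} is just the observation that $\mathrm{Mod}^\mathbf{S}(\Sigma)$ and $\mathcal{S}_{\dn\up}$ are precisely the fixed-point systems of the closure operators $[{\cdots}]^\mathbf{S}_\Sigma$ and ${}^{\dn\up}$. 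One correction to your commentary rather than to the proof: the lemma you call the crux---that $\mathcal{S}_{\dn\up}$ is an $\mathbf{S}$-closure system---is never actually invoked in your cycle \itm{1}$\Rightarrow$\itm{3}$\Rightarrow$\itm{2}$\Rightarrow$\itm{1} (all that is used is that ${}^{\dn\up}$ is a closure operator whose fixed-point set is $\mathcal{S}_{\dn\up}$, which the paper already has from the antitone Galois connection), and within your sketch of it the asserted equivalence of $M \subseteq \shf[2]{I_x}$ and $\shf[1]{M} \subseteq \shf[1]\shf[2]{I_x}$ is in general only a one-way implication (monotonicity of $\shf[1]$), which fortunately is the only direction that argument needs.
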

\begin{proof}
  Clearly, \itm{2} and \itm{3} are equivalent because
  $\mathrm{Mod}^\mathbf{S}(\Sigma)$ and $\mathcal{S}_{\dn\up}$ (the set of
  all fixed points of the closure operator ${}^{\dn\up}$) coincide
  if and only if the fixed points generated by any $M \in L^Y$ coincide.
  Furthermore, \itm{3} implies \itm{1}. Indeed, for any $A \I B$,
  using Theorem~\ref{th:sement}, we have $\Sigma \models^\mathbf{S} A \I B$
  if{}f $B \subseteq [A]^\mathbf{S}_\Sigma = A^{\dn\up}$ which is according
  to Theorem~\ref{th:Itrue} true if{}f $I \models^\mathbf{S} A \I B$,
  proving~\itm{1}. Therefore, it suffices to prove that \itm{1} implies \itm{3}.
  Take any $M \in L^Y$. Since $M^{\dn\up} \subseteq M^{\dn\up}$,
  Theorem~\ref{th:Itrue}
  gives $I \models^\mathbf{S} M \I M^{\dn\up}$ and so
  $\Sigma \models^\mathbf{S} M \I M^{\dn\up}$, showing
  $M^{\dn\up} \subseteq [M]^\mathbf{S}_\Sigma$ on account of
  Theorem~\ref{th:sement}. The converse inclusion can be proved in much
  the same way: $[M]^\mathbf{S}_\Sigma \subseteq [M]^\mathbf{S}_\Sigma$
  gives $\Sigma \models^\mathbf{S} M \I [M]^\mathbf{S}_\Sigma$ by
  Theorem~\ref{th:sement} and so we have
  $I \models^\mathbf{S} M \I [M]^\mathbf{S}_\Sigma$ which yields
  $[M]^\mathbf{S}_\Sigma \subseteq M^{\dn\up}$ owing to Theorem~\ref{th:Itrue}.
\end{proof}

\begin{theorem}\label{th:base}
  Let $\langle X,Y,I\rangle$ be an $\mathbf{L}$-context and $\Sigma$ be
  a set of FAIs which is $\mathbf{S}$-complete in~$\langle X,Y,I\rangle$.
  Then, the following conditions are equivalent:
  \begin{enumerate}\parskip=-2pt
  \item[\itm{1}]
    $\Sigma$ is an $\mathbf{S}$-base of $\langle X,Y,I\rangle$,
  \item[\itm{2}]
    $\Sigma \setminus \{A \I B\} \nmodels^\mathbf{S} A \I B$\/
    for all $A \I B \in \Sigma$,
  \item[\itm{3}]
    $[A]^\mathbf{S}_{\Sigma \setminus \{A \I B\}} \subset
    [A]^\mathbf{S}_\Sigma$\,
    for all $A \I B \in \Sigma$.
  \end{enumerate}
\end{theorem}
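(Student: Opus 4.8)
The plan is to show the chain of equivalences $\itm{1} \Leftrightarrow \itm{2} \Leftrightarrow \itm{3}$, mimicking the standard argument for non-redundant bases but using the machinery already established for general parameterizations. I would first dispatch $\itm{2} \Leftrightarrow \itm{3}$, since this is essentially a restatement via Theorem~\ref{th:sement}. Indeed, for a fixed $A \I B \in \Sigma$, write $\Sigma' = \Sigma \setminus \{A \I B\}$. By Theorem~\ref{th:sement}, $\Sigma' \nmodels^\mathbf{S} A \I B$ is equivalent to $B \nsubseteq [A]^\mathbf{S}_{\Sigma'}$. Now $\Sigma' \subseteq \Sigma$ yields $\mathrm{Mod}^\mathbf{S}(\Sigma) \subseteq \mathrm{Mod}^\mathbf{S}(\Sigma')$, hence the monotony of least models gives $[A]^\mathbf{S}_{\Sigma'} \subseteq [A]^\mathbf{S}_\Sigma$ always. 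On the other hand, since $A \I B \in \Sigma$, we have $\Sigma \models^\mathbf{S} A \I B$, so Theorem~\ref{th:sement}~$\itm{4}$ gives $B \subseteq [A]^\mathbf{S}_\Sigma$. Therefore $B \nsubseteq [A]^\mathbf{S}_{\Sigma'}$ holds if and only if $[A]^\mathbf{S}_{\Sigma'} \neq [A]^\mathbf{S}_\Sigma$, i.e.\ if and only if the inclusion $[A]^\mathbf{S}_{\Sigma'} \subseteq [A]^\mathbf{S}_\Sigma$ is proper. This is exactly $\itm{3}$, and the argument is symmetric in both directions once $A \I B$ ranges over all of $\Sigma$.

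Next I would handle $\itm{1} \Rightarrow \itm{2}$. Suppose $\itm{2}$ fails, so there is some $A \I B \in \Sigma$ with $\Sigma' := \Sigma \setminus \{A \I B\} \models^\mathbf{S} A \I B$. The goal is to show $\Sigma'$ is still $\mathbf{S}$-complete in $\langle X,Y,I\rangle$, contradicting minimality in $\itm{1}$. For any FAI $C \I D$: if $\Sigma' \models^\mathbf{S} C \I D$, then $\Sigma \models^\mathbf{S} C \I D$ by monotony, hence $I \models^\mathbf{S} C \I D$ by $\mathbf{S}$-completeness of $\Sigma$. Conversely, if $I \models^\mathbf{S} C \I D$, then $\Sigma \models^\mathbf{S} C \I D$; since every model of $\Sigma'$ is a model of $A \I B$ (by the assumption $\Sigma' \models^\mathbf{S} A \I B$) and a model of every other formula in $\Sigma$, every model of $\Sigma'$ is a model of $\Sigma$, so $\mathrm{Mod}^\mathbf{S}(\Sigma') = \mathrm{Mod}^\mathbf{S}(\Sigma)$ and thus $\Sigma' \models^\mathbf{S} C \I D$. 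Hence $\Sigma'$ is $\mathbf{S}$-complete, contradicting $\itm{1}$.

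For the remaining direction $\itm{2} \Rightarrow \itm{1}$, assume $\itm{2}$ holds; since $\Sigma$ is already assumed $\mathbf{S}$-complete, it suffices to show no proper subset $\Sigma'' \subset \Sigma$ is $\mathbf{S}$-complete. Pick $A \I B \in \Sigma \setminus \Sigma''$; then $\Sigma'' \subseteq \Sigma \setminus \{A \I B\}$, so by monotony $\Sigma'' \models^\mathbf{S} A \I B$ would imply $\Sigma \setminus \{A \I B\} \models^\mathbf{S} A \I B$, contradicting $\itm{2}$. So $\Sigma'' \nmodels^\mathbf{S} A \I B$. But $A \I B \in \Sigma$ and $\Sigma$ is $\mathbf{S}$-complete, so $I \models^\mathbf{S} A \I B$; thus $\Sigma''$ fails to entail a FAI true in $\langle X,Y,I\rangle$, so $\Sigma''$ is not $\mathbf{S}$-complete. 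Hence $\Sigma$ is an $\mathbf{S}$-base. I expect the only mildly delicate point is the observation $\mathrm{Mod}^\mathbf{S}(\Sigma') = \mathrm{Mod}^\mathbf{S}(\Sigma)$ when $\Sigma' \models^\mathbf{S} A \I B$ — this is where one must be careful that semantic entailment under $\mathbf{S}$ behaves like ordinary entailment with respect to model classes, which follows directly from Definition~\ref{def:truth}; everything else is routine monotonicity bookkeeping together with repeated appeals to Theorems~\ref{th:sement} and~\ref{th:Itrue}.
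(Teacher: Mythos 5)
Your proof is correct, and it reaches the conclusion by a genuinely different decomposition than the paper. The paper closes the cycle \itm{1}$\,\Rightarrow\,$\itm{2}$\,\Rightarrow\,$\itm{3}$\,\Rightarrow\,$\itm{1}: for \itm{1}$\,\Rightarrow\,$\itm{2} it invokes Theorem~\ref{th:dcompl} to get $\mathrm{Mod}^\mathbf{S}(\Sigma) \subset \mathrm{Mod}^\mathbf{S}(\Sigma \setminus \{A \I B\})$ and extracts a witnessing model, and for \itm{3}$\,\Rightarrow\,$\itm{1} it must work with the auxiliary formula $A \I [A]^\mathbf{S}_\Sigma$ (true in the data by completeness, yet not entailed by any proper subset missing $A \I B$), because \itm{3} alone does not tell you that $B \nsubseteq [A]^\mathbf{S}_{\Sigma \setminus \{A \I B\}}$. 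You instead prove \itm{1}$\,\Leftrightarrow\,$\itm{2} and \itm{2}$\,\Leftrightarrow\,$\itm{3}: this lets your \itm{2}$\,\Rightarrow\,$\itm{1} use the removed formula $A \I B$ itself as the witness of incompleteness, and your \itm{1}$\,\Rightarrow\,$\itm{2} is a contrapositive model-class argument (if $\Sigma \setminus \{A \I B\} \models^\mathbf{S} A \I B$ then $\mathrm{Mod}^\mathbf{S}(\Sigma \setminus \{A \I B\}) = \mathrm{Mod}^\mathbf{S}(\Sigma)$, so the smaller set is still $\mathbf{S}$-complete), avoiding Theorem~\ref{th:dcompl} altogether; both routes ultimately rest on the same least-model characterization, Theorem~\ref{th:sement}. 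One spot to tighten: in your \itm{2}$\,\Leftrightarrow\,$\itm{3} paragraph, writing $\Sigma'$ for $\Sigma \setminus \{A \I B\}$, the direction ``$[A]^\mathbf{S}_{\Sigma'} \neq [A]^\mathbf{S}_\Sigma$ implies $B \nsubseteq [A]^\mathbf{S}_{\Sigma'}$'' does not follow from the two inclusions you list alone; you additionally need that $B \subseteq [A]^\mathbf{S}_{\Sigma'}$ gives $\Sigma' \models^\mathbf{S} A \I B$ by Theorem~\ref{th:sement}, hence $\mathrm{Mod}^\mathbf{S}(\Sigma') = \mathrm{Mod}^\mathbf{S}(\Sigma)$ and so $[A]^\mathbf{S}_{\Sigma'} = [A]^\mathbf{S}_\Sigma$. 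That is exactly the observation you justify in your next paragraph, so the repair is a single sentence rather than a new idea, but as written the biconditional there is stated faster than it is proved.
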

\begin{proof}
  In order to see that \itm{1} implies \itm{2}, take any $A \I B \in \Sigma$
  and observe that $\Sigma \setminus \{A \I B\}$ is not $\mathbf{S}$-complete in
  $\langle X,Y,I\rangle$. Hence, $\mathrm{Mod}^\mathbf{S}(\Sigma) \subset
  \mathrm{Mod}^\mathbf{S}(\Sigma \setminus \{A \I B\})$ on account of
  Theorem~\ref{th:dcompl}.
  Take $M \in \mathrm{Mod}^\mathbf{S}(\Sigma \setminus \{A \I B\})$
  such that $M \not\in \mathrm{Mod}^\mathbf{S}(\Sigma)$.
  We have $M \nmodels^\mathbf{S} A \I B$ because otherwise
  we would obtain $M \in \mathrm{Mod}^\mathbf{S}(\Sigma)$.
  Therefore, $\Sigma \setminus \{A \I B\} \nmodels^\mathbf{S} A \I B$.

  Now, assume that \itm{2} is satisfied.
  The fact $\Sigma \setminus \{A \I B\} \nmodels^\mathbf{S} A \I B$ means
  $B \nsubseteq [A]^\mathbf{S}_{\Sigma \setminus \{A \I B\}}$ owing to
  Theorem~\ref{th:sement}. Since $\Sigma \models^\mathbf{S} A \I B$ trivially
  because $A \I B \in \Sigma$, we get $B \subseteq [A]^\mathbf{S}_{\Sigma}$.
  Moreover, $[A]^\mathbf{S}_{\Sigma \setminus \{A \I B\}} \subseteq
  [A]^\mathbf{S}_{\Sigma}$ together with $B \subseteq [A]^\mathbf{S}_{\Sigma}$
  and $B \nsubseteq [A]^\mathbf{S}_{\Sigma \setminus \{A \I B\}}$ yield
  $[A]^\mathbf{S}_{\Sigma \setminus \{A \I B\}} \subset [A]^\mathbf{S}_\Sigma$,
  proving~\itm{3}.

  Finally, assume that \itm{3} is satisfied and let any
  $\Sigma' \subset \Sigma$.
  Take $A \I B \in \Sigma$ such that $A \I B \not\in \Sigma'$. We have
  $\Sigma \models^\mathbf{S} A \I [A]^\mathbf{S}_\Sigma$
  on account of Theorem~\ref{th:sement}. On the other hand,
  $[A]^\mathbf{S}_{\Sigma \setminus \{A \I B\}} \subset [A]^\mathbf{S}_\Sigma$
  means $[A]^\mathbf{S}_\Sigma \nsubseteq
  [A]^\mathbf{S}_{\Sigma \setminus \{A \I B\}}$ and so 
  $\Sigma \setminus \{A \I B\} \nmodels^\mathbf{S} A \I [A]^\mathbf{S}_\Sigma$
  by Theorem~\ref{th:sement}.
  As a consequence, $\Sigma' \nmodels^\mathbf{S} A \I [A]^\mathbf{S}_\Sigma$.
  Therefore, $\Sigma'$ is not $\mathbf{S}$-complete in $\langle X,Y,I\rangle$,
  proving \itm{1}.
\end{proof}

Particular sets of FAIs which are $\mathbf{S}$-complete in given data and
can be used to find bases by removing redundant formulas are given by systems
of $\mathbf{L}$-sets which are based on a generalized concept of
a pseudo-intent~\cite{GuDu}.

\begin{definition}\label{def:P}
  An $\mathbf{L}$-set $P \in L^Y$ is an
  \emph{$\mathbf{S}$-pseudo intent} of $\langle X,Y,I\rangle$ whenever
  $P \subset P^{\dn\up}$ and for each $\mathbf{S}$-pseudo
  intent $Q \subset P$ of $\langle X,Y,I\rangle$,
  we have $Q^{\dn\up} \subseteq P$.
\end{definition}

\begin{theorem}\label{th:pseudos}
  If\/ $\mathbf{L}$ and $Y$ are finite,
  then $\Sigma_I = \{P \I P^{\dn\up};\,
  P \text{ is $\mathbf{S}$-pseudo intent of $\langle X,Y,I\rangle$}\}$
  is $\mathbf{S}$-complete in $\langle X,Y,I\rangle$.
\end{theorem}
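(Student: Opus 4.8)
The plan is to verify the two defining conditions of $\mathbf{S}$-completeness from the definition: that $\Sigma_I \models^\mathbf{S} A \I B$ implies $I \models^\mathbf{S} A \I B$ (soundness), and conversely that $I \models^\mathbf{S} A \I B$ implies $\Sigma_I \models^\mathbf{S} A \I B$ (completeness). For soundness I would argue that every $I_x$ is an $\mathbf{S}$-model of $\Sigma_I$: by Theorem~\ref{th:Itrue} it suffices to check $I \models^\mathbf{S} P \I P^{\dn\up}$ for each $\mathbf{S}$-pseudo intent $P$, which is immediate since $P^{\dn\up} \subseteq P^{\dn\up}$. Hence $\mathrm{Mod}^\mathbf{S}(\Sigma_I) \supseteq \{I_x ; x \in X\}$, and more importantly $\mathrm{Mod}^\mathbf{S}(\Sigma_I) \supseteq \mathcal{S}_{\dn\up}$ since every fixed point of ${}^{\dn\up}$ is an intersection of the $\shf{I_x}$'s and $\mathrm{Mod}^\mathbf{S}(\Sigma_I)$ is an $\mathbf{S}$-closure system (Theorem~\ref{th:Mod_clos}); this gives one inclusion of models, from which soundness follows via Theorems~\ref{th:sement} and~\ref{th:Itrue}.

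The substantive direction is completeness, and by Theorem~\ref{th:dcompl} it is equivalent to showing $\mathrm{Mod}^\mathbf{S}(\Sigma_I) = \mathcal{S}_{\dn\up}$, i.e.\ (given the inclusion just established) that every $\mathbf{S}$-model of $\Sigma_I$ is a fixed point of ${}^{\dn\up}$. So suppose $M \in \mathrm{Mod}^\mathbf{S}(\Sigma_I)$ but $M \subset M^{\dn\up}$, and aim for a contradiction. The idea is to pick a minimal counterexample: since $\mathbf{L}$ and $Y$ are finite, $L^Y$ is finite, so among all $N$ with $N \subset N^{\dn\up}$ and $N \not\in \mathrm{Mod}^\mathbf{S}(\Sigma_I)$ choose one, call it $P$, that is minimal with respect to $\subseteq$ (if $M$ itself is a model we instead run the argument to show $M$ must already be closed). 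I would then show $P$ is in fact an $\mathbf{S}$-pseudo intent: it satisfies $P \subset P^{\dn\up}$ by choice, and for any $\mathbf{S}$-pseudo intent $Q \subset P$ we need $Q^{\dn\up} \subseteq P$ — here minimality of $P$ forces every proper subset $Q$ with $Q \subset Q^{\dn\up}$ to be a model of $\Sigma_I$ (a genuine pseudo intent among them must have its closure inside any model of $\Sigma_I$ that contains it, in particular inside $P$), using that $P \I P^{\dn\up} \in \Sigma_I$ is available once $P$ is shown to be a pseudo intent.

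The delicate point — and the one I expect to be the main obstacle — is getting this minimality/induction argument to close cleanly: one must argue that $Q \subseteq P$ and $Q \I Q^{\dn\up} \in \Sigma_I$ together with $P$ being an $\mathbf{S}$-model of $\Sigma_I$ force $Q^{\dn\up} \subseteq P$, which requires knowing that $P$ validates $Q \I Q^{\dn\up}$ in the sense of Definition~\ref{def:truth}, i.e.\ that $\mul{Q} \subseteq P$ implies $\mul{Q^{\dn\up}} \subseteq P$ for all $\langle\mul,\shf\rangle \in S$; taking $\mul = \one$ and using $Q \subseteq P$ yields $Q^{\dn\up} \subseteq P$ as needed. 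Once $P$ is identified as an $\mathbf{S}$-pseudo intent, the implication $P \I P^{\dn\up}$ lies in $\Sigma_I$, so from $P \in \mathrm{Mod}^\mathbf{S}(\Sigma_I)$ and $\one{P} = P \subseteq P$ we obtain $P^{\dn\up} = \one{P^{\dn\up}} \subseteq P$, contradicting $P \subset P^{\dn\up}$. This contradiction shows there is no $\mathbf{S}$-model of $\Sigma_I$ that fails to be ${}^{\dn\up}$-closed, establishing $\mathrm{Mod}^\mathbf{S}(\Sigma_I) = \mathcal{S}_{\dn\up}$ and hence, by Theorem~\ref{th:dcompl}, the $\mathbf{S}$-completeness of $\Sigma_I$. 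Finiteness of $\mathbf{L}$ and $Y$ is used precisely to guarantee the minimal counterexample exists and that the recursion on pseudo intents is well-founded.
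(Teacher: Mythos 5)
Your proposal is essentially the paper's proof: both reduce the claim via Theorem~\ref{th:dcompl} to $\mathrm{Mod}^\mathbf{S}(\Sigma_I) = \mathcal{S}_{\dn\up}$, get $\mathcal{S}_{\dn\up} \subseteq \mathrm{Mod}^\mathbf{S}(\Sigma_I)$ from the closure-system results, and for the converse show that a non-closed $M \in \mathrm{Mod}^\mathbf{S}(\Sigma_I)$ would itself satisfy the second pseudo-intent condition (since $M \models^\mathbf{S} Q \I Q^{\dn\up}$ with $\mul = \one$ gives $Q^{\dn\up} \subseteq M$ for every $\mathbf{S}$-pseudo intent $Q \subset M$), hence be an $\mathbf{S}$-pseudo intent, whereupon $M \I M^{\dn\up} \in \Sigma_I$ forces $M^{\dn\up} \subseteq M$, a contradiction---exactly your final paragraph. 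The minimal-counterexample scaffolding in your middle paragraph is superfluous and, as written, selects from the wrong set (non-closed non-models instead of non-closed models); no minimality is needed, and finiteness of $\mathbf{L}$ and $Y$ enters only to make the recursive definition of $\mathbf{S}$-pseudo intents well-founded, just as the paper notes.
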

\begin{proof}
  As in the case of bivalent attribute implications~\cite{GaWi:FCA,GuDu},
  the finiteness of $\mathbf{L}$ and $Y$ ensures
  that $\mathbf{S}$-pseudo intents are well defined.
  Owing to Theorem~\ref{th:dcompl}, it suffices to check that 
  $\mathrm{Mod}^\mathbf{S}(\Sigma_I) = \mathcal{S}_{\dn\up}$. Evidently,
  we have $\mathcal{S}_{\dn\up} \subseteq \mathrm{Mod}^\mathbf{S}(\Sigma_I)$
  on account of Theorem~\ref{th:clos_Mod}. Thus, it suffices to prove the
  converse inclusion. Let $M \in \mathrm{Mod}^\mathbf{S}(\Sigma_I)$, i.e.,
  $M \models^\mathbf{S} P \I P^{\dn\up}$ for each $\mathbf{S}$-pseudo
  intent $P$ of $\langle X,Y,I\rangle$.
  Now, if $M$ were an $\mathbf{S}$-pseudo intent of $\langle X,Y,I\rangle$,
  we would get $M \not\in \mathrm{Mod}^\mathbf{S}(\Sigma_I)$
  since $M \nmodels^\mathbf{S} M \I M^{\dn\up}$. Therefore, $M$ is not
  an $\mathbf{S}$-pseudo intent of $\langle X,Y,I\rangle$.
  In addition, for every $\mathbf{S}$-pseudo intent
  $P \subset M$, $M \models^\mathbf{S} P \I P^{\dn\up}$ yields
  $P^{\dn\up} = \mul{P^{\dn\up}} \subseteq M$ for $\mul = \one$.
  Therefore, by Definition~\ref{def:P}, we must have $M = M^{\dn\up}$,
  i.e., $M \in \mathcal{S}_{\dn\up}$.
\end{proof}

Based on Theorem~\ref{th:pseudos}, we may determine an $\mathbf{S}$-base of
$\langle X,Y,I\rangle$ by first computing all $\mathbf{S}$-pseudo intents.
This can be done by any algorithm for computing fixed points of
fuzzy closure operators~\cite{BeBaOuVy:Lindig}
in lectical order~\cite{Ga:Tbaca}.
Then, Theorem~\ref{th:pseudos} yields that $\Sigma_I$
is complete. In case of $S = \{\langle\one,\one\rangle\}$, it can be shown
that it is in addition non-redundant and minimal in the number of
formulas~\cite[Theorem 5.20]{BeVy:ADfDwG}. This is a consequence of the fact
that for such $\mathbf{S}$, the semantics of FAIs corresponds to the
parameterization by globalization, see Example~\ref{ex:S}\,(a).
In general, $\Sigma_I$ is not an $\mathbf{S}$-base but applying
Theorem~\ref{th:base}\,\itm{2} and Theorem~\ref{th:sement}\,\itm{4},
we can determine its subset which is an $\mathbf{S}$-base by
removing all $P \I P^{\dn\up} \in \Sigma_I$ which are
redundant in $\Sigma_I$.

\begin{remark}\label{rem:clust}
  We have shown that the fixed points of ${}^{\dn\up}$ are useful in
  describing $\mathbf{S}$-bases of data. In addition, the fixed points
  may be seen as (fuzzy) clusters of attributes present in
  $\langle X,Y,I\rangle$. Indeed, following the usual interpretation
  of fixed points of concept-forming operators in formal concept
  analysis~\cite{GaWi:FCA}, $M^{\dn\up}$ is a (fuzzy) cluster of attributes
  (so-called \emph{intent} generated by $M$)
  shared by all objects $x \in X$ which have all the attributes in $M$;
  $M^{\dn\up}(y)$ is interpreted as the degree to
  which $y \in Y$ belongs to the cluster.
  When ordered by $\subseteq$ defined by~\eqref{eqn:crisp_sub},
  the set of all clusters in $\langle X,Y,I\rangle$ forms
  a complete lattice.
\end{remark}

\section{Complete Axiomatization and Approximate Inference}\label{sec:compl}
The semantic entailment under $\mathbf{S}$ is axiomatizable. Indeed, in this
section, we present a complete inference system and a particular notion of
provability which coincides with the semantic entailment under~$\mathbf{S}$.
Furthermore, in addition to the bivalent notion of
a semantic entailment under~$\mathbf{S}$,
we introduce its graded counterpart. That is, instead of
just considering $\Sigma \models^\mathbf{S} A \I B$ or 
$\Sigma \nmodels^\mathbf{S} A \I B$, we show there is a reasonable notion
of a degree to which $A \I B$ follows by $\Sigma$ under $\mathbf{S}$.
Interestingly, the degrees of semantic entailment can also be characterized
by a suitable notion of provability which can be derived from
the bivalent provability. In the following definition we utilize axioms
and the inference rule~\eqref{r:Cut} as they were presented in
Section~\ref{sec:pfai}.

\begin{definition}\label{def:synent}
  Let $\Sigma$ be a set of FAIs in $Y$ and $\mathbf{S}$ be a parameterization.
  An \emph{$\mathbf{S}$-proof of $A \I B$ by $\Sigma$}
  is a sequence $\varphi_1,\ldots,\varphi_n$ of
  FAIs such that $\varphi_n$ is $A \I B$ and, for every $i \in I$, $\varphi_i$
  is an axiom or $\varphi_i \in \Sigma$ or $\varphi_i$ results from some
  $\varphi_1,\ldots,\varphi_{i-1}$ using~\eqref{r:Cut} or using
  \begin{align}
    \dfrac{A \I B}{\mul{A} \I \mul{B}}
    \label{r:F}
  \end{align}
  for some $\langle\mul,\shf\rangle \in S$. If there is an
  $\mathbf{S}$-proof of $A \I B$ by $\Sigma$, we say that $A \I B$
  is \emph{$\mathbf{S}$-provable by $\Sigma$} and denote the fact by
  $\Sigma \vdash^\mathbf{S} A \I B$.
\end{definition}

The following soundness and completeness theorems are established.

\begin{theorem}\label{th:sound}
  If\/ $\Sigma \proves^\mathbf{S} A \I B$, then 
  $\Sigma \models^\mathbf{S} A \I B$.
\end{theorem}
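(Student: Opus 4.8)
The plan is the standard soundness induction on the length $n$ of an $\mathbf{S}$-proof $\varphi_1,\dots,\varphi_n$ of $A \I B$ by $\Sigma$ (Definition~\ref{def:synent}). I would prove $\Sigma \models^\mathbf{S} \varphi_i$ for every $i$ by (strong) induction on $i$, so that the case $i=n$ yields the claim; concretely, fixing an arbitrary $M \in \mathrm{Mod}^\mathbf{S}(\Sigma)$, I would check $M \models^\mathbf{S} \varphi_i$ in each of the four cases of Definition~\ref{def:synent}. If $\varphi_i \in \Sigma$, this is immediate. If $\varphi_i$ is an axiom $C \cup D \I C$, then for every $\langle\mul,\shf\rangle \in S$ monotony~\eqref{eqn:mon_mul} applied to $C \subseteq C \cup D$ gives $\mul{C} \subseteq \mul{C \cup D}$, so $\mul{C \cup D} \subseteq M$ forces $\mul{C} \subseteq M$; hence $M \models^\mathbf{S} C \cup D \I C$ (indeed for every $M \in L^Y$).

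For the cut rule~\eqref{r:Cut}, suppose $\varphi_i$ is $C \cup E \I F$ obtained from earlier formulas $C \I D$ and $D \cup E \I F$, for which $\Sigma \models^\mathbf{S} C \I D$ and $\Sigma \models^\mathbf{S} D \cup E \I F$ hold by the induction hypothesis. Given $\langle\mul,\shf\rangle \in S$ with $\mul{C \cup E} \subseteq M$, additivity~\eqref{eqn:mul_distr} rewrites this as $\mul{C} \cup \mul{E} \subseteq M$, so $\mul{C} \subseteq M$ and $\mul{E} \subseteq M$; the first premise yields $\mul{D} \subseteq M$, whence $\mul{D \cup E} = \mul{D} \cup \mul{E} \subseteq M$, and the second premise yields $\mul{F} \subseteq M$, giving $M \models^\mathbf{S} C \cup E \I F$. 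For rule~\eqref{r:F}, suppose $\varphi_i$ is $\mul[1]{C} \I \mul[1]{D}$ obtained from an earlier $C \I D$ via some $\langle\mul[1],\shf[1]\rangle \in S$, with $\Sigma \models^\mathbf{S} C \I D$. Fix an arbitrary $\langle\mul[2],\shf[2]\rangle \in S$ with $\mul[2]{\mul[1]{C}} \subseteq M$. Since $S$ is closed under composition, $\langle\mul[2],\shf[2]\rangle \circ \langle\mul[1],\shf[1]\rangle = \langle\mul[2]\mul[1],\shf[1]\shf[2]\rangle \in S$ by~\eqref{eqn:compose}, and $\mul[2]\mul[1]{C} = \mul[2]{\mul[1]{C}} \subseteq M$; applying $M \models^\mathbf{S} C \I D$ to this composed connection gives $\mul[2]\mul[1]{D} = \mul[2]{\mul[1]{D}} \subseteq M$. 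As $\langle\mul[2],\shf[2]\rangle$ was arbitrary, $M \models^\mathbf{S} \mul[1]{C} \I \mul[1]{D}$.

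Soundness being the easy half of the completeness theorem, I do not expect a genuine obstacle here. The step requiring most care is the case of rule~\eqref{r:F}: it is the only one that uses anything beyond the defining properties of a single \monoton e Galois connection, namely the closure of $S$ under composition --- part of Definition~\ref{def:Param} --- which is precisely what makes that rule sound, and is the structural reason parameterizations are required to be submonoids of the monoid of all \monoton e Galois connections in $\langle L^Y,\subseteq\rangle$. All other cases use only monotony~\eqref{eqn:mon_mul} and additivity~\eqref{eqn:mul_distr} of the lower adjoints, which hold in every \monoton e Galois connection.
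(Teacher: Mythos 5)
Your proof is correct and follows essentially the same route as the paper: verify the axioms hold in every $M$, check that \eqref{r:Cut} and \eqref{r:F} preserve truth in a fixed $\mathbf{S}$-model (with closure of $S$ under composition doing the work for \eqref{r:F}), and conclude by induction on proof length. The only cosmetic difference is that in the cut case you invoke additivity \eqref{eqn:mul_distr} of the lower adjoints where the paper passes through $\shf{M}$ via the adjunction \eqref{eqn:gal}; both are equivalent one-line justifications of the same step.
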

\begin{proof}
  Observe we have
  $M \models^\mathbf{S} A{\cup}B \I B$ for any $M \in L^Y$.
  Indeed, if $\mul{A{\cup}B} \subseteq M$ for $\langle\mul,\shf\rangle \in S$
  then owing to the \monoton y of $\mul$ and transitivity of $\subseteq$,
  we get $\mul{B} \subseteq \mul{A{\cup}B} \subseteq M$.

  We show that~\eqref{r:Cut} is a sound inference rule.
  Let $M \models^\mathbf{S} A \I B$ and $M \models^\mathbf{S} B{\cup}C \I D$.
  Suppose that for $\langle\mul,\shf\rangle \in S$, we have
  $\mul{A{\cup}C} \subseteq M$.
  Then, we also have $\mul{A} \subseteq M$ and $\mul{C} \subseteq M$
  because $\mul$ is \monoton e. Hence, $\mul{A} \subseteq M$ and 
  $M \models^\mathbf{S} A \I B$ yield $\mul{B} \subseteq M$.
  Now, $\mul{B} \subseteq M$ together with $\mul{C} \subseteq M$ give
  $B \subseteq \shf{M}$ and $C \subseteq \shf{M}$ and thus
  $B \cup C \subseteq \shf{M}$, i.e., $\mul{B{\cup}C} \subseteq M$.
  Using $\mul{B{\cup}C} \subseteq M$, we get $\mul{D} \subseteq M$ because 
  $M \models^\mathbf{S} B{\cup}C \I D$. As a consequence, if 
  $M \models^\mathbf{S} A \I B$ and $M \models^\mathbf{S} B{\cup}C \I D$,
  then $M \models^\mathbf{S} A{\cup}C \I D$.

  Moreover, \eqref{r:F} is sound:
  Let $M \models^\mathbf{S} A \I B$,
  $\langle \mul[1],\shf[1]\rangle \in S$,
  and $\langle \mul[2],\shf[2]\rangle \in S$.
  Clearly, if $\mul[1]{\mul[2]{A}} = \mul[1]\mul[2]{A} \subseteq M$,
  then $\mul[1]{\mul[2]{B}} = \mul[1]\mul[2]{B} \subseteq M$ because
  $M \models^\mathbf{S} A \I B$ and $\mul[1]\mul[2]$ is a composed operator
  in $\mathbf{S}$. Therefore, $M \models^\mathbf{S} \mul{A} \I \mul{B}$
  for any $\langle\mul,\shf\rangle \in S$.

  The rest follows by induction on the length of an $\mathbf{S}$-proof.
\end{proof}

\begin{theorem}\label{th:compl}
  Let $Y$ and $\mathbf{L}$ be finite.
  If\/ $\Sigma \models^\mathbf{S} A \I B$, then 
  $\Sigma \proves^\mathbf{S} A \I B$.
\end{theorem}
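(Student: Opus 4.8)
The plan is to argue by contraposition via the usual syntactic-closure (least-model) construction. Suppose $\Sigma \nproves^\mathbf{S} A \I B$; I will produce an $\mathbf{S}$-model of $\Sigma$ in which $A \I B$ fails, which by Definition~\ref{def:truth} gives $\Sigma \nmodels^\mathbf{S} A \I B$. The natural candidate for this countermodel is the \emph{syntactic closure} of $A$, namely the $\mathbf{L}$-set
\begin{align}
  \langle A\rangle_\Sigma
  = \textstyle\bigcup\{C \in L^Y;\, \Sigma \proves^\mathbf{S} A \I C\},
  \notag
\end{align}
which is well defined because $L$ and $Y$ are finite. First I would check that $\Sigma \proves^\mathbf{S} A \I \langle A\rangle_\Sigma$ itself: this follows since $\langle A\rangle_\Sigma$ is a finite union of consequents $C$ with $\Sigma \proves^\mathbf{S} A \I C$, and repeated use of the axioms together with~\eqref{r:Cut} lets one combine $A \I C_1$ and $A \I C_2$ into $A \I C_1 {\cup} C_2$ (from $A \I C_1$ and the axiom $C_1 {\cup} A \I A$, plus more \eqref{r:Cut} steps, one derives $A \I A{\cup}C_1$, and similarly one accumulates all the $C_i$'s). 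Also $A \subseteq \langle A\rangle_\Sigma$ because $\Sigma \proves^\mathbf{S} A \I A$ from the axiom $A{\cup}A \I A$.

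The core of the argument is to show $\langle A\rangle_\Sigma \in \mathrm{Mod}^\mathbf{S}(\Sigma)$. Take $C \I D \in \Sigma$ and $\langle\mul,\shf\rangle \in S$, and suppose $\mul{C} \subseteq \langle A\rangle_\Sigma$. Since $\Sigma \proves^\mathbf{S} A \I \langle A\rangle_\Sigma$ and $\mul{C} \subseteq \langle A\rangle_\Sigma$, applying the axiom $\langle A\rangle_\Sigma \I \mul{C}$ (an instance of $\mul{C}{\cup}\langle A\rangle_\Sigma \I \mul{C}$, valid because $\mul{C} \subseteq \langle A\rangle_\Sigma$) together with~\eqref{r:Cut} gives $\Sigma \proves^\mathbf{S} A \I \mul{C}$. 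From $C \I D \in \Sigma$ and rule~\eqref{r:F} we get $\Sigma \proves^\mathbf{S} \mul{C} \I \mul{D}$, and one more~\eqref{r:Cut} with $A \I \mul{C}$ yields $\Sigma \proves^\mathbf{S} A \I \mul{D}$, hence $\mul{D} \subseteq \langle A\rangle_\Sigma$ by definition. This is exactly condition~\eqref{eqn:Strue} for $\langle A\rangle_\Sigma \models^\mathbf{S} C \I D$, so $\langle A\rangle_\Sigma$ is an $\mathbf{S}$-model of $\Sigma$.

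Finally, since $\Sigma \nproves^\mathbf{S} A \I B$ we have $B \nsubseteq \langle A\rangle_\Sigma$ (otherwise the axiom $\langle A\rangle_\Sigma \I B$ combined via~\eqref{r:Cut} with $A \I \langle A\rangle_\Sigma$ would give $\Sigma \proves^\mathbf{S} A \I B$). Taking $\mul = \one$, we have $\one{A} = A \subseteq \langle A\rangle_\Sigma$ but $\one{B} = B \nsubseteq \langle A\rangle_\Sigma$, so $\langle A\rangle_\Sigma \nmodels^\mathbf{S} A \I B$. Thus $\langle A\rangle_\Sigma \in \mathrm{Mod}^\mathbf{S}(\Sigma) \setminus \mathrm{Mod}^\mathbf{S}(\{A \I B\})$, giving $\Sigma \nmodels^\mathbf{S} A \I B$, which completes the contrapositive.

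I expect the main obstacle to be the bookkeeping in the model-check step: one must be careful that rule~\eqref{r:F} is available for \emph{every} $\langle\mul,\shf\rangle \in S$ (not just generators), and that the various ``merge two consequents'' and ``weaken a consequent'' moves are genuinely derivable from only the axioms $A{\cup}B \I A$ and the single rule~\eqref{r:Cut} — these are standard Armstrong-style derivations but need to be stated cleanly. The finiteness of $L$ and $Y$ is used only to guarantee $\langle A\rangle_\Sigma$ is a legitimate $\mathbf{L}$-set obtained as a finite union (and implicitly that $\mathbf{S}$-proofs, being finite, suffice); no compactness-type argument beyond that is needed.
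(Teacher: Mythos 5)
Your proposal is correct and follows essentially the same route as the paper: both build the syntactic closure $A^{+}=\bigcup\{C\in L^Y;\,\Sigma\proves^\mathbf{S} A\I C\}$ (well defined by finiteness), show it is an $\mathbf{S}$-model of $\Sigma$ using rule~\eqref{r:F} together with~\eqref{r:Cut}, and observe that $A\I B$ fails in it via $\mul=\one$. The only difference is cosmetic: the paper invokes the additivity and projectivity lemmas of the hedge-based calculus by citation, whereas you re-derive these Armstrong-style steps inline from the axioms and~\eqref{r:Cut}.
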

\begin{proof}
  Suppose that $\Sigma \nproves^\mathbf{S} A \I B$, we show that 
  $\Sigma \nmodels^\mathbf{S} A \I B$. In order to see that,
  we find an $\mathbf{S}$-model of $\Sigma$ in which $A \I B$ is not true.
  Put $\mathcal{S}_A = \{C \in L^Y;\, \Sigma \proves^\mathbf{S} A \I C\}$
  and take $A^+ = \bigcup\mathcal{S}_A$.
  Since $\mathcal{S}_A$ is finite, using
  additivity~\cite[Lemma 4.2]{BeVy:ADfDwG} which is
  a consequence of having~\eqref{r:Cut} as our inference rule, we get
  that $A^+ \in \mathcal{S}_A$ and so $\Sigma \proves^\mathbf{S} A \I A^+$.
  Take any $E \I F \in \Sigma$ and suppose that $\mul{E} \subseteq A^+$
  for $\langle \mul,\shf\rangle \in S$.
  By projectivity~\cite[Lemma 4.2]{BeVy:ADfDwG},
  we get $\Sigma \proves^\mathbf{S} A \I \mul{E}$.
  Moreover, using $\Sigma \proves^\mathbf{S} E \I F$, we get
  $\Sigma \proves^\mathbf{S} \mul{E} \I \mul{F}$ by~\eqref{r:F}.
  By~\eqref{r:Cut},
  $\Sigma \proves^\mathbf{S} A \I \mul{F}$ which means $\mul{F} \subseteq A^+$.
  Therefore, $A^+ \in \mathrm{Mod}^\mathbf{S}(\Sigma)$.

  In addition to that, we have $A^+ \nmodels^\mathbf{S} A \I B$. Indeed,
  by contradiction, $A^+ \models^\mathbf{S} A \I B$ would yield
  $A = \one{A} \subseteq A^+$ and so $B = \one{B} \subseteq A^+$,
  i.e., $\Sigma \proves^\mathbf{S} A \I B$ by projectivity which contradicts 
  the fact that $\Sigma \nvdash^\mathbf{S} A \I B$.
\end{proof}

\begin{remark}\label{rem:CutF}
  Theorem~\ref{th:compl} is limited to finite $Y$ and $\mathbf{L}$. If one
  wishes to have a complete axiomatization for any $Y$ and $\mathbf{L}$,
  it can be done by introducing an \emph{infinitary cut},
  see~\cite{KuVy:Flprrai} and~\cite{BeVy:Falcrl} for details.
  Also note that there are several other inference systems which are
  equivalent to the system we use in this section. For instance, the
  inference rules~\eqref{r:Cut} and~\eqref{r:F} can equivalently
  be replaced by a single rule of the form
  \begin{align}
    \dfrac{A \I \mul{B},\,B \cup C \I D}{A \cup \mul{C} \I \mul{D}}
    \label{r:CutF}
  \end{align}
  for all $A,B,C,D \in L^Y$ and $\langle\mul,\shf\rangle \in S$.
  This is easy to see using \eqref{eqn:mul_distr} and $\mul{0_Y} = 0_Y$.
\end{remark}

Another equivalent inference system may be introduced by considering
normalized proofs using inference rules of reflexivity, accumulation,
and projectivity together with~\eqref{r:F} analogously as it is shown
in~\cite{BeVy:MRAP}. In fact, in order to adopt the approach
in~\cite{BeVy:MRAP} to our setting,
it suffices to prove that~\eqref{r:F} is idempotent and
commutes with axioms and~\eqref{r:Cut} in the following sense.

\begin{lemma}\label{le:commut}
  Each FAI which is derived using~\eqref{r:F} from an axiom is an axiom.
  If a FAI is derived first by using~\eqref{r:Cut}
  and then by using~\eqref{r:F},
  it can also be derived first by using \eqref{r:F} twice and then
  by using~\eqref{r:Cut}.
\end{lemma}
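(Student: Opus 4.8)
The plan is to reduce both claims to a single structural property of lower adjoints: every operator $\mul$ occurring in $S$ preserves unions, so in particular $\mul{A \cup B} = \mul{A} \cup \mul{B}$ for all $A, B \in L^Y$, which is the two-element instance of~\eqref{eqn:mul_distr}.

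For the first claim, I would take an arbitrary axiom $A \cup B \I A$ (with $A, B \in L^Y$) and apply~\eqref{r:F} with some $\langle\mul,\shf\rangle \in S$, obtaining $\mul{A \cup B} \I \mul{A}$. Rewriting the antecedent as $\mul{A} \cup \mul{B}$, this becomes $\mul{A} \cup \mul{B} \I \mul{A}$, which is itself an instance of the axiom schema (with $\mul{A}$ and $\mul{B}$ in the roles of the two $\mathbf{L}$-sets). Hence the conclusion is again an axiom.

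For the second claim, suppose a FAI is obtained by first applying~\eqref{r:Cut} to premises $A \I B$ and $B \cup C \I D$, giving $A \cup C \I D$, and then applying~\eqref{r:F} with $\langle\mul,\shf\rangle \in S$, giving $\mul{A \cup C} \I \mul{D}$, i.e.\ $\mul{A} \cup \mul{C} \I \mul{D}$. I would instead apply~\eqref{r:F} with the \emph{same} $\langle\mul,\shf\rangle$ to each premise separately: from $A \I B$ obtain $\mul{A} \I \mul{B}$, and from $B \cup C \I D$ obtain $\mul{B \cup C} \I \mul{D}$, that is $\mul{B} \cup \mul{C} \I \mul{D}$. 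A single application of~\eqref{r:Cut} to $\mul{A} \I \mul{B}$ and $\mul{B} \cup \mul{C} \I \mul{D}$ then yields precisely $\mul{A} \cup \mul{C} \I \mul{D}$, as required.

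There is no real obstacle here; the only points that need care are bookkeeping — ensuring that the two new invocations of~\eqref{r:F} use one and the same $\langle\mul,\shf\rangle$, so that the intermediate $\mathbf{L}$-set $\mul{B}$ in the rewritten cut lines up, and noting that the distributivity invoked is the finite instance of~\eqref{eqn:mul_distr} rather than a separate fact. I would also record in passing the companion observation that two consecutive applications of~\eqref{r:F} collapse to one, since $\mul[1]\mul[2]$ is again a lower adjoint belonging to $S$ by closure under composition; together with the present lemma this is what lets the normalized-proof argument of~\cite{BeVy:MRAP} carry over to our setting.
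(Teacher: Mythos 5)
Your proof is correct and follows essentially the same route as the paper's: both claims are reduced to the binary instance of~\eqref{eqn:mul_distr}, with the second claim handled by applying~\eqref{r:F} (with the same $\langle\mul,\shf\rangle$) to each premise of the cut and then cutting on $\mul{B}$. Your extra remark that consecutive applications of~\eqref{r:F} collapse by closure of $S$ under composition is not part of the lemma but is consistent with how the paper uses it afterwards.
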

\begin{proof}
  Both claims follow by~\eqref{eqn:mul_distr}. Indeed, the first claim is
  immediate and the second one can be shown as follows.
  If a formula is derived first by
  \eqref{r:Cut} from $A \I B$ and $B \cup C \I D$ and then by~\eqref{r:F},
  then it must be of the form $\mul{A \cup C} \I \mul{D}$.
  Observe that~\eqref{r:F} used with $A \I B$ and $B \cup C \I D$
  yields $\mul{A} \I \mul{B}$ and $\mul{B \cup C} \I \mul{D}$ which is
  equal to $\mul{B} \cup \mul{C} \I \mul{D}$.
  Therefore, we may use~\eqref{r:Cut}
  to infer $\mul{A} \cup \mul{C} \I \mul{D}$ which equals 
  $\mul{A \cup C} \I \mul{D}$.
\end{proof}

As a consequence of Lemma~\ref{le:commut}, each $\mathbf{S}$-proof by $\Sigma$
can be transformed into an $\mathbf{S}$-proof of the same formula in which all
applications of~\eqref{r:F} appear before all applications of~\eqref{r:Cut}.
In the transformed proof, \eqref{r:F} is applied only to formulas in $\Sigma$.
We therefore have the following consequence.

\begin{corollary}
  $\Sigma \vdash^\mathbf{S} A \I B$ if{}f there is 
  \begin{align}
    \Sigma^\mathbf{S} \subseteq
    \{\mul{A} \I \mul{B};\, A \I B \in \Sigma \text{ and }
    \langle\mul,\shf\rangle \in S\}
    \label{eqn:SigmaS}
  \end{align}
  such that $A \I B$ is provable by $\Sigma^\mathbf{S}$ 
  using axioms and \eqref{r:Cut} as the only inference rule.
  \qed
\end{corollary}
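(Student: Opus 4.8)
The plan is to read off both implications from Lemma~\ref{le:commut} together with the closure of $S$ under~$\circ$. The ``if'' direction is immediate: if $A \I B$ is provable from some $\Sigma^\mathbf{S}$ as in~\eqref{eqn:SigmaS} using only axioms and~\eqref{r:Cut}, then for each $\mul{C} \I \mul{D} \in \Sigma^\mathbf{S}$, with $C \I D \in \Sigma$ and $\langle\mul,\shf\rangle \in S$, the two-step sequence ``$C \I D$ (an assumption), then $\mul{C} \I \mul{D}$ by~\eqref{r:F}'' is an $\mathbf{S}$-proof of $\mul{C} \I \mul{D}$ by $\Sigma$; prepending all such fragments and keeping the rest of the given proof unchanged (legitimate since~\eqref{r:Cut} is among the rules admitted in $\mathbf{S}$-proofs) yields $\Sigma \vdash^\mathbf{S} A \I B$.

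For the ``only if'' direction I would start from an $\mathbf{S}$-proof of $A \I B$ by $\Sigma$ and normalize it so that no application of~\eqref{r:F} has an application of~\eqref{r:Cut} in its derivation history. Whenever such a clash occurs, one picks a topmost offending~\eqref{r:Cut} step (one whose conclusion reaches an application of~\eqref{r:F} with no intervening~\eqref{r:Cut}) and rewrites it by the second claim of Lemma~\ref{le:commut}, trading the single~\eqref{r:F} step for two~\eqref{r:F} steps on the premises of the~\eqref{r:Cut} followed by one~\eqref{r:Cut} step. Termination follows from a multiset argument: label each~\eqref{r:F} step by the number of~\eqref{r:Cut} steps lying below it in the proof tree; a rewriting step replaces a label $k$ by two labels strictly below $k$, a strict decrease in the Dershowitz--Manna multiset order. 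In the resulting proof every~\eqref{r:F} step precedes every~\eqref{r:Cut} step; by the first claim of Lemma~\ref{le:commut}, any~\eqref{r:F} step applied to an axiom may be deleted (replacing its conclusion by the corresponding axiom), so each remaining~\eqref{r:F} step is applied either to a formula of $\Sigma$ or to the conclusion of another~\eqref{r:F} step. A maximal chain of~\eqref{r:F} steps with connections $\langle\mul[1],\shf[1]\rangle,\ldots,\langle\mul[k],\shf[k]\rangle$ applied (in this order) to an assumption $C \I D \in \Sigma$ produces $\mul[k]\cdots\mul[1]{C} \I \mul[k]\cdots\mul[1]{D}$, which is a single~\eqref{r:F} step with the composite connection $\langle\mul[k]\cdots\mul[1],\shf[1]\cdots\shf[k]\rangle$; this lies in $S$ because $S$ is closed under~$\circ$ by~\eqref{eqn:compose}. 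Taking $\Sigma^\mathbf{S}$ to be the collection of the conclusions of these collapsed chains, we get $\Sigma^\mathbf{S} \subseteq \{\mul{C} \I \mul{D};\, C \I D \in \Sigma,\ \langle\mul,\shf\rangle \in S\}$, and the remaining steps of the proof — axioms and~\eqref{r:Cut} only — constitute a proof of $A \I B$ from $\Sigma^\mathbf{S}$.

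The part I expect to be delicate is the normalization in the ``only if'' direction: Lemma~\ref{le:commut} is stated for a~\eqref{r:Cut} step immediately feeding a~\eqref{r:F} step, so the real work is to argue that selecting a topmost clash always exposes such an immediate pair, that rewriting it creates no new clashes higher up, and that the well-founded measure genuinely decreases despite the duplication of~\eqref{r:F} steps. Once the normal form is in hand, the rest — deleting~\eqref{r:F} on axioms, collapsing~\eqref{r:F}-chains via closure under composition, and reading off $\Sigma^\mathbf{S}$ — is routine, as is the ``if'' direction.
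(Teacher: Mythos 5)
Your proposal is correct and follows essentially the same route as the paper: the paper also obtains the corollary by using Lemma~\ref{le:commut} to push all applications of~\eqref{r:F} before all applications of~\eqref{r:Cut}, so that~\eqref{r:F} is applied only to formulas of $\Sigma$ (your multiset termination measure and the collapsing of \eqref{r:F}-chains via closure of $S$ under $\circ$ just make explicit what the paper leaves implicit). The only detail worth adding is that formulas of $\Sigma$ used with no~\eqref{r:F} step at all are still covered by~\eqref{eqn:SigmaS}, since $\langle\one,\one\rangle \in S$.
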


In the rest of this section, we deal with graded notions of semantic
entailment and provability using general parameterization $\mathbf{S}$.
Recall that $\models^\mathbf{S}$
introduced in Definition~\ref{def:truth} is a bivalent notion. A formula
either is true in $M$ (or entailed by $\Sigma$) or not. In contrast, the
notions of truth and entailment of FAIs parameterized by
hedges~\cite{BeVy:ADfDwG}, i.e., \eqref{eqn:hedge_fai} and \eqref{eqn:sement*},
are introduced as graded notions. We now show that our general approach also
admits such graded notions. Interestingly, the introduced notions are fully
expressible by the bivalent ones.

\begin{definition}
  Let $A,B,M \in L^Y$ and let $\mathbf{S}$ be a parameterization of FAIs.
  The \emph{degree to which $A \I B$ is true in $M$ under $\mathbf{S}$},
  written $||A \I B||^\mathbf{S}_M$, is defined by
  \begin{align}
    ||A \I B||^\mathbf{S}_M &=
    \textstyle
    \bigvee\{c \in L;\, M \models^\mathbf{S} A \I c{\otimes}B\}.
    \label{eqn:grad_true}
  \end{align}
  Let $\Sigma$ be a set of FAIs.
  The \emph{degree to which $A \I B$ is semantically entailed by $\Sigma$
    under $\mathbf{S}$},
  written $||A \I B||^\mathbf{S}_\Sigma$, is defined by
  \begin{align}
    ||A \I B||^\mathbf{S}_\Sigma &=
    \textstyle
    \bigwedge_{M \in \mathrm{Mod}^\mathbf{S}(\Sigma)}||A \I B||^\mathbf{S}_M.
    \label{eqn:grad_sement}
  \end{align}
\end{definition}

\begin{remark}
  Two remarks are in order. First, $||A \I B||^\mathbf{S}_M \in L$
  is not only a supremum of degrees but also
  the greatest degree among all $c \in L$ such that
  $M \models^\mathbf{S} A \I c{\otimes}B$. Indeed, put
  $K = \{c \in L;\, M \models^\mathbf{S} A \I c{\otimes}B\}$
  and observe that trivially $0 \in K$. Moreover, if $c_i \in K$ ($i \in I$),
  then $\mul{c_i{\otimes}B} \subseteq M$ for all $i \in I$ yields 
  $c_i{\otimes}B \subseteq \shf{M}$ for all $i \in I$ and so
  $c{\otimes}B \subseteq \shf{M}$ for $c = \bigvee_{\!i \in I}c_i$
  owing to~\eqref{eqn:obigvee} and so $c \in K$,
  proving $||A \I B||^\mathbf{S}_M \in K$.
  Second, if $\mathbf{S}^*$ corresponds to a parameterization given by 
  hedge ${}^*$ as in Example~\ref{ex:S}\,(b),
  then $||A \I B||^{\mathbf{S}^*}_M = ||A \I B||^*_M$ for all $A,B,M \in L^Y$
  and the same applies to~\eqref{eqn:grad_sement} and \eqref{eqn:sement*}.
  This shows that graded entailment under $\mathbf{S}$ is a proper
  generalization of the graded entailment parameterized
  by hedges~\cite{BeVy:ADfDwG}.
\end{remark}

The following assertion shows that the least $\mathbf{S}$-model
$[A]^\mathbf{S}_\Sigma$ containing $A$ can be used to express
the degrees of semantic entailment under $\mathbf{S}$.
Therefore, the assertion extends Theorem~\ref{th:sement}
in that it characterizes arbitrary \emph{degrees of entailment}
and is not restricted just to the ``full entailment', i.e,
the entailment to degree $1$.

\begin{theorem}
  Let $\Sigma$ be a set of FAIs in $Y$, $\mathbf{S}$ be a parameterization.
  Then, for any $A,B \in L^Y$,
  \begin{align}
    ||A \I B||^\mathbf{S}_\Sigma &=
    \textstyle
    \bigvee\{c \in L;\, \Sigma \models^\mathbf{S} A \I c{\otimes}B\}
    = \SD\bigl(B,[A]^\mathbf{S}_\Sigma\bigr).
  \end{align}
\end{theorem}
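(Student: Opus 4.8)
The plan is to prove both equalities by identifying $d:=\SD\bigl(B,[A]^\mathbf{S}_\Sigma\bigr)$ with each of the other two quantities, with Theorem~\ref{th:sement} carrying the main load. The starting remark is a componentwise use of the adjointness~\eqref{eqn:adj}: for $c\in L$, one has $c\otimes B\subseteq[A]^\mathbf{S}_\Sigma$ precisely when $c\otimes B(y)\le[A]^\mathbf{S}_\Sigma(y)$ for every $y\in Y$, precisely when $c\le B(y)\rightarrow[A]^\mathbf{S}_\Sigma(y)$ for every $y$, precisely when $c\le d$. Applying Theorem~\ref{th:sement}\,\itm{4} to the FAI $A\I c\otimes B$ turns this into the statement that $\Sigma\models^\mathbf{S} A\I c\otimes B$ if and only if $c\le d$. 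Hence $\{c\in L;\,\Sigma\models^\mathbf{S} A\I c\otimes B\}=\{c\in L;\,c\le d\}$, and since $d$ belongs to the latter set, the supremum of the former equals $d$; this gives the second equality.

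For the first equality I would prove two inequalities. To obtain $||A\I B||^\mathbf{S}_\Sigma\ge d$, take $c=d$ in the equivalence just established, so $\Sigma\models^\mathbf{S} A\I d\otimes B$; hence $M\models^\mathbf{S} A\I d\otimes B$ for every $M\in\mathrm{Mod}^\mathbf{S}(\Sigma)$, hence $d\le||A\I B||^\mathbf{S}_M$ for each such $M$, and taking the infimum over $\mathrm{Mod}^\mathbf{S}(\Sigma)$ yields $d\le||A\I B||^\mathbf{S}_\Sigma$. For the reverse inequality, note that $[A]^\mathbf{S}_\Sigma$ itself lies in $\mathrm{Mod}^\mathbf{S}(\Sigma)$ (it is a fixed point of the associated $\mathbf{S}$-closure operator), so $||A\I B||^\mathbf{S}_\Sigma\le||A\I B||^\mathbf{S}_{[A]^\mathbf{S}_\Sigma}$; and whenever $[A]^\mathbf{S}_\Sigma\models^\mathbf{S} A\I c\otimes B$, instantiating $\langle\mul,\shf\rangle=\langle\one,\one\rangle\in S$ in~\eqref{eqn:Strue} together with the extensivity $A\subseteq[A]^\mathbf{S}_\Sigma$ forces $c\otimes B\subseteq[A]^\mathbf{S}_\Sigma$, i.e.\ $c\le d$; taking the supremum over such $c$ gives $||A\I B||^\mathbf{S}_{[A]^\mathbf{S}_\Sigma}\le d$. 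Combining, $||A\I B||^\mathbf{S}_\Sigma=d=\bigvee\{c\in L;\,\Sigma\models^\mathbf{S} A\I c\otimes B\}=\SD\bigl(B,[A]^\mathbf{S}_\Sigma\bigr)$.

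There is no real obstacle: the single idea needed is to run Theorem~\ref{th:sement} with $c\otimes B$ in place of $B$, after which the proof is routine bookkeeping with the residuated adjointness~\eqref{eqn:adj} and the order-theoretic identity $\bigvee\{c\in L;\,c\le d\}=d$. The preceding remark even records that the supremum defining $||A\I B||^\mathbf{S}_M$ is attained, although only this order-theoretic identity is actually used above. The one place that needs a little attention is that the bound $||A\I B||^\mathbf{S}_\Sigma\le d$ must be read off at the specific model $[A]^\mathbf{S}_\Sigma$: an arbitrary $M\in\mathrm{Mod}^\mathbf{S}(\Sigma)$ need not contain $A$, so the $\langle\one,\one\rangle$ argument is available only there.
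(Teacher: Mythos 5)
Your proof is correct and takes essentially the same route as the paper: both equalities come from applying Theorem~\ref{th:sement} to the implications $A \I c{\otimes}B$ and translating $c{\otimes}B \subseteq [A]^\mathbf{S}_\Sigma$ into $c \leq \SD(B,[A]^\mathbf{S}_\Sigma)$ by adjointness, with the upper bound read off at the model $[A]^\mathbf{S}_\Sigma$. The only difference is cosmetic: where the paper establishes $M \models^\mathbf{S} A \I c{\otimes}B$ for $c = \SD(B,[A]^\mathbf{S}_\Sigma)$ and each $M \in \mathrm{Mod}^\mathbf{S}(\Sigma)$ by a direct argument (via Theorem~\ref{th:Mod_clos} and monotony of $[{\cdots}]^\mathbf{S}_\Sigma$), you obtain the same bound by invoking the implication \itm{4}$\,\Rightarrow\,$\itm{1} of Theorem~\ref{th:sement} as a black box, which is a legitimate shortcut.
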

\begin{proof}
  Using~\eqref{eqn:grad_sement},
  $[A]^\mathbf{S}_\Sigma \in \mathrm{Mod}^\mathbf{S}(\Sigma)$,
  \eqref{eqn:grad_true}, and Theorem~\ref{th:sement}\,\itm{3}, we have
  \begin{align*}
    ||A \I B||^\mathbf{S}_\Sigma
    &=
    \textstyle
    \bigwedge_{M \in \mathrm{Mod}^\mathbf{S}(\Sigma)}||A \I B||^\mathbf{S}_M
    \leq
    ||A \I B||^\mathbf{S}_{[A]^\mathbf{S}_\Sigma}
    \\
    &= 
    \textstyle
    \bigvee\{c \in L;\,
    [A]^\mathbf{S}_\Sigma \models^\mathbf{S} A \I c{\otimes}B\}
    = 
    \textstyle
    \bigvee\{c \in L;\, \Sigma \models^\mathbf{S} A \I c{\otimes}B\}.
  \end{align*}
  Using Theorem~\ref{th:sement}\,\itm{4}, it follows that
  \begin{align*}
    \textstyle
    \bigvee\{c \in L;\, \Sigma \models^\mathbf{S} A \I c{\otimes}B\}
    &= 
    \textstyle
    \bigvee\{c \in L;\, c{\otimes}B \subseteq [A]^\mathbf{S}_\Sigma\} 
    \\
    &=
    \textstyle
    \bigvee\{c \in L;\, c \leq \SD(B,[A]^\mathbf{S}_\Sigma)\}
    =
    \SD(B,[A]^\mathbf{S}_\Sigma).
  \end{align*}
  In order to prove that 
  $\SD(B,[A]^\mathbf{S}_\Sigma) \leq ||A \I B||^\mathbf{S}_\Sigma$,
  we show $\SD(B,[A]^\mathbf{S}_\Sigma) \leq ||A \I B||^\mathbf{S}_M$
  for any $M \in \mathrm{Mod}^\mathbf{S}(\Sigma)$.
  By~\eqref{eqn:grad_true}, it means showing  
  $\SD(B,[A]^\mathbf{S}_\Sigma) \leq
  \bigvee\{c \in L;\, M \models^\mathbf{S} A \I c{\otimes}B\}$ for any 
  $M \in \mathrm{Mod}^\mathbf{S}(\Sigma)$.
  For every $M \in \mathrm{Mod}^\mathbf{S}(\Sigma)$, it suffices to prove
  $M \models^\mathbf{S} A \I c {\otimes} B$
  for $c = \SD(B,[A]^\mathbf{S}_\Sigma)$ which is indeed the case:
  Assume that $\mul{A} \subseteq M$ for
  $\langle\mul,\shf\rangle \in S$. Then, $A \subseteq \shf{M}$
  and the \monoton y of $[{\cdots}]^\mathbf{S}_\Sigma$ yields
  $[A]^\mathbf{S}_\Sigma \subseteq [\shf{M}]^\mathbf{S}_\Sigma = \shf{M}$
  because $\shf{M} \in \mathrm{Mod}^\mathbf{S}(\Sigma)$
  owing to Theorem~\ref{th:Mod_clos}.
  Therefore, $\SD(B,[A]^\mathbf{S}_\Sigma) \leq \SD(B,\shf{M})$ which
  holds if{}f $\SD(B,[A]^\mathbf{S}_\Sigma) {\otimes} B \subseteq \shf{M}$,
  i.e., if{}f $\mul(\SD(B,[A]^\mathbf{S}_\Sigma) {\otimes} B) \subseteq M$,
  proving $M \models^\mathbf{S} A \I c {\otimes} B$
  for $c = \SD(B,[A]^\mathbf{S}_\Sigma)$.
\end{proof}

The previous observation allows us to define a
\emph{degree $|A \I B|^\mathbf{S}_\Sigma$ to which $A \I B$
  is $\mathbf{S}$-provable by $\Sigma$} by
$|A \I B|^\mathbf{S}_\Sigma =
\textstyle
\bigvee\{c \in L;\, \Sigma \vdash^\mathbf{S} A \I c{\otimes}B\}$
for which Theorem~\ref{th:compl} yields that 
$|A \I B|^\mathbf{S}_\Sigma = ||A \I B||^\mathbf{S}_\Sigma$ provided
that $\mathbf{L}$ and $Y$ are finite (otherwise we may
introduce an infinitary cut or its equivalent~\cite{BeVy:Falcrl,KuVy:Flprrai}).
Recall that in the terminology of~\cite[Section~9.2]{Haj:MFL},
this shows that our logic is Pavelka-style~\cite{Pav:Ofl1,Pav:Ofl2,Pav:Ofl3}
complete which means that degrees of semantic entailment (under $\mathbf{S}$)
are exactly the degrees of $\mathbf{S}$-provability. Readers interested in
fuzzy logics admitting this style of completeness are referred
to~\cite{Ger:FL,NoPeMo:MPFL}.

\section{Illustrative Examples}\label{sec:ilex}
In this section, we show examples of concrete parameterizations of FAIs and
show their influence on the number of dependencies derived from object-attribute
data using methods described in Section~\ref{sec:ddd}.
We take the table in Fig.~\ref{fig:data} as the input
data\footnote{\url{http://www.mycoted.com/Comparison_tables}}.

\begin{figure}
  \centering%
  \begin{tabular}[b]{|r||c|c|c|c|}
    \hline
    \rule{0pt}{12pt}&
    happy \underline{k}ids &
    \underline{l}ow cost &
    happy \underline{a}dults &
    \underline{e}asy travel \\
    \hline
    \hline
    walking holiday
    & $0.25$ & $0.75$ & $1$ & $0.75$ \\
    \hline
    cruise holiday
    & $0.5$ & $0.25$ & $0.5$ & $0.5$ \\
    \hline
    beach holiday
    & $0.75$ & $0.25$ & $0.75$ & $0.25$ \\
    \hline
    stay at home
    & $0.25$ & $1$ & $0.5$ & $1$ \\
    \hline
    holiday camp
    & $1$ & $0.25$ & $0.25$ & $0.25$ \\
    \hline
  \end{tabular}
  \caption{Input data:
    Leisure activities and their properties expressed by degrees.}
  \label{fig:data}
\end{figure}

\begin{figure}
  \centering%
  \scalebox{.6}{\includegraphics{./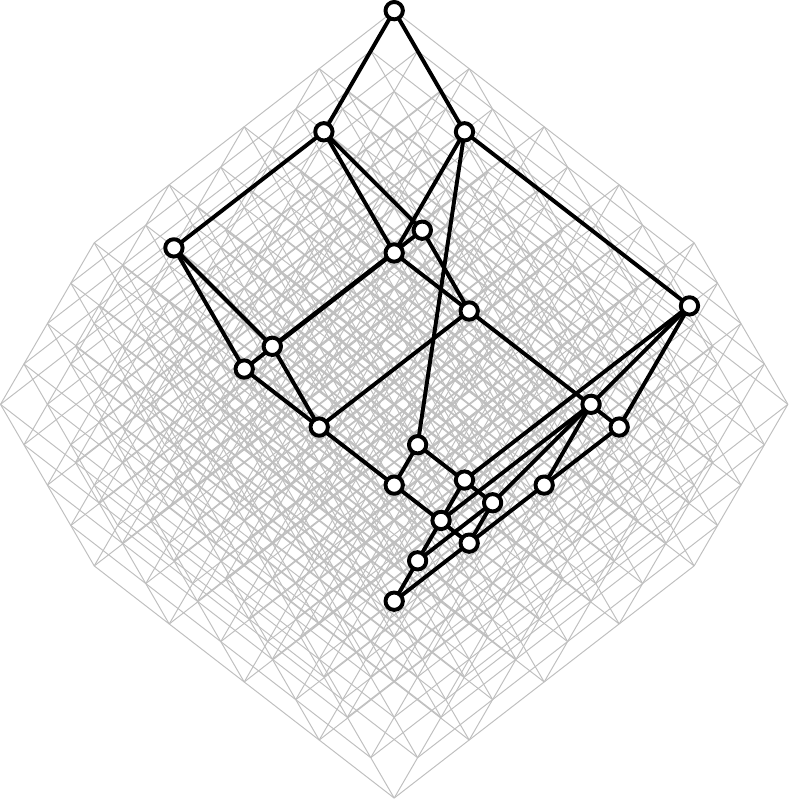}}
  \quad
  \scalebox{.6}{\includegraphics{./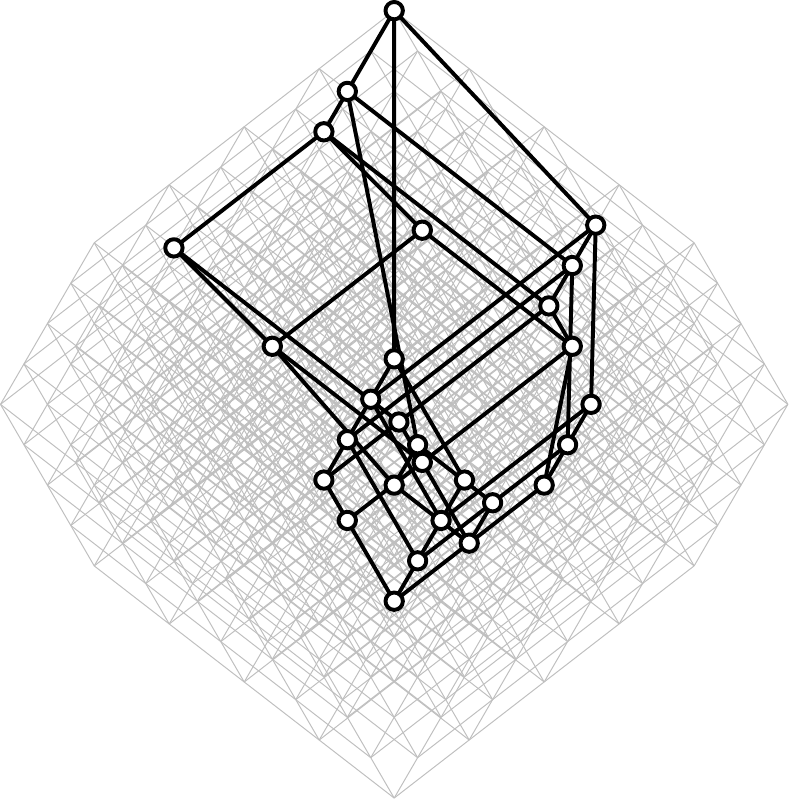}}
  \quad
  \scalebox{.6}{\includegraphics{./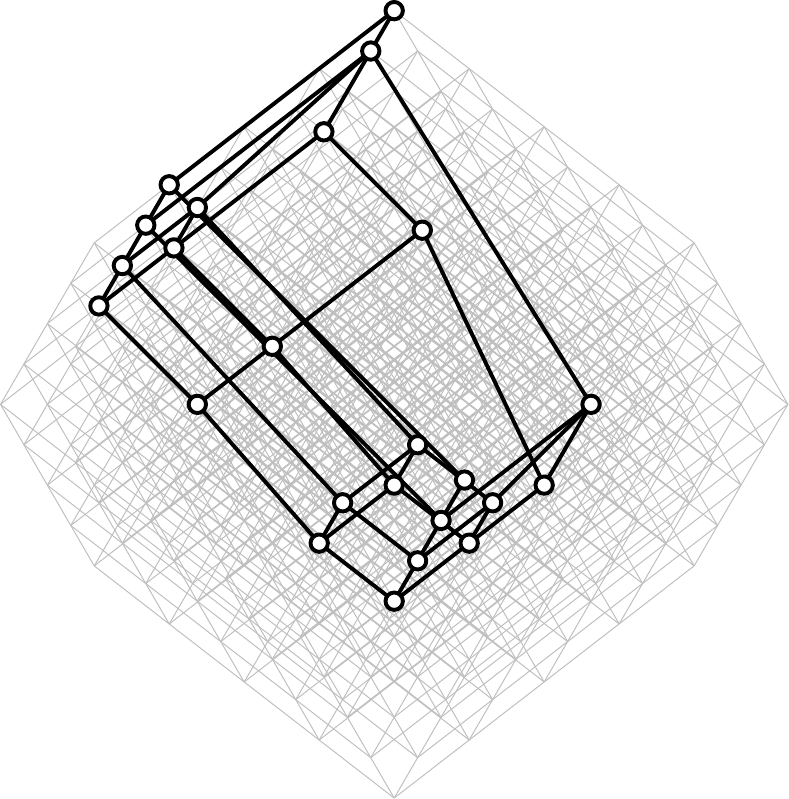}}

  \smallskip
  \scalebox{.6}{\includegraphics{./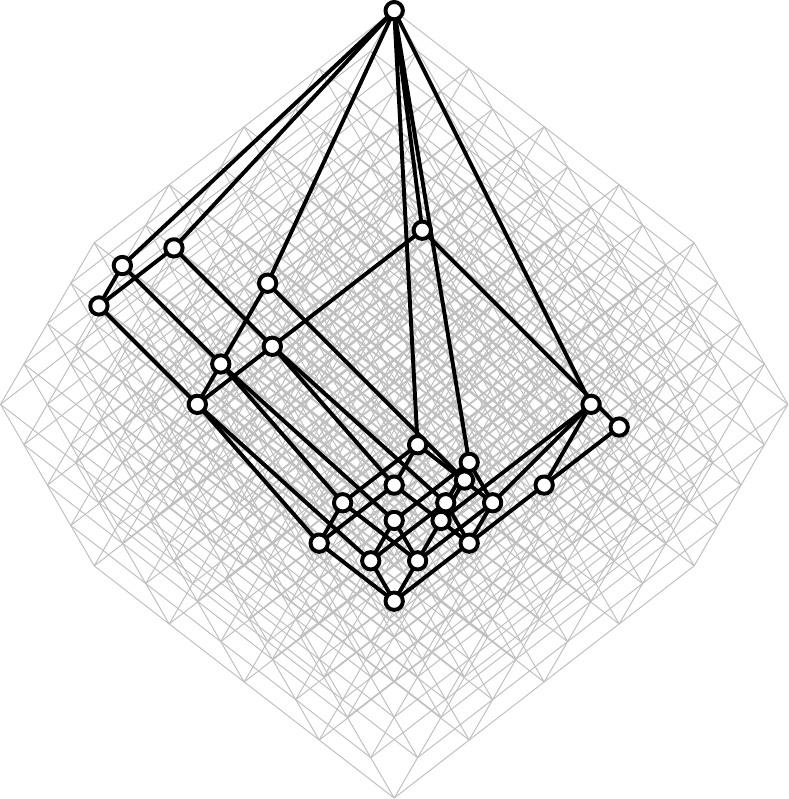}}
  \quad
  \scalebox{.6}{\includegraphics{./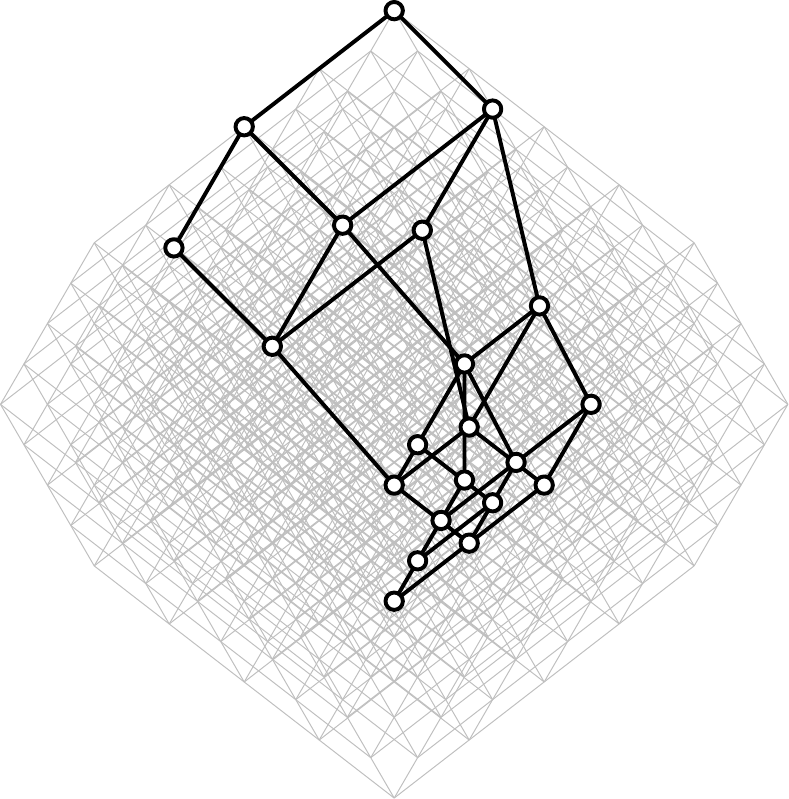}}
  \quad
  \scalebox{.6}{\includegraphics{./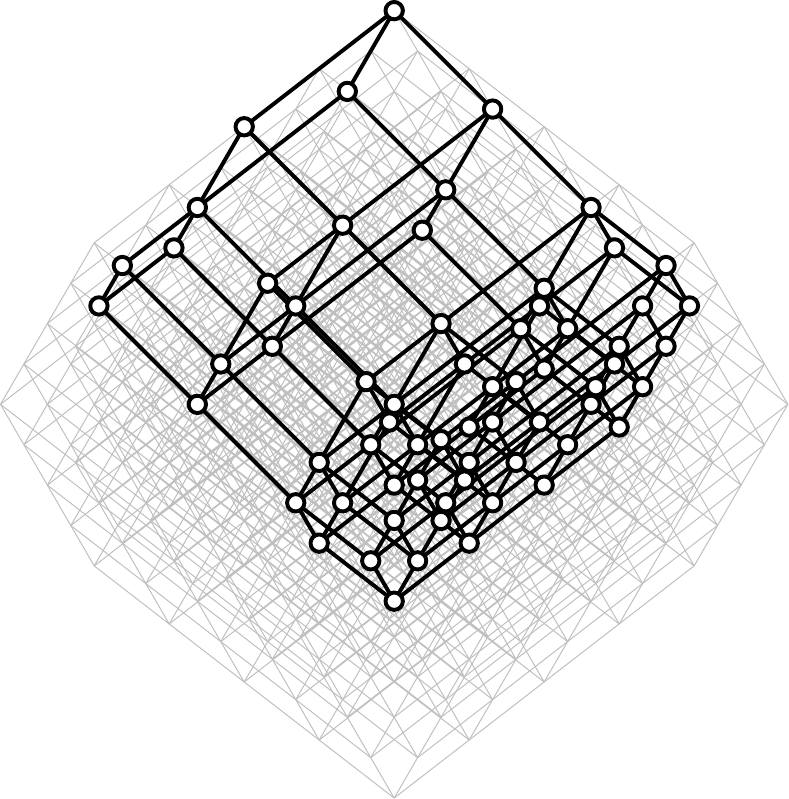}}

  \caption{Systems of fixed points of ${}^{\dn\up}$ from Example~\ref{ex:illu}.}
  \label{fig:space}
\end{figure}

Note that in the terminology of formal concept analysis~\cite{GaWi:FCA}
of data with fuzzy attributes~\cite{Bel:FRS,MeOjRu:Fcavmacl,Po:FB},
the table in Fig.~\ref{fig:data}
represents an $\mathbf{L}$-context with the set $X$ of object being the
types of leisure activities, the set $Y$ of attributes being
properties/features of the activities, and the $\mathbf{L}$-relation $I$
representing the presence of properties by degrees taken from
the real unit interval, e.g.,
\begin{align*}
  I(\text{beach holiday},\text{happy adults}) = 0.75
\end{align*}
means ``beach holiday makes adults happy at least to degree $0.75$.''
In the examples below,
we assume that $\mathbf{L}$ (our structure of degrees) is a complete
residuated lattice on $L = \{0,0.25,0.5,0.75,1\}$ with the G\"odel operations,
i.e., $\wedge$ and $\otimes$ coincide with the minimum, $\vee$ coincides with
the maximum, and $a \rightarrow b = b$ for $a > b$ and $a \rightarrow b = 1$
for $a \leq b$, cf.~\eqref{eqn:LukGodGog_mul} and~\eqref{eqn:LukGodGog_res}.

Given this data, we may be interested in discovering dependencies between
the presence of attributes to be able to answer questions like
``Does a low cost holiday make parents happy (and to what degree)?''
We show by examples that non-redundant bases which describe all such
dependencies present in data as well as their systems of models,
which can be seen as systems of
clusters found in the data~\cite{GaWi:FCA},
are directly influenced by the choice
of a parameterization.

\begin{example}\label{ex:illu}
  (a)
  Consider $S_1 = \{\langle\one,\one\rangle,
  \langle \mul[0.5 \otimes],\shf[0.5 \rightarrow]\rangle\}$ where
  $\mul[0.5 \otimes]$ and $\shf[0.5 \rightarrow]$ are defined by
  \eqref{eqn:MulC} and \eqref{eqn:ShfC}, respectively. This parameterization
  agrees with a parameterization by a hedge with fixed points $0$, $0.5$, and $1$.
  In this case, the set of FAIs given by Theorem~\ref{th:pseudos} which is
  $\mathbf{S}_1$-complete in $\langle X,Y,I\rangle$ is also an $\mathbf{S}_1$-base
  and consists of $11$ formulas:
  \begin{align*}
    \Sigma_I = \{
    &\{k,{}^{0.25\!}/l,{}^{0.75\!}/a,{}^{0.25\!}/e\} \,{\I}\, \{k,{}^{0.25\!}/l,a,{}^{0.25\!}/e\}, \\
    &\{{}^{0.75\!}/k,{}^{0.25\!}/l,a,{}^{0.25\!}/e\} \,{\I}\, \{k,{}^{0.25\!}/l,a,{}^{0.25\!}/e\}, \\
    &\{{}^{0.75\!}/k,{}^{0.25\!}/l,{}^{0.5\!}/a,{}^{0.25\!}/e\} \,{\I}\, \{{}^{0.75\!}/k,{}^{0.25\!}/l,{}^{0.75\!}/a,{}^{0.25\!}/e\}, \\
    &\{{}^{0.5\!}/k,{}^{0.25\!}/l,{}^{0.75\!}/a,{}^{0.25\!}/e\} \,{\I}\, \{{}^{0.75\!}/k,{}^{0.25\!}/l,{}^{0.75\!}/a,{}^{0.25\!}/e\}, \\
    & \{{}^{0.5\!}/k,{}^{0.25\!}/l,{}^{0.5\!}/a,{}^{0.75\!}/e\} \,{\I}\, \{k,{}^{0.25\!}/l,a,e\}, \\
    &\{{}^{0.25\!}/k,l,{}^{0.5\!}/a,{}^{0.75\!}/e\} \,{\I}\, \{{}^{0.25\!}/k,l,{}^{0.5\!}/a,e\}, \\
    &\{{}^{0.25\!}/k,{}^{0.75\!}/l,{}^{0.5\!}/a,e\} \,{\I}\, \{{}^{0.25\!}/k,l,{}^{0.5\!}/a,e\}, \\
    &\{{}^{0.25\!}/k,{}^{0.5\!}/l,{}^{0.25\!}/a,{}^{0.25\!}/e\} \,{\I}\, \{{}^{0.25\!}/k,{}^{0.75\!}/l,{}^{0.5\!}/a,{}^{0.75\!}/e\}, \\
    &\{{}^{0.25\!}/k,{}^{0.25\!}/l,{}^{0.75\!}/a,{}^{0.5\!}/e\} \,{\I}\, \{{}^{0.25\!}/k,{}^{0.25\!}/l,a,{}^{0.75\!}/e\}, \\
    &\{{}^{0.25\!}/k,{}^{0.25\!}/l,{}^{0.25\!}/a,{}^{0.5\!}/e\} \,{\I}\, \{{}^{0.25\!}/k,{}^{0.25\!}/l,{}^{0.5\!}/a,{}^{0.5\!}/e\}, \\
    &0_Y \,{\I}\, \{{}^{0.25\!}/k,{}^{0.25\!}/l,{}^{0.25\!}/a,{}^{0.25\!}/e\}\}.
  \end{align*}
  The $\mathbf{S}_1$-base can be presented in a more compact way by removing
  superfluous attributes from antecedents and consequents of formulas. That is,
  for each $A \I B$ in $\Sigma_I$, we may take minimal $A', B' \in L^Y$ such
  that $A' \subseteq A$ and $B' \subseteq B$ and
  $\Sigma'_I = \{A' \I B'; A \I B \in \Sigma_I\}$ is $\mathbf{S}_1$-complete
  in $\langle X,Y,I\rangle$. For instance,
  \begin{align*}
    \Sigma'_I = \{&\{k,{}^{0.5\!}/a\} \,{\I}\, \{a\},
    \{{}^{0.5\!}/k,a\} \,{\I}\, \{k\},
    \{{}^{0.75\!}/k,{}^{0.5\!}/a\} \,{\I}\, \{{}^{0.75\!}/a\},
    \{{}^{0.5\!}/k,{}^{0.75\!}/a\} \,{\I}\, \{{}^{0.75\!}/k\}, \\
    &\{{}^{0.5\!}/k,{}^{0.75\!}/e\} \,{\I}\, \{{}^{0.75\!}/a,e\},
    \{l\} \,{\I}\, \{e\},
    \{{}^{0.5\!}/l,e\} \,{\I}\, \{l\},
    \{{}^{0.5\!}/l\} \,{\I}\, \{{}^{0.75\!}/l,{}^{0.75\!}/e\}, \\
    &\{{}^{0.75\!}/a,{}^{0.5\!}/e\} \,{\I}\, \{a,{}^{0.75\!}/e\},
    \{{}^{0.5\!}/e\} \,{\I}\, \{{}^{0.5\!}/a\},
    0_Y \,{\I}\, \{{}^{0.25\!}/k,{}^{0.25\!}/l,{}^{0.25\!}/a,{}^{0.25\!}/e\}\}
  \end{align*}
  is an $\mathbf{S}_1$-base obtained from $\Sigma_I$ this way. Let us note that
  according to Theorem~\ref{th:dcompl}, in order to check that $\Sigma'_I$ is
  an $\mathbf{S}_1$-base, it suffices to check that
  $[{\cdots}]^{\mathbf{S}_1}_{\Sigma'_I}$ and ${}^{\dn\up}$ have the same
  fixed points. This can be done by enumerating the fixed points by
  algorithms as in~\cite{BeDeOuVy:Claffco}. Recall that the fixed points
  are $\mathbf{L}$-sets of attributes and play the role of conceptual
  clusters found in the data, see Remark~\ref{rem:clust}. For this
  particular $\langle X,Y,I\rangle$, $\mathbf{L}$, and $\mathbf{S}_1$,
  there are $22$ distinct fixed points (clusters) of ${}^{\dn\up}$.
  Fig.~\ref{fig:space}\,(upper left) depicts the clusters by
  a Hasse diagram (circled vertices and bold edges) drawn in the space
  $\langle L^Y,\subseteq\rangle$ of all $\mathbf{L}$-sets
  (a hypercube with $|L|^{|Y|} = 5^4 = 625$ nodes drawn in gray).

  (b)
  By taking $S_2 = \{\langle\one,\one\rangle,
  \langle \mul[C \otimes],\shf[C \rightarrow]\rangle\}$
  for $C = \{k,{}^{0.5\!}/a,{}^{0.5\!}/e\}$, we introduce a parameterization
  which can be seen as a refinement of that in (a). Described verbally,
  $I \models^{\mathbf{S}_1} A \I B$ means that, for each activity $x$,
  if the activity has all the properties in $A$ (fully or
  at least to degree $0.5$), then it has all the properties in $B$
  (fully or at least to degree $0.5$). In contrast, $\mathbf{S}_2$ puts
  more emphasis on ``happy kids'' (attribute~$k$) because $C(k) = 1 > 0.5$
  and disregards the cost (attribute~$l$) because $C(l) = 0 < 0.5$. Thus,
  by using such a constraint, a user puts more/less emphasis on certain
  attributes. An $\mathbf{S}_2$-base obtained as in (a)
  has $15$ FAIs, and ${}^{\dn\up}$ has $28$ fixed points,
  see Fig.~\ref{fig:space}\,(upper middle).

  (c)
  Considering $S_3$ as in (b) for $C = \{k,{}^{0.75\!}/a,{}^{0.25\!}/e\}$,
  we put more emphasis on ``happy adults'' and less emphasis on ``easy travel''
  than in case of (b). In this setting, an $\mathbf{S}_3$-base consists of $12$
  FAIs and ${}^{\dn\up}$ has $24$ fixed points,
  see Fig.~\ref{fig:space}\,(upper right).

  (d)
  Parameterizations with completely different semantics than in (a)--(c)
  result by considering permutations of attributes. For example, take
  $S_4 = \{\langle\one,\one\rangle,\langle\mul[\rotat],\shf[\rotat]\rangle\}$
  where
  $(\mul[\rotat](A))(k) = A(a)$,
  $(\mul[\rotat](A))(l) = A(e)$,
  $(\mul[\rotat](A))(a) = A(k)$,
  $(\mul[\rotat](A))(e) = A(l)$
  for all $A \in L^Y$,
  and $\shf[\rotat] = \mul[\rotat]$ (notice that $\mul[\rotat]$ is an involution).
  Clearly, $\langle\mul[\rotat],\shf[\rotat]\rangle$ coincide with
  \eqref{eqn:mul+} and \eqref{eqn:shf-} provided that
  we renumber the attributes $k$, $l$, $a$, and $e$ as $0$, $1$, $2$,
  and $3$, respectively. Put in words, 
  $I \models^{\mathbf{S}_1} A \I B$ means that, for each activity $x$,
  if the activity has all the properties
  in $A$ (including situations with
  ``happy adults'' and ``happy kids'' interchanged and
  ``low cost'' and ``easy travel'' interchanged),
  then it has all the properties
  in $B$ (on the same condition of attibutes being interchanged).
  In this case, the $\mathbf{S}_4$-complete set given by
  Theorem~\ref{th:pseudos}
  consists of $17$ implications but unlike the previous cases, the set is
  redundant. Indeed, $7$ formulas can be removed using
  Theorem~\ref{th:base}\,\itm{2} and Theorem~\ref{th:sement}\,\itm{4};
  ${}^{\dn\up}$ has $26$ fixed points, see Fig.~\ref{fig:space}\,(upper left).

  (e)
  By taking $S_5 = \{\langle\one,\one\rangle,
  \langle\mul[\ominus C],\shf[C \oplus]\}$
  for $C = \{k,{}^{0.5\!}/a,{}^{0.5\!}/e\}$ with 
  $\mul[\ominus C]$ and $\shf[C \oplus]$ defined by
  \eqref{eqn:mul_ominus} and \eqref{eqn:shf_oplus}, respectively,
  we introduce a parameterization $\mathbf{S}_5$ which is
  conceptually similar to~$\mathbf{S}_2$ but has a different meaning.
  Indeed, observe that the condition $C \otimes A \subseteq M$ which appears
  in the definition of $I \models^{\mathbf{S}_2} A \I B$ reads:
  For each activity $x$ and each property $y$, $x$ has the property $y$
  \emph{at least to the degree} to which $y$ is in $A$ and 
  $y$ \emph{is prescribed} by $C$. Analogously for $C \otimes B \subseteq M$.
  In case of $I \models^{\mathbf{S}_5} A \I B$, condition 
  $A \ominus C \subseteq M$ reads:
  For each activity $x$ and each property $y$, $x$ has the property $y$
  \emph{at most to the degree} to which $y$ is in $A$ and
  $y$ \emph{is not prescribed} by $C$. It can be shown that if $\mathbf{L}$
  is the \L ukasiewicz structure of degrees, both types of parameterizations
  are mutually reducible but not for general $\mathbf{L}$.
  An $\mathbf{S}_5$-base determined by Theorem~\ref{th:pseudos}
  has $13$ FAIs and ${}^{\dn\up}$ has $21$ fixed points,
  see Fig.~\ref{fig:space}\,(lower middle).

  (f)
  Finally, we consider $\mathbf{S}_6$ which is generated by $S_4 \cup S_5$,
  i.e., $\mathbf{S}_6$ is a parameterization which combines the constraints
  on the semantics of FAIs from (d) and (e).
  Note that in this case, the universe of $S_6$ is not the union of
  $S_4$ and $S_5$ because the union is not closed under compositions. One may
  check that $|S_6| = 8$. An $\mathbf{S}_6$-complete set given by
  Theorem~\ref{th:pseudos} can be reduced to an $\mathbf{S}_6$-base
  consisting of the following formulas (with superfluous attributes removed):
  \begin{align*}
    \Sigma_I = \{&0_Y \,{\I}\, \{{}^{0.25\!}/a,{}^{0.25\!}/e\},
    \{{}^{0.75\!}/l\} \,{\I}\, \{{}^{0.75\!}/e\},
    \{l,{}^{0.75\!}/a\} \,{\I}\, \{{}^{0.5\!}/k,e\},
    \{{}^{0.75\!}/k,{}^{0.5\!}/e\} \,{\I}\, \{k\}\}.
  \end{align*}
  In this case, ${}^{\dn\up}$ has $65$ fixed points,
  cf. Fig.~\ref{fig:space}\,(lower right). We conclude the examples by showing
  that $\{{}^{0.75\!}/a,e\} \I \{{}^{0.5\!}/k,l,a\}$ is semantically entailed
  by $\Sigma_I$ under $\mathbf{S}_6$. According to Theorem~\ref{th:compl},
  it suffices to show that $\Sigma_I \proves^{\mathbf{S}_6}
  \{{}^{0.75\!}/a,e\} \I \{{}^{0.5\!}/k,l,a\}$ which is indeed the case:
  \def\ProofFmt#1#2{\makebox[14em][l]{#1}~#2}
  \begin{enumerate}\parskip=-5pt
  \item
    \ProofFmt{$\{l,{}^{0.75\!}/a\} \,{\I}\, \{{}^{0.5\!}/k,e\}$}{%
      formula in $\Sigma_I$,}
  \item
    \ProofFmt{${\{e,{}^{0.75\!}/k\} \,{\I}\, \{l,{}^{0.5\!}/a\}}$}{%
      \eqref{r:F} for $\mul[\rotat]$ applied to 1,}
  \item
    \ProofFmt{$\{e\} \,{\I}\, \{l\}$}{%
      \eqref{r:F} for $\mul[\ominus C]$
      with $C = \{k,{}^{0.5\!}/a,{}^{0.5\!}/e\}$ applied to 2,}
  \item
    \ProofFmt{$\{{}^{0.75\!}/a,e\} \,{\I}\, \{l,{}^{0.75\!}/a,e\}$}{%
      \eqref{r:Cut} applied to 3 and axiom
      $\{l,{}^{0.75\!}/a,e\} \,{\I}\, \{l,{}^{0.75\!}/a,e\}$,}
  \item
    \ProofFmt{$\{l,{}^{0.75\!}/a\} \,{\I}\, \{{}^{0.5\!}/k,e\}$}{%
      formula in $\Sigma_I$,}
  \item
    \ProofFmt{$\{l,{}^{0.75\!}/a,e\} \,{\I}\, \{{}^{0.5\!}/k,l,{}^{0.75\!}/a,e\}$}{%
      \eqref{r:Cut} applied to 5 and $\{{}^{0.5\!}/k,l,{}^{0.75\!}/a,e\} \,{\I}\, \{{}^{0.5\!}/k,l,{}^{0.75\!}/a,e\}$,}
  \item
    \ProofFmt{$\{{}^{0.75\!}/a,e\} \,{\I}\, \{{}^{0.5\!}/k,l,{}^{0.75\!}/a,e\}$}{%
      \eqref{r:Cut} applied to 4 and 6,}
  \item
    \ProofFmt{$\{{}^{0.75\!}/k,{}^{0.5\!}/e\} \,{\I}\, \{k\}$}{%
      formula in $\Sigma_I$,}
  \item
    \ProofFmt{$\{{}^{0.5\!}/l,{}^{0.75\!}/a\} \,{\I}\, \{a\}$}{%
      \eqref{r:F} for $\mul[\rotat]$ applied to 8,}
  \item
    \ProofFmt{$\{{}^{0.5\!}/k,l,{}^{0.75\!}/a,e\} \,{\I}\, \{{}^{0.5\!}/k,l,a,e\}$}{%
      \eqref{r:Cut} applied to 9 and $\{{}^{0.5\!}/k,l,a,e\} \,{\I}\, \{{}^{0.5\!}/k,l,a,e\}$,}
  \item
    \ProofFmt{$\{{}^{0.75\!}/a,e\} \,{\I}\, \{{}^{0.5\!}/k,l,a,e\}$}{%
      \eqref{r:Cut} applied to 7 and 10,}
  \item
    \ProofFmt{$\{{}^{0.75\!}/a,e\} \,{\I}\, \{{}^{0.5\!}/k,l,a\}$}{%
      \eqref{r:Cut} applied to 11 and $\{{}^{0.5\!}/k,l,a,e\} \I \{{}^{0.5\!}/k,l,a\}$.}
  \end{enumerate}
\end{example}

\section{Conclusion}\label{sec:concl}
General family of if-then rules parameterized by systems of \monoton e Galois
connections has been investigated. Bivalent and graded notions of semantic
entailment of if-then rules have been characterized in terms of least models
and complete axiomatization has been provided.
Non-redundant bases of if-then rules derived from object-attribute data with
fuzzy attributes have been characterized using operators on fuzzy sets induced
by data. Several examples of parameterizations have been shown. Future research
will focus on applications of the parameterizations in formal concept analysis,
metods for data dimensionality reduction, and related areas where the earlier
parameterizations by hedges have been successfully applied.

\subsection*{Acknowledgment}
Supported by grant no. \verb|P202/14-11585S| of the Czech Science Foundation.


\footnotesize\openup=-4pt
\bibliographystyle{amsplain}
\bibliography{pasefai}

\end{document}